\newcommand{\specialcell}[2][c]{%
  \begin{tabular}[#1]{@{}c@{}}#2\end{tabular}}
\renewcommand{\algocf@captiontext}[2]{#1\algocf@typo. \AlCapFnt{}#2} 
\def\@algocf@capt@plain{top}
\renewcommand{\algocf@makecaption}[2]{%
  \addtolength{\hsize}{\algomargin}%
  {
  \sbox\@tempboxa{\algocf@captiontext{#1}{#2}}%
  \ifdim\wd\@tempboxa >\hsize
    \hskip .5\algomargin%
    \parbox[t]{\hsize}{\algocf@captiontext{#1}{#2}}
  \else%
    \global\@minipagefalse%
    \hbox to\hsize{\box\@tempboxa}
  \fi%
  \addtolength{\hsize}{-\algomargin}} %
}
\def\T{{ \mathrm{\scriptscriptstyle T} }}
\def\v{{\varepsilon}}
\newcommand{\cov}{\mathrm{cov}}
\newcommand{\bO}{\Omega}
\newcommand{\tr}{\mbox{tr}}
\newcommand{\FTwo}{$\text{F}_2$\xspace}
\def\argmin{\mathop{\rm argmin}}
\def\argmax{\mathop{\rm argmax}}
\def\mtr{\mathop{\rm tr}}
\def\mone{\mathop{\rm one}}
\def\mEM{\mathop{\rm EM}}
\newtheorem{prop}{Proposition}
\newtheorem{theorem}{Theorem}[section]
\newtheorem{corollary}[theorem]{Corollary}
\newtheorem{lemma}[theorem]{Lemma}
\theoremstyle{remark}
\newtheorem{property}{Property}
\def\tabnotefont{\reset@font\fontsize{9}{11}\selectfont\leftskip\tabledim\rightskip\tabledim}%
\newcommand{\edit}[1]{#1}
\newcommand{\editmath}{}
\DeclarePairedDelimiter{\ceil}{\lceil}{\rceil}
\begin{document}


\title{Joint Estimation of Multiple Dependent Gaussian Graphical Models with Applications to Mouse Genomics}

\author[1]{Yuying Xie}
\author[2]{Yufeng Liu\thanks{
    Correspondence to:
      Yufeng Liu, Department of Statistics and Operations Research, CB3260, University of North Carolina, Chapel Hill, NC
27599. E-mail: yfliu@email.unc.edu.}}
\author[3]{William Valdar}
\affil[1]{Department of Computational Mathematics, Science and Engineering, Michigan State University, MI, USA}
\affil[2]{Department of Statistics and Operations Research, University of North Carolina at Chapel Hill, NC, USA}
\affil[3]{Department of Genetics, University of North Carolina at Chapel Hill, NC, USA}
\date{}

\maketitle

\begin{abstract}
   \noindent Gaussian graphical models are widely used to represent conditional dependence among random variables. In this paper, we propose a novel estimator for data arising from a group of Gaussian graphical models that are themselves dependent. A motivating example is that of modeling gene expression collected on multiple tissues from the same individual: \edit{here the multivariate outcome is affected by dependencies acting not only at the level of the specific tissues, but also at the level of the whole body; existing methods that assume independence among graphs are not applicable in this case.} To estimate multiple dependent graphs, we decompose the problem into two graphical layers: the systemic layer, which affects all outcomes and thereby induces cross-graph dependence, and the category-specific layer, which represents graph-specific variation. We propose a graphical EM technique that estimates both layers jointly, establish estimation consistency and selection sparsistency of the proposed estimator, and confirm by simulation that the EM method is superior to a simple one-step method. We apply our technique to mouse genomics data and obtain biologically plausible results.
\end{abstract}

\section*{\normalsize \centering Keywords}
  EM algorithm, Gaussian graphical model, mouse genomics, shrinkage, sparsity, variable selection

\section{Introduction}
\setcounter{page}{1}
\label{s:intro}

Gaussian graphical models are widely used to represent conditional dependencies among sets of normally distributed outcome variables that are observed together. For example, observed, and potentially dense, correlations between measurements of expression for multiple genes, stock market prices of different asset classes, or blood flow for multiple voxels in functional magnetic resonance imaging, i.e., fMRI-measured brain activity, can often be more parsimoniously explained by an underlying graph whose structure may be relatively sparse. As methods for estimating these underlying graphs have matured, a number of elaborations to basic Gaussian graphical models have been proposed, including those that seek either to model the sampling distribution of the data more closely, or to model prior expectations of the analyst about structural similarities among graphs representing related data sets \citep{Guo2011,Danaher2013, leeliu15}. In this paper, we propose an elaboration that seeks to model an additional feature of the sampling distribution increasingly encountered in biomedical data, whereby correlations among the outcome variables are considered to be the byproduct of underlying conditional dependencies acting at different levels. For illustration, consider gene expression data obtained from multiple tissues, such as liver, kidney, and brain, collected on each individual. In this setting, observed correlations between expressed genes may be caused by dependence structures not only within a specific tissue but also across tissues at the level of the whole body. We describe these distinct graphical strata respectively as the category-specific and systemic layers, and \edit{model their respective outcomes as latent variables}.

The conditional dependence relationships among $p$ outcome variables, $Y = (Y_1, \ldots, Y_p)$, can be represented by a graph $\mathcal{G} = (\Gamma, E)$, where each variable is a node in the set $\Gamma$ and conditional dependencies are represented by the edges in the set $E$. If the joint distribution of the outcome variables is multivariate Gaussian, $Y \sim \mathcal{N}(0, \Sigma)$, then conditional dependencies are reflected in the non-zero entries of the precision matrix $\Omega = \Sigma^{-1}$. Specifically, two variables $Y_i$ and $Y_j$ are conditionally independent given the other variables if and only if the $(i,j)$th entry of $\Omega$ is zero. Inferring the dependence structure of such a Gaussian graphical model is thus the same as estimating which elements of its precision matrix are non-zero.

When the underlying graph is sparse, as is often assumed, the maximum likelihood estimator is dominated \edit{in terms of false positive rate} by shrinkage estimators. The maximum likelihood estimate of $\Omega$ typically implies a graph that is fully connected, which is unhelpful for estimating graph topology. To impose sparsity, and thereby provide a more informative inference about network structure, a number of methods have been introduced that estimate $\Omega$ under $\ell_1$ regularization. \citet{Meinshausen2006} proposed to iteratively determine the edges of each node in $\mathcal{G}$ by fitting an $\ell_1$ penalized regression model to the corresponding variable $Y_j$ using the remaining variables $Y_{-j}$ as predictors, an approach which can be viewed as optimizing a pseudo-likelihood \citep{Ambroise2009,Peng2009}. More recently, numerous papers have proposed estimation using sparse penalized maximum likelihood \citep[]{Yuan2007,Banerjee2008,dAspremont2008,Rothman2008,Ravikumar2011}. Efficient implementations include the graphical lasso algorithm \citep{Friedman2008} and the quadratic inverse covariance algorithm \citep{Hsieh2011}. The convergence rate and selection consistency of such penalized estimation schemes have also been investigated in theoretical studies \citep{Rothman2008,Lam2009}.

Although a single graph provides a useful representation of an underlying dependence structure, several extensions have been proposed. In the context where the precision matrix, and hence the graph, is dynamic over time, \citet{Zhou2010} proposed a weighted method to estimate the graph's temporal evolution. Another extension is to simultaneously estimate multiple graphs that may share some common structure. For example, when inferring how brain regions interact using fMRI data, each subject's brain corresponds to a different graph, but we would nonetheless expect some common interaction patterns across subjects, as well as patterns specific to an individual. In such cases, joint estimation of multiple related graphs can be more efficient than estimating graphs separately.  For joint estimation of Gaussian graphs, \citet{Varoquaux2010} and  \citet{Honorio2010} proposed methods using group lasso 
 and  multitask lasso, respectively. Both assume that all the precision matrices have the same pattern of zeros. To provide greater flexibility, \citet{Guo2011} proposed a joint penalized method using a hierarchical penalty, and derived the convergence rate and sparsistency properties for the resulting estimators. In the same setting, \citet{Danaher2013} extended the graphical lasso \citep{Friedman2008} to estimate multiple graphs from independent data sets using penalties based on the generalized fused lasso or, alternatively, the sparse group lasso.

\begin{figure}[bh]
	\begin{center}
		\includegraphics[width = 5.5 in]{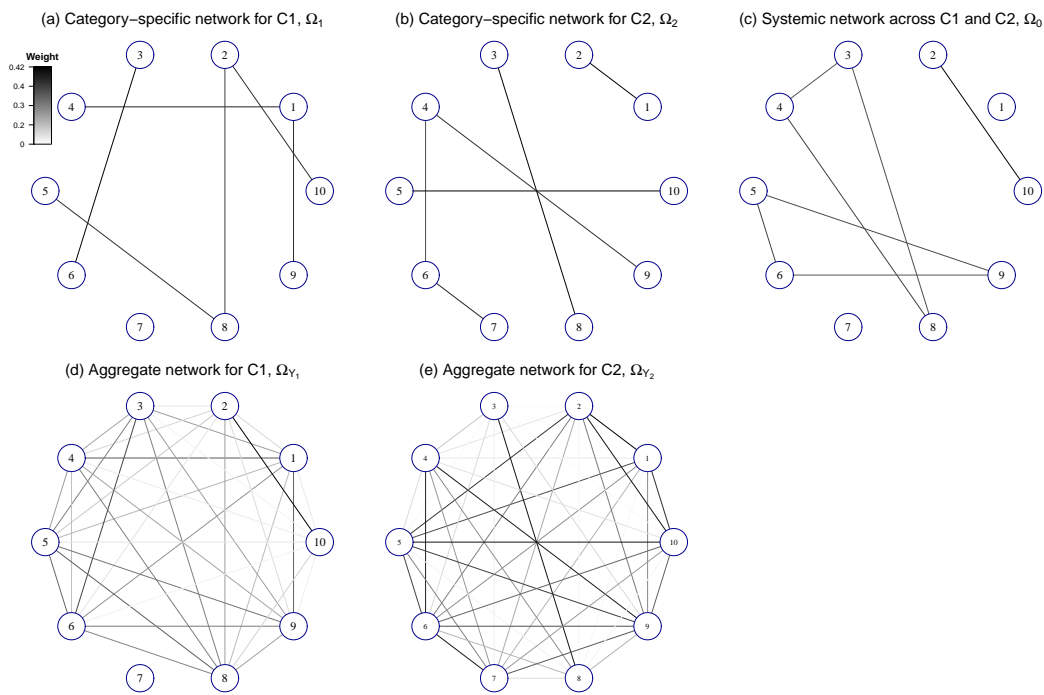}
	\caption{Illustration of systemic and category-specific networks  using a toy example with two categories ($C1$ and $C2$) and $p = 10$ variables. (a) Category-specific network for $C1$. (b) Category-specific network for $C2$. (c) Systemic network affecting variables in both $C1$ and $C2$. (d) Aggregate network, $\bO_{Y_1} = (\bO_{1}^{-1} + \bO_{0}^{-1})^{-1}$, for category $C1$. (e) Aggregate network, $\bO_{Y_2} = (\bO_{2}^{-1} + \bO_{0}^{-1})^{-1}$, for $C2$.}
	\label{fig:toy}
	\end{center}
\end{figure}

The above methods for estimating multiple Gaussian graphs focus on the settings in which data collected from different categories are stochastically independent. In some applications, however, data from different categories are more naturally considered as dependent. In a study considered here, gene expression data have been collected on multiple tissues in multiple mice. For each mouse we have expression measurements for $p$ genes in each of $K$ different tissues, that is, $K$ different categories, represented by the $p$-dimensional vectors $Y_k$ $(k = 1, \ldots, K)$. In this setting, the gene expression profiles of different mice may have arisen from the same network structure, but they are otherwise stochastically independent; in contrast, the gene expression profiles of different tissues within the same mouse are stochastically dependent. For such data, increasingly common in biomedical research, the above methods are not applicable.

To explore the gene network structure across different tissues, and to characterize the dependence among tissues, we consider a decomposition of the observed gene expression $Y_k$ into two latent vectors. In our model, we define 
\begin{align}\label{eq:yk}
	Y_k = Z + X_k,
\end{align}
where $Z, X_1, \ldots, X_K$ are mutually independent. Because $\text{cov}(Y_k, Y_l) = \text{var}(Z)$ for any $k \neq l $, $Z$ represents dependence across different tissues. Letting $\Omega_j$ denote the precision matrix of $X_j$ for tissue $j$, and defining $\text{var}(Z) = \Omega_0^{-1}$, we aim to estimate $\Omega_k$ $(k = 0, \ldots, K)$ from the observed outcome data on $\{Y_1, \ldots, Y_K \}$. To accomplish this joint estimation of multiple dependent networks, we propose a one-step method and an EM method.

In the above decomposition, $Z$ can be viewed as representing systemic variation in gene expression, that is, variation manifesting simultaneously in all measured tissues of the same mouse, whereas $X_k$ represents category-specific variation, that is, variation unique to tissue $k$. An important property of this two-layer model is that sparsity in the systemic and category-specific networks can produce networks for the outcome variable $Y$ that is highly connected. Conversely, highly connected graphs for the outcome $Y$ can easily arise from relatively sparse underlying dependencies acting at two levels. This phenomenon is illustrated in Fig. \ref{fig:toy}, which depicts category-specific networks $\Omega_1$ and $\Omega_2$ for two categories $C1$ and $C2$, which might correspond to, for example, liver and brain tissue-types, and a systemic network $\Omega_0$, which reflects relationships affecting all tissues at once, for example, gene interactions that are responsive to hormone levels or other globally-acting processes. Although all three underlying networks, $\Omega_0$, $\Omega_1$ and $\Omega_2$, are sparse, the precision matrix of observed variables within each tissue, that is, the aggregate network $\bO_{Y_k} = (\bO_0^{-1} + \bO_k^{-1})^{-1}$ following  \eqref{eq:yk} is  highly connected. Existing methods aiming to estimate a single sparse network layer are therefore ill-suited to this problem because they impose sparsity on the aggregate network rather than on the two simpler layers that generate it. 

\section{Methodology}
\label{s:method}

\subsection{Problem formulation}

The following notation is used throughout the paper. We denote the true precision and covariance matrices by $\bO^{*}$ and $\Sigma^{*}$. For any matrix $W = (\omega_{ij})$, we denote the determinant by $\det( W )$, the trace by $\tr(W)$ and  the off-diagonal entries of $W$ by $W^-$. We further denote the $j$th eigenvalue of $W$ by $\phi_j(W)$, and the minimum and maximum eigenvalues of $W$ by $\phi_{\text{min}}(W)$ and $\phi_{\text{max}}(W)$. The Frobenius norm, $\| W \|_F$, is defined as $ \sum_{i,j} \omega_{ij}^2$; the operator/spectral norm, $\| W \|^2$, is defined as $\phi_{\text{max}}(W W^{\T})$; the infinity norm, $\| W \|_\infty$, is defined as $\text{max} |w_{ij} | $; and the element-wise $L_1$ norm, $|W|_1$, is defined as $\sum_{i,j} |\omega_{ij}|$.

In the problem, we address, measurements are available on the same $p$ outcome variables in each of $K$ distinct categories on each of $n$ individuals. Some dependence is anticipated among outcomes both at the level of the category and at the level of the individual: dependence at the level of the category is described as category-specific; dependence at the level of the individual is described as systemic, that is, modeled as if affecting outcomes in all categories of the same individual simultaneously. Our primary example is the measurement of gene expression giving rise to transcript abundance readings on $p$ genes on $K$ tissues, such as liver, kidney and brain, in $n$ laboratory mice.

{Letting $Y_{k,i}$ be the $i$th data vector for the $k$th category, we model}
\begin{equation} \label{eq:str}
	Y_{k,i} = X_{k,i} + Z_i
	\quad  (i = 1,\ldots,n; \ k = 1, \ldots, K),
\end{equation}
where $Z_i$ is the random vector corresponding to the shared systemic random effect, and $X_{k,i}$ is the random vector corresponding to the $k$th category. We assume that $X_{k,i}$ and $Z_i$  $(i = 1, \ldots, n; \ k = 1, \ldots, K)$ are independent and identically distributed $p$-dimensional random vectors with mean $0$, and covariance matrices $\Sigma_{k}$ and $\Sigma_{0}$ respectively. We further assume that $X_{k,i}$, and $Z_i$ are independent of each other and each follows a multivariate Gaussian distribution.

For the $i$th sample in the $k$th category, we observe the $p$-dimensional realization of $ Y_{k, i}$, vector $y_{k,i} = (y_{k, i, 1}, \ldots, y_{k,i, p})^\T$.  Without loss of generality, we assume these observations are centered, i.e., $\sum_{i=1}^n y_{k,i,j}=0 \; (j = 1, \ldots, p; \  k = 1, \ldots, K$). Let $y_{\cdot,i}$ be the combined data vector with $y_{\cdot,i} = (y_{1,i}^\T, \ldots,y_{K,i}^\T )^\T$, such that $y_{\cdot,i}$ follows a Gaussian distribution with covariance $\Sigma_Y = \{_d \Sigma_k \} + J \otimes \Sigma_{0} = \{ \Sigma_{Y(l,m)}\}_{1 \leq l,m \leq K}$, where $\{_d \cdot \}$ is a block diagonal matrix, $J$ is a square matrix with all $1'{\text{s}}$ as the entries, $\otimes $ is the Kronecker product, and $\Sigma_{Y(l,m)}$ is the covariance matrix between $Y_l$ and $Y_m$. We denote the $n$ by $Kp$ dimensional data matrix by $y = (y_{\cdot,1} , \ldots, y_{\cdot, n})^\T$, and let  $\Omega_{k} = (\Sigma_{k})^{-1} = (\omega_{k(i,j)} )$, and $\Omega_Y = (\Sigma_Y)^{-1}$. Our goal is to estimate $\bO_k$  $(k = 0, \ldots, K)$. Although $X_{k}$ and $Z$ are latent variables, we can show that $\bO_k$ is identifiable under the model setup in \eqref{eq:str} with $K \geq 2$. More details can be found in the Supplementary Material. For simplicity, we denote $\bO$ and $\Sigma$ as $\{ \bO_k \}_{k = 0}^K$ and $\{ \Sigma_k \}_{k = 0}^K$ respectively in the following derivation.

The log-likelihood of the data can be written as
\begin{equation}\label{eq:SigmaY}
	\mathcal{L} (\Omega ; y) =
	-\frac{npK}{2}\log(2\pi)+\frac{n}{2} \big\{ \log \det(\Omega_Y)
	- \tr(\hat{\Sigma}_Y \Omega_Y) \big\} \, ,
\end{equation}
where
\begin{align*}
	\hat{\Sigma}_Y = n^{-1} \sum_{i=1}^n y_{\cdot,i} y^\T_{\cdot,i} 
	= \{ \hat{\Sigma}_{Y(l,m)}  \}_{1 \leq l,m \leq K}
\end{align*}
 is the $Kp \times Kp$ sample covariance matrix. In our setting,
\begin{align*}
	\mathcal{L}(\Omega;y) 
	\propto & \sum_{k=1}^K \big\{ \log \det(\Omega_{k})
	- \tr(\hat{\Sigma}_{Y(k,k)}\Omega_k) \big\}
	+ \log \det(\Omega_{0})  \\
	& - \log \det (A) + \sum_{l,m=1}^K 
	\tr \big( \Omega_l \hat{\Sigma}_{Y(l,m)} \Omega_m  A^{-1} \big) \,, 
\end{align*}
where $ A= \sum_{k = 0}^{K}  \Omega_{k} $; see the Supplementary Material for details.

A natural way to obtain a sparse estimate of $\bO$ is to maximize the penalized log-likelihood
\begin{align}\label{eq:Penlikelihood}\hspace{-0.1in}
	\hat{\bO} = \argmax_{\bO \succ 0} \mathcal{P}(\bO ; y)
	= \argmax_{\bO \succ 0} \mathcal{L}(\Omega ; y ) 
	- \lambda_1 \sum_{k = 1}^{K} |\bO_k^- |_1 - \lambda_2 | \bO_0^- |_1.
\end{align}
Because the likelihood is complicated, direct estimation of the precision matrices in \eqref{eq:Penlikelihood} is difficult. Estimation can proceed directly, however, given the values $z$ of the latent outcome vector $Z$. Therefore, we first estimate $\Sigma_0$ and then the other parameters. In Sections \ref{ss:onestep} and \ref{ss:EM}, we consider estimation of these multiple dependent graphs using a one-step procedure and a method based on the EM algorithm.

\subsection{One-step method}\label{ss:onestep}

The idea behind our one-step method is to generate a good initial estimate for $\Sigma$ and then obtain estimates for $\Omega$ by one-step optimization.
Because $\text{var}(Z) = \text{cov}(Y_l, Y_m)$, for any $ m \neq l$, it is natural to use the covariance matrix $\Sigma_{Y(l,m)}$ between all pairs of $Y_l$ and $Y_m$ to estimate $\Sigma_0$ by
\begin{align}\label{eq:Sigma0orig}
	\hat{\Sigma}_{0} = \frac{1}{K(K-1)}\sum_{m \neq l} \hat{\Sigma}_{Y(m,l)}
	=\frac{1}{K(K-1)n}\sum_{m \neq l}\sum_{i = 1}^n \big( y_{m,i} y_{l,i}^\T \big).
\end{align}
Using the fact that $\text{var}(X_k) = \text{var}(Y_k) - \text{var}(Z)$, we can then obtain an estimate for $\Sigma_{k}$ as
\begin{align}\label{eq:Sigmahatk}
	\hat{\Sigma}_{k}
	= \hat{\Sigma}_{Y(k,k)} - \hat{\Sigma}_{0}
	= \frac{1}{n} \sum_{i = 1}^n \big( y_{k,i}y_{k,i}^\T \big)
	- \hat{\Sigma}_{0}.
\end{align}
Although $\hat{\Sigma}_{k}$ is symmetric, it may not be positive semidefinite, but this can be ensured using projection \citep{Xu2012}. { For any symmetric matrix  $\hat{\Sigma}_{k}$ $(k = 0, \ldots, K)$, the positive-semidefinite projection is 
\begin{align} \label{eq:Projection}
	\hat{\Sigma}_{k}^{\prime} = \argmin_{\Sigma \succeq 0 }
	 \| \Sigma - \hat{\Sigma}_{k} \|_\infty.
\end{align} }
Lastly, we estimate
 $ \Omega $ by minimizing $K + 1$ separate functions, 
\begin{align} \label{eq:onestep}
	\mathcal{W}_{k}(\Omega_k) 
	= \tr(\hat{\Sigma}_{k}^{\prime} \Omega_{k}) - \log \det(\Omega_{k})
	+ \lambda |\bO_k^- |_1 \; \quad (k = 0, \ldots, K),
\end{align}
where $\lambda = \lambda_2$ when $k = 0$, and  $\lambda = \lambda_1$ otherwise. The minimization of \eqref{eq:onestep} can be solved efficiently by algorithms such as the graphical lasso \citep{Friedman2008} or by the quadratic inverse covariance algorithm  \citep{Hsieh2011}. We name this approach as the one-step method and later compare its performance with the EM method defined next.

\subsection{Graphical EM method}\label{ss:EM}

The one-step method provides an estimate of $\bO$. In the spirit of the classic EM algorithm \citep{Dempster1977}, this estimate of $ \bO$ can be used to obtain a better estimate of $\Sigma$, which in turn can be used to obtain a better estimate of $\bO$. This procedure is iterated until the estimates of $\bO$ converge, leading to a graphical EM algorithm, described below.  

{First, we rewrite the sampling model as}
\begin{align*} 
	\begin{pmatrix}
 		Z  \\
 		Y_1 - Z\\
 		\vdots  \\
		Y_K - Z
	\end{pmatrix}  & \sim \mathcal{N}  \left\{ 
		\begin{pmatrix}
 		0  \\
 		0 \\
 		\vdots  \\
		0 
	\end{pmatrix}, 	
  	\begin{pmatrix}
 		\Sigma_0            & 0 &
 		\ldots 				& 0 \\
		0   & \; \Sigma_1  & 
		\ldots				& 0 \\
 		\vdots 				& \vdots & 
 		\vdots				& \vdots \\
		0   & \;  0 		&
		\ldots				& \Sigma_K 
	\end{pmatrix}
	\right\},
\end{align*}
and the {log-likelihood given $Y=y$ and $Z = z = (z_1, \ldots, z_n)^\T$ as}
\begin{align}\label{eq:full_log_likelihood}
	\mathcal{L}(\bO; y, z )
	&\propto \log \det(\bO_0) - \tr\big( \bO_0 
	\sum_{i = 1}^n z_i z_i^\T /n  \big) \notag\\
	& + \sum_{k =1}^K \Big[ \log \det(\bO_k) 
	- \tr \big\{ \bO_k \sum_{i = 1}^n (y_{k, i} - z_i)(y_{k, i} - z_i)^\T /n  \big\} \Big].
\end{align}
Expression \eqref{eq:full_log_likelihood} cannot be calculated directly because $z_i$ and $z_i z_i^\T$ are unobserved. However, we can replace them with their expected values conditional on $\bO$ and $y$, and develop the EM algorithm with the following steps: 
\begin{itemize}
	\item[E step] Update the expectation of the log-likelihood conditional on $\bO$ using 		
	\begin{align}\hspace{-0.3in} \editmath
	\mathcal{Q}( \bO ; \bO^{(t)}) = & \editmath E_{Z|\bO^{(t)}}\{ \mathcal{L}(\bO; y, z ) 
	\}  \notag \\
	\editmath \propto & \editmath \log \det( \bO_{0}) 
            - \tr\Big\{ \bO_{0} \,
            E_{Z|\bO^{(t)}} \Big( \sum_{i=1}^n z_i z_i^\T/n \Big)
             \Big\} + \sum_{k=1}^K \log \det(\bO_{k}) \notag \\
            & \editmath - \sum_{k=1}^K \tr\Big[ \bO_{k} 
              E_{Z|\bO^{(t)}} \Big\{ \sum_{i=1}^n (y_{k,i}-z_i)(y_{k,i}-z_i )^\T / n  
              \Big\} \Big] \notag \\
			\editmath = & \editmath \sum_{k = 0}^{K}\Big\{ \log  \det(\bO_{k}) 
			-\tr \big(\bO_{k} \dot{\Sigma}_{k}^{(t)} \big)
			\Big\}. \notag 
    \end{align} 
	\item[M step] {Update $\bO$ that maximizes}
		\begin{align} \label{eq:Mstep}
			\editmath \bO^{(t + 1)} = \argmin_{\bO \succ 0} 
			- \mathcal{Q}(\bO;\bO^{(t)}) 
			+ \lambda_1 \sum_{k=1}^K | \bO_k^- |_1 
			+ \lambda_2 |\bO_0^- |_1,   \quad
    \end{align}
\end{itemize}
\edit{where $\bO^{(t)}$ denotes the estimates from the $t$th iteration, $E_{Z|\bO^{(t)}}(\cdot)$  denotes the conditional expectation with respect to $Z$ given $\bO^{(t)}$, and}
\begin{subequations}\label{eq:all1}
  \vspace{5pt}
	\begin{align}\editmath{}
		\dot{\Sigma}_{k}^{(t)} 
		= & \editmath{} E_{Z|\bO^{(t)}} \Big\{ \sum_{i=1}^n (y_{k,i}-z_i)(y_{k,i}-z_i)^\T / n  \Big\}  \nonumber \\
		\editmath{} 
          = & \editmath{} \ddot{\Sigma}_{Y(k,k)} 
		- \sum_{l=1}^K \Big( \ddot{\Sigma}_{Y(k,l)}\bO_{l}^{(t)} \Big)
		(A^{(t)})^{-1} - (A^{(t)})^{-1}
		\sum_{l=1}^K \Big( \bO_{l}^{(t)}\ddot{\Sigma}_{Y(l,k)} \Big) \notag \\
		&\editmath{}  + (A^{(t)})^{-1} \sum_{l,k=1}^K \Big(\bO_{l}^{(t)} 
		\ddot{\Sigma}_{Y(l,k)} \bO_{k}^{(t)} \Big)
		(A^{(t)})^{-1} + (A^{(t)})^{-1} \quad   (k =1, \ldots, K), \label{eq:sigmak}\\
	\editmath{}	\dot{\Sigma}_0^{(t)} =
		& \editmath{} \sum_{i=1}^n E_{Z|\bO^{(t)}} \big( z_i z_i^\T/n \big)
		=\editmath{} (A^{(t)})^{-1} +(A^{(t)})^{-1} \sum_{l,k=1}^K
		\big( \bO_{l}^{(t)} \ddot{\Sigma}_{Y(l,k)} \bO_{k}^{(t)} \big)
		(A^{(t)})^{-1} \label{eq:sigma0},
  \end{align}
\end{subequations}
where $\ddot{\Sigma}_Y= \hat{\Sigma}_Y$ is an estimator for $\Sigma_Y^*$, the true covariance matrix of $Y$.
Therefore, problem \eqref{eq:Mstep} is decomposed into $K + 1$ separate optimization problems:
\begin{align} \label{eq:glasso}
	\bO_{k}^{(t+1)} =  \argmin_{\bO_k \succ 0}
	\Big\{ \tr \big(\bO_{k} \dot{\Sigma}_{k}^{(t)} \big) - \log \det(\bO_{k})
	+ \lambda | \bO_k^- |_1 \Big\} \quad (k = 0, \ldots, K),
\end{align}
where $\lambda = \lambda_2$ when $k = 0$, and $\lambda = \lambda_1$ otherwise. We then can use the graphical lasso \citep{Friedman2008} to solve \eqref{eq:glasso}.

\newpage
We summarize the proposed EM method in the following steps:
\smallskip
\noindent
\begin{algorithm}
\caption{The graphical EM algorithm} \label{al1}
\vspace*{-20pt}
\begin{tabbing}
   \enspace (Initial value).
		Initialize $\hat{\Sigma}_0^\prime $ and  $\hat{\Sigma}_k^\prime $ $(k = 1,\ldots,K)$ using \eqref{eq:SigmaY}, \eqref{eq:Sigma0orig}--\eqref{eq:Projection}.\\
   \qquad (Updating rule: the M step).
		Update $\bO_{k}$ $(k = 0,\ldots,K)$  by \eqref{eq:glasso} using the graphical lasso.\\
   \qquad (Updating rule: the E step).
		Update  $\dot{\Sigma}_{k}$ using \eqref{eq:sigmak} and \eqref{eq:sigma0}. \\\
  \qquad Iterate the M and E steps until convergence.\\
\enspace Output $\hat{\bO}_k$  $(k = 0, \ldots, K)$.
\end{tabbing}
\end{algorithm}
\vspace*{-20pt}

\smallskip
The next proposition demonstrates convergence of our graphical EM algorithm.
\begin{prop}
\label{prop:pro1}
With any given $n$, $p$, $\lambda_1 >0$, and $\lambda_2 > 0$, the graphical EM algorithm solving \eqref{eq:Penlikelihood}
has the following properties:
   \begin{property}
    the penalized log-likelihood in \eqref{eq:Penlikelihood} is bounded above;
    \property for each iteration, the penalized log-likelihood is non-decreasing;
    \property for a prespecified threshold $\delta$, after a finite number of steps, the objective function in \eqref{eq:Penlikelihood} converges in the sense that  
$
	\big| \mathcal{P}(\bO^{(t+1)}; y) - \mathcal{P}(\bO^{(t)}; y) \big| < \delta.
$
   \end{property}
\end{prop}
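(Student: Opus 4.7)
My plan is to handle the three properties in the order stated. The substantive step is Property 1 (boundedness above); Properties 2 and 3 then follow from the classical penalized-EM argument and monotone convergence, respectively.

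For Property 1, I would show that $\mathcal{L}(\Omega;y)$, and hence $\mathcal{P}(\Omega;y)$, is bounded above on the open PD cone. Up to an additive constant, $\mathcal{L}(\Omega;y)=\tfrac{n}{2}\{\log\det(\Omega_Y)-\tr(\hat\Sigma_Y\Omega_Y)\}$. Using Hadamard's inequality $\log\det(\Omega_Y)\leq\sum_j\log(\Omega_Y)_{jj}$, the diagonal contribution $\sum_j\{\log(\Omega_Y)_{jj}-(\hat\Sigma_Y)_{jj}(\Omega_Y)_{jj}\}$ has the form $\sum_j(c_j\log u_j-d_j u_j)$, which is bounded above whenever $d_j=(\hat\Sigma_Y)_{jj}>0$ (true almost surely for continuous data, and each $u_j\mapsto c_j\log u_j-d_j u_j$ is maximized at $u_j=c_j/d_j$). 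The off-diagonal contribution of $\tr(\hat\Sigma_Y\Omega_Y)$, combined with the $\ell_1$ penalties on $\Omega_k^-$ and $\Omega_0^-$, then precludes the remaining entries from driving $\mathcal{P}$ to $+\infty$: writing $\Omega_Y$ in terms of $\Omega_0,\ldots,\Omega_K$ via the block identity $\Sigma_Y=\{{}_d\Sigma_k\}+J\otimes\Sigma_0$, the positive penalties $\lambda_1,\lambda_2>0$ keep the magnitudes of the off-diagonals in check through the Cauchy--Schwarz bound $|(\Omega_Y)_{jj'}|\leq\sqrt{(\Omega_Y)_{jj}(\Omega_Y)_{j'j'}}$.

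For Property 2, I would apply the classical penalized-EM identity. Write $\mathcal{L}(\Omega;y)=\mathcal{Q}(\Omega;\Omega^{(t)})+H(\Omega;\Omega^{(t)})$ with $H(\Omega;\Omega^{(t)})=-E_{Z\mid y,\Omega^{(t)}}\log p(Z\mid y,\Omega)$, and note that $\mathcal{R}(\Omega)=\lambda_1\sum_k|\Omega_k^-|_1+\lambda_2|\Omega_0^-|_1$ does not involve $z$, so
\begin{equation*}
\mathcal{P}(\Omega^{(t+1)};y)-\mathcal{P}(\Omega^{(t)};y)=\bigl\{[\mathcal{Q}-\mathcal{R}](\Omega^{(t+1)};\Omega^{(t)})-[\mathcal{Q}-\mathcal{R}](\Omega^{(t)};\Omega^{(t)})\bigr\}+\bigl\{H(\Omega^{(t+1)};\Omega^{(t)})-H(\Omega^{(t)};\Omega^{(t)})\bigr\}.
\end{equation*}
The first brace is nonnegative because $\Omega^{(t+1)}$ is the M-step maximizer of $\mathcal{Q}(\Omega;\Omega^{(t)})-\mathcal{R}(\Omega)$, and the second equals the Kullback--Leibler divergence $\mathrm{KL}(p(\cdot\mid y,\Omega^{(t)})\,\|\,p(\cdot\mid y,\Omega^{(t+1)}))\geq 0$ by Gibbs' inequality.

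Property 3 is then immediate: the real-valued sequence $\{\mathcal{P}(\Omega^{(t)};y)\}_{t\geq 1}$ is monotone non-decreasing by Property 2 and bounded above by Property 1, hence Cauchy, so for any prespecified $\delta>0$ there exists a finite $T$ with $|\mathcal{P}(\Omega^{(t+1)};y)-\mathcal{P}(\Omega^{(t)};y)|<\delta$ for all $t\geq T$. The main obstacle is Property 1 in the high-dimensional regime $n<pK$, where $\hat\Sigma_Y$ is singular and one cannot invoke the clean bound $\tr(\hat\Sigma_Y\Omega_Y)\geq\phi_{\text{min}}(\hat\Sigma_Y)\tr(\Omega_Y)$; there one must lean on the $\ell_1$ penalties and the additive block structure of $\Sigma_Y$ to rule out divergence of the eigenvalues of $\Omega_Y$.
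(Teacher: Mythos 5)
Your treatments of Properties 2 and 3 are correct and coincide with the paper's: the same decomposition $\mathcal{L}(\Omega;y)=\mathcal{Q}(\Omega;\Omega^{(t)})-E_{Z\mid y,\Omega^{(t)}}\log p(Z\mid y,\Omega)$, the same appeal to the M-step optimality and Gibbs' inequality, and the same monotone-plus-bounded conclusion (the paper phrases the last step as a pigeonhole count, $\lceil\{d-\mathcal{P}(\Omega^{(0)};y)\}/\delta\rceil$, but that is the same argument).

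Property 1, however, has a genuine gap, and it is exactly at the point you flag but do not resolve. Your Hadamard step controls only the diagonal contribution $\sum_j\{\log(\Omega_Y)_{jj}-(\hat\Sigma_Y)_{jj}(\Omega_Y)_{jj}\}$; the off-diagonal contribution $-\sum_{j\neq j'}(\hat\Sigma_Y)_{jj'}(\Omega_Y)_{jj'}$ can be positive and, via your own Cauchy--Schwarz bound, grows like $\sum_{j\neq j'}|(\hat\Sigma_Y)_{jj'}|\{(\Omega_Y)_{jj}(\Omega_Y)_{j'j'}\}^{1/2}$, which is of the same (linear) order in the diagonal entries as the negative term you are relying on. Whether the negative term wins would require a diagonal-dominance property of $\hat\Sigma_Y$ that does not hold in general, and when $n<Kp$ the quadratic form $\tr(\hat\Sigma_Y\Omega_Y)$ genuinely fails to control directions in the null space of $\hat\Sigma_Y$, so $\log\det(\Omega_Y)-\tr(\hat\Sigma_Y\Omega_Y)$ alone is unbounded above. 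The further difficulty is that the penalty is on the off-diagonals of $\Omega_0,\ldots,\Omega_K$, not of $\Omega_Y$: since $\Omega_Y=\{_d\,\Omega_k\}-\{\Omega_lA^{-1}\Omega_k\}$ with $A=\sum_{k=0}^K\Omega_k$, the unpenalized diagonals of the $\Omega_k$ enter the off-diagonal blocks of $\Omega_Y$ through the products $\Omega_lA^{-1}\Omega_k$, so "the penalties keep the off-diagonals of $\Omega_Y$ in check" does not follow. The paper closes this gap by (i) converting the penalty into the constraint $|\Omega_k^-|_1\leq C_{\lambda_1,\lambda_2}$ via Lagrangian duality, so only the diagonals $D_k$ of each layer can diverge; (ii) writing $\Omega_k=B_k+D_k$ and running a four-case analysis on which $D_k$ diverge, using the Minkowski determinant theorem to get $\log\det(\Omega_l)-\log\det(A)\leq 0$ and the Woodbury identity to reduce the cross terms; and (iii) in the critical case where two or more layers' diagonals diverge simultaneously, showing that the coefficient $M_{(l,m)(j,j)}=\sum_{i}(y_{l,i,j}-y_{m,i,j})^2/n$ is strictly positive with probability one, so that the term $-M_{(s,r)(j,j)}D_{s(j,j)}D_{r(j,j)}/\sum_kD_{k(j,j)}$ decays linearly and dominates the logarithmic growth of $\sum_k\log D_{k(j,j)}$. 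This data-dependent positivity is the ingredient your sketch is missing; without some version of it, boundedness above cannot be established.
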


\subsection{Model selection}\label{s:selection}
We consider two options for selecting the tuning parameter $\lambda = (\lambda_1, \lambda_2)$, minimization of the extended Bayesian information criterion \citep{Chen2008}, and cross-validation. The extended Bayesian information criterion is quick to compute and takes into account both goodness of fit and model complexity. Cross-validation, by contrast, is more computationally demanding and focuses on the predictive power of the model.

In our model, we define the extended Bayesian information criterion
\begin{align*} \editmath{}
  \textsc{bic}_{\gamma}({\lambda})= -2 \mathcal{L} (\{ \hat{{\Omega}}_k\}_{k = 0}^K; y)
    +\nu({\lambda})\log n + 2 \gamma \log \tau\{\nu({\lambda}) \}, 
\end{align*}
\edit{where $\{ \hat{{\Omega}}_k\}_{k = 0}^K$ are the estimates with the tuning parameter set at ${\lambda}$, $\mathcal{L}(\cdot)$ is the log-likelihood function, the degrees of freedom $\nu({\lambda})$ is the sum of the number of non-zero off-diagonal elements on $\{\hat{{\Omega}}_k\}_{k = 0}^K$, and $\tau\{\nu({\lambda}) \}$ is the number of models with size $\nu({\lambda})$, which equals $Kp(p-1)/2$ choose $\nu({\lambda})$.} This criterion is indexed by a parameter $\gamma \in [0, 1]$. The tuning parameter ${\lambda}$ is selected as $\hat{{\lambda}} = \argmin \{ \textsc{bic}_{\gamma}({\lambda}): \lambda_1, \lambda_2 \in (0, \infty) \}$.

In describing the cross-validation procedure, we define the predictive negative log-likelihood function as 
$
   \mathcal{F}({\Sigma}, {\Omega}) = \tr ({\Sigma}{\Omega}) - \log \det({\Omega}).
$
To select ${\lambda}$ using cross-validation, we randomly split the dataset equally into $J$ groups, and denote the sample covariance matrix from the $j$th group as $\hat{\Sigma}_{Y(j,\lambda)}$  and the precision matrix estimated from the remaining groups as $\hat{{\Omega}}_{Y(-j, {\lambda} )}$. Then we choose
\[
  \hat{{\lambda}} = \argmin_{\lambda}
  \Big\{
    \sum_{j = 1}^J \mathcal{F}({\Sigma}_{Y(j,\lambda)}, \hat{{\Omega}}_{Y(-j, {\lambda})}): \  \lambda_1, \lambda_2 \in (0, \infty)
  \Big\}.
\]
The performance of these two selection methods is \edit{reported} in Section \ref{s:simul}.

\section{Asymptotic properties}
\label{s:theory}

We introduce some notations and the regularity conditions. Let $\{ \bO^{*}_k\}_{k = 0}^K$ be the true precision matrices with $\bO^{*}_{k} = ( \omega^{*}_{k(i,j)} )$, $T_k = \{(i, j): i \neq j, \omega^{*}_{k(i, j)} \neq 0 \}$ be the index set corresponding to the nonzero off-diagonal entries in $\bO^{*}_{k}$, $q_k = | T_k |$ be the cardinality of $T_k$, and $q = \sum_{k=0}^{K} q_k$. Let $\{ \Sigma^{*}_k \}_{k = 0}^K$  be the true covariance matrices for $Z$ and $\{ X_k \}_{k = 1}^K$, and $ \Sigma^{*}_Y = \{ \Sigma^{*}_{Y(l,m)} \}_{1 \leq l,m \leq K}$ be the true covariance matrices for $Y$. We assume that the following regularity conditions hold.

\textit{Condition} 1. There exist constants $\tau_1$ and $\tau_2$ such that $0 < \tau_1<\phi_{\text{min}}(\bO^{*}_{k})\le \phi_{\text{max}}(\bO^{*}_{k})< \tau_2 < \infty \quad (k= 0, \ldots, K)$.

\textit{Condition} 2. There exist constants $a$ and $b$ such that  $$ a \{(\log p) /n \}^{1/2} \leq \lambda_j \leq  b \{ (1 + p/q) (\log p) / n\}^{1/2} \quad (j = 1, 2).$$

Condition 1 bounds the eigenvalues of $\bO^{*}_{k}$, thereby guaranteeing the existence of its inverse. Condition 2 is needed to facilitate the proof of consistency. The following theorems discuss estimation consistency and selection sparsistency of our methods.

\begin{theorem}[Consistency of the one-step method]
\label{thm:Thm1}
Under Conditions 1 and 2, if $(p + q) (\log p) /n = o(1)$, then the solution $\{\hat{\bO}_k^{\mone}\}_{k = 0}^K$ of the one-step method satisfies
\vspace{4pt}
\begin{equation*}
	\sum\limits_{k = 0}^{K}  
	\big\| \hat{\bO}_{k}^{\mone} - \bO^{*}_{k} \big\|_F 
	= O_p\Big[ \Big\{ \frac{(p+q) \log p}{n} \Big\}^{1/2} \Big].
\end{equation*}
\end{theorem}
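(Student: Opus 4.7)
The plan is to reduce the claim to $K+1$ decoupled graphical-lasso problems and invoke the consistency theorem of \citet{Rothman2008} on each. Because the one-step method solves
\begin{equation*}
	\hat{\bO}_k^{\mone} = \argmin_{\bO_k \succ 0} \bigl\{ \tr(\hat{\Sigma}_k^{\prime} \bO_k) - \log \det(\bO_k) + \lambda \, |\bO_k^{-}|_1 \bigr\}
\end{equation*}
separately for each $k$, the crux is to show that the plug-in input $\hat{\Sigma}_k^{\prime}$ concentrates around $\Sigma_k^{*}$ at rate $O_p(\{\log p/n\}^{1/2})$ in the element-wise max norm. Given such a bound, Condition 1, and the scaling of $\lambda$ in Condition 2, the standard Frobenius-norm consistency argument yields $\|\hat{\bO}_k^{\mone} - \bO_k^{*}\|_F = O_p(\{(p + q_k)\log p/n\}^{1/2})$; summing over $k = 0,\ldots,K$ and treating $K$ as a fixed constant produces the stated rate, since $q = \sum_k q_k$.

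\textbf{Concentration of the pilot covariances.} For each pair $(l,m)$ the cross-block $\hat{\Sigma}_{Y(l,m)}$ is an average of outer products $y_{l,i} y_{m,i}^{\T}$ of jointly Gaussian vectors. I would apply the polarization identity $4ab = (a+b)^2 - (a-b)^2$ entry-wise to rewrite each $y_{l,i,a} y_{m,i,b}$ as an affine combination of squared Gaussians whose variances are controlled by $\phi_{\text{max}}(\Sigma_0^{*} + \Sigma_l^{*})$ and $\phi_{\text{max}}(\Sigma_0^{*} + \Sigma_m^{*})$, both bounded above under Condition 1. A Bernstein-type sub-exponential tail bound, combined with a union bound over the at most $K^2 p^2$ entries, then gives $\|\hat{\Sigma}_{Y(l,m)} - \Sigma_{Y(l,m)}^{*}\|_{\infty} = O_p(\{\log p/n\}^{1/2})$ uniformly in $(l,m)$. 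The definitions \eqref{eq:Sigma0orig}--\eqref{eq:Sigmahatk} and the triangle inequality transfer the rate to $\hat{\Sigma}_0$ and $\hat{\Sigma}_k$. Since $\Sigma_k^{*}$ is positive semidefinite and hence feasible in \eqref{eq:Projection}, $\|\hat{\Sigma}_k^{\prime} - \hat{\Sigma}_k\|_{\infty} \le \|\Sigma_k^{*} - \hat{\Sigma}_k\|_{\infty}$, so the projection step preserves the rate.

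\textbf{Reduction to the Rothman template.} With the element-wise rate for $\hat{\Sigma}_k^{\prime}$ in hand, I would mimic the proof of Theorem 1 of \citet{Rothman2008} applied to each subproblem $\mathcal{W}_k$ in \eqref{eq:onestep}. Expand $\mathcal{W}_k(\bO_k^{*} + \Delta) - \mathcal{W}_k(\bO_k^{*})$ for symmetric $\Delta$ with $\|\Delta\|_F = M\{(p + q_k)\log p/n\}^{1/2}$. The log-determinant part contributes a Bregman divergence that is lower bounded by a constant times $\|\Delta\|_F^2$ via Condition 1; the linear trace term is handled by Cauchy--Schwarz and our concentration bound; the penalty is split between the support $T_k$ (cardinality $q_k$) and its complement, with the off-support part absorbing the noise from Condition 2 on $\lambda$. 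For $M$ large enough the increment is positive on the sphere, and convexity of $\mathcal{W}_k$ then forces $\hat{\bO}_k^{\mone}$ inside this ball, yielding the $k$-wise bound; summation finishes the proof.

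\textbf{Main obstacle.} The step I expect to require the most care is the entry-wise concentration bound for $\hat{\Sigma}_{Y(l,m)}$ with $l \neq m$, whose summands are outer products of correlated Gaussian vectors rather than the usual squared Gaussians, so one cannot simply quote a textbook sample-covariance lemma. The polarization trick reduces each entry to a sum of independent sub-exponential variables, but one must uniformly control the resulting variances via Condition 1 and carefully propagate the rate through the decomposition \eqref{eq:Sigma0orig}--\eqref{eq:Sigmahatk} and the $\ell_{\infty}$ projection in \eqref{eq:Projection}. Once that lemma is in place, the remainder follows standard graphical-lasso consistency techniques.
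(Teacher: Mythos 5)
Your proposal is correct and follows essentially the same route as the paper's proof: entry-wise concentration of the pilot covariances at rate $O_p(\{\log p/n\}^{1/2})$, preservation of that rate under the $\ell_\infty$ projection via feasibility of $\Sigma_k^{*}$, and a Rothman-type sphere argument on each decoupled subproblem with the trace term split into diagonal and off-diagonal parts. The only difference is in the concentration step you flag as the main obstacle: rather than polarizing the cross products $y_{l,i,a}y_{m,i,b}$, the paper first bounds $\phi_{\max}(\Sigma_Y^{*})$ under Condition 1 and then applies Lemma 3 of Bickel and Levina (2008) directly to the stacked Gaussian vector $y_{\cdot,i}$, whose statement already covers products of distinct coordinates, so no separate sub-exponential argument is needed.
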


We next present a corollary of Theorem \ref{thm:Thm1} which gives a good estimator of $\Sigma^{*}_Y$.
\begin{corollary}
\label{cor:cor2}
Under the assumptions of Theorem \ref{thm:Thm1} and $\hat{\bO}_k^{\mone} (k = 0, \ldots, K)$ being the one-step solution, $\check{\Sigma}_k = (\hat{\bO}_k^{\mone} )^{-1}$ satisfies
 \vspace{4pt}
	\begin{equation*}
		\big\| \check{\Sigma}_k - \Sigma_k^* \big\|_F 
		= O_p\Big[ \Big\{ \frac{(p+q) \log p}{n} \Big\}^{1/2} \Big].
	\end{equation*}
\end{corollary}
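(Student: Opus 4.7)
The plan is to transfer the rate for the precision matrices in Theorem \ref{thm:Thm1} to the covariance matrices via the standard matrix identity
\[
\check{\Sigma}_k - \Sigma_k^{*}
= (\hat{\bO}_k^{\mone})^{-1} - (\bO_k^{*})^{-1}
= -(\hat{\bO}_k^{\mone})^{-1}\bigl(\hat{\bO}_k^{\mone} - \bO_k^{*}\bigr)(\bO_k^{*})^{-1}.
\]
Taking Frobenius norms on both sides and using the submultiplicative bound $\|ABC\|_F \le \|A\|\,\|B\|_F\,\|C\|$ (operator norm times Frobenius times operator norm), I obtain
\[
\bigl\|\check{\Sigma}_k - \Sigma_k^{*}\bigr\|_F
\le \bigl\|(\hat{\bO}_k^{\mone})^{-1}\bigr\|\,\bigl\|\hat{\bO}_k^{\mone} - \bO_k^{*}\bigr\|_F\,\bigl\|(\bO_k^{*})^{-1}\bigr\|.
\]
From here the strategy is to bound the two operator-norm factors by constants with probability tending to one, and then invoke Theorem \ref{thm:Thm1} to control the middle factor.

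The factor $\|(\bO_k^{*})^{-1}\|$ is immediate: by Condition 1, $\phi_{\min}(\bO_k^{*}) \ge \tau_1$, so $\|(\bO_k^{*})^{-1}\| = 1/\phi_{\min}(\bO_k^{*}) \le 1/\tau_1$. For the factor $\|(\hat{\bO}_k^{\mone})^{-1}\|$, I will use Weyl's inequality together with the fact that $\|\cdot\| \le \|\cdot\|_F$. Theorem \ref{thm:Thm1} gives, for each $k$, $\|\hat{\bO}_k^{\mone} - \bO_k^{*}\| \le \|\hat{\bO}_k^{\mone} - \bO_k^{*}\|_F = O_p[\{(p+q)\log p/n\}^{1/2}] = o_p(1)$ under the sparsity/dimension assumption $(p+q)\log p/n = o(1)$. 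Thus $\phi_{\min}(\hat{\bO}_k^{\mone}) \ge \tau_1 - o_p(1)$, so on an event of probability tending to one $\phi_{\min}(\hat{\bO}_k^{\mone}) \ge \tau_1/2$, which yields $\|(\hat{\bO}_k^{\mone})^{-1}\| \le 2/\tau_1$ with probability tending to one.

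Combining these pieces, on this high-probability event,
\[
\bigl\|\check{\Sigma}_k - \Sigma_k^{*}\bigr\|_F
\le \frac{2}{\tau_1^{2}}\,\bigl\|\hat{\bO}_k^{\mone} - \bO_k^{*}\bigr\|_F
= O_p\!\left[\left\{\frac{(p+q)\log p}{n}\right\}^{1/2}\right],
\]
since Theorem \ref{thm:Thm1} bounds the sum $\sum_{k=0}^{K}\|\hat{\bO}_k^{\mone} - \bO_k^{*}\|_F$ and hence each individual term at the stated rate. The main subtlety, and the step I expect to require the most care, is the uniform control of $\phi_{\min}(\hat{\bO}_k^{\mone})$ away from zero; this is where Condition 1, Weyl's inequality, and the assumption $(p+q)\log p/n = o(1)$ must all be combined, and it is what makes the identity-based bound usable. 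Everything else reduces to routine norm manipulations once this lower eigenvalue bound is established.
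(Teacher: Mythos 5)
Your proof is correct, but it takes a different route from the paper's. You use the first-order resolvent identity $\check{\Sigma}_k - \Sigma_k^{*} = -(\hat{\bO}_k^{\mone})^{-1}(\hat{\bO}_k^{\mone} - \bO_k^{*})(\bO_k^{*})^{-1}$ and then control the random factor $\|(\hat{\bO}_k^{\mone})^{-1}\|$ by a Weyl-inequality argument showing $\phi_{\min}(\hat{\bO}_k^{\mone}) \ge \tau_1/2$ with probability tending to one. The paper instead applies the Woodbury identity twice to obtain the explicit expansion $\check{\Sigma}_k = \Sigma_k^{*} - \Sigma_k^{*}\Delta_k\Sigma_k^{*} + \Sigma_k^{*}\Delta_k(\Delta_k+\bO_k^{*})^{-1}\Delta_k\Sigma_k^{*}$ with $\Delta_k = \hat{\bO}_k^{\mone}-\bO_k^{*}$, bounds the leading term using only the deterministic quantity $\|\Sigma_k^{*}\|\le 1/\tau_1$ from Condition 1, and controls the second-order remainder via a Neumann-series lemma ($\|(I_p-F)^{-1}\|\le (1-\|F\|)^{-1}$) applied to $\|(I_p+\Sigma_k^{*}\Delta_k)^{-1}\|$, using $\|\Sigma_k^{*}\Delta_k\|\le \tau_1^{-1}\|\Delta_k\|_F \to 0$. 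The two arguments are essentially equivalent in content — the paper's Neumann-series step is just another way of saying that the perturbed matrix remains well-conditioned — but yours is shorter and isolates the single point of genuine difficulty (the lower eigenvalue bound on the estimator), whereas the paper's expansion makes the leading-order term and the $O_p(\|\Delta_k\|_F^2)$ remainder explicit, which it then reuses verbatim in the proofs of Theorems \ref{thm:Thm2} and \ref{thm:Thm3}. Both yield the claimed rate, and your reduction from the bound on $\sum_{k}\|\hat{\bO}_k^{\mone}-\bO_k^{*}\|_F$ to each summand is the same step the paper takes implicitly.
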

To study our EM estimator, we need an estimator for $\Sigma^{*}_Y$ that satisfies  the following condition.

\textit{Condition} 3. We assume there exists an estimator $\tilde{\Sigma}_Y$ such that
\begin{equation*}
	\| \tilde{\Sigma}_Y - \Sigma^*_Y \|_F 
	= O_p\Big[ \Big\{ \frac{(p+q) \log p}{n} \Big\}^{1/2} \Big].
 \end{equation*}

The rate in Condition 3 is required to control the convergence rate of the E-step estimating ${\Sigma}_{k}^{*}$ 
 and thus the consistency of the estimate from the EM method. Under the conditions in Theorem \ref{thm:Thm1}, we can use the one-step estimator $\hat{\bO}_k^{\mone}$ $(k= 0, \ldots, K)$ to obtain $\tilde{\Sigma}_Y = J \otimes \hat{\bO}_0^{-1} + \{_d \hat{\bO}_k^{-1} \} $,  where $\{_d \cdot \}$ is a block diagonal matrix. The resulting  $\tilde{\Sigma}_Y$ satisfies Condition 3 by Corollary \ref{cor:cor2}.

\begin{theorem}[Consistency of the EM method]
\label{thm:Thm2}
  If Conditions 1-3 hold and $(p + q) (\log p) /n = o(1)$, then after a finite number of iterations, the solution $\{\hat{\bO}_k^{\mEM} \}_{k = 0}^K $ of the EM method satisfies
\begin{equation*}
	\sum\limits_{k = 0}^{K}  \big\| \hat{\bO}_k^{\mEM} - \bO^{*}_k \big\|_F 
	= O_p\Big[ \Big\{ \frac{(p+q) \log p}{n} \Big\}^{1/2} \Big].
\end{equation*}
\end{theorem}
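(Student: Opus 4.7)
The plan is to prove the rate by induction on the EM iteration, starting from the one-step estimator $\hat\bO^{\mone}$, which satisfies $\sum_k \|\hat\bO_k^{\mone} - \bO_k^*\|_F = O_p(r_n)$ for $r_n := \{(p+q)\log p/n\}^{1/2}$ by Theorem \ref{thm:Thm1}, and verifying that each iteration preserves this rate. The M-step \eqref{eq:glasso} has the same form as the one-step objective \eqref{eq:onestep} with the input covariance $\hat\Sigma_k^\prime$ replaced by $\dot\Sigma_k^{(t)}$, so the key implication to establish is
\[
    \sum_{k = 0}^K \|\bO_k^{(t)} - \bO_k^*\|_F = O_p(r_n)
    \quad \Longrightarrow \quad
    \sum_{k = 0}^K \|\dot\Sigma_k^{(t)} - \Sigma_k^*\|_F = O_p(r_n).
\]
Once this is in hand, repeating the argument behind Theorem \ref{thm:Thm1} yields $\sum_k \|\bO_k^{(t+1)} - \bO_k^*\|_F = O_p(r_n)$, and Proposition \ref{prop:pro1} guarantees convergence in a finite number of iterations, delivering the claim.

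The crux is the perturbation lemma for the E-step. Writing $A^{(t)} = A^* + \Delta A$ with $\Delta A = \sum_{k = 0}^K(\bO_k^{(t)} - \bO_k^*)$, I would use the expansion $(A^{(t)})^{-1} = (A^*)^{-1} - (A^*)^{-1} \Delta A (A^*)^{-1} + R$, with remainder $R$ of order $\|\Delta A\|^2$. Condition 1 gives $\phi_{\min}(A^*) \geq \sum_k \phi_{\min}(\bO_k^*) \geq (K+1)\tau_1 > 0$, so $(A^*)^{-1}$ is bounded in operator norm. Substituting this expansion into each of the multilinear pieces in \eqref{eq:sigmak}--\eqref{eq:sigma0}, using submultiplicativity of the Frobenius norm and the operator-norm bounds on $\bO_k^{(t)}$ (which in turn come from Condition 1 together with the induction hypothesis), and combining the $O_p(r_n)$ contribution $\|\tilde\Sigma_Y - \Sigma_Y^*\|_F$ from Condition 3 with $\|\Delta A\|_F = O_p(r_n)$ from the induction hypothesis, yields the claimed rate on $\dot\Sigma_k^{(t)}$.

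Given the lemma, the M-step analysis parallels the proof of Theorem \ref{thm:Thm1}. For each $k$, write $\bO_k = \bO_k^* + U_k$ and Taylor expand $-\log\det(\bO_k^* + U_k)$ around $\bO_k^*$; the resulting quadratic form in $U_k$ has smallest eigenvalue bounded below by a constant depending only on $\tau_2$, by Condition 1. The stochastic linear term is $\tr\{U_k(\dot\Sigma_k^{(t)} - \Sigma_k^*)\}$, bounded in absolute value by $\|\dot\Sigma_k^{(t)} - \Sigma_k^*\|_F\|U_k\|_F = O_p(r_n\|U_k\|_F)$. The penalty contribution splits into on-support and off-support pieces; on the support, the bound is at most $\lambda_j\sqrt{q_k}\|U_k\|_F$, and Condition 2 gives $\lambda_j\sqrt{q} = O(r_n)$, while the off-support piece is absorbed using the standard KKT sign argument. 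On a sphere $\sum_k\|U_k\|_F = Mr_n$ with $M$ sufficiently large, the positive quadratic term dominates the stochastic and penalty contributions; convexity of the M-step objective then forces the minimizer inside the sphere, completing the induction.

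The main obstacle will be the perturbation lemma for the E-step. Formulas \eqref{eq:sigmak} and \eqref{eq:sigma0} contain several multilinear terms such as $(A^{(t)})^{-1}\bO_l^{(t)}\tilde\Sigma_{Y(l,k)}\bO_k^{(t)}(A^{(t)})^{-1}$ that are highly nonlinear in $\bO^{(t)}$, and controlling them in Frobenius norm requires uniform operator-norm bounds on every iterate $\bO_k^{(t)}$ and on $(A^{(t)})^{-1}$ throughout the induction. One must therefore verify that the eigenvalue bounds from Condition 1 are preserved with high probability at each iteration, and that the implicit $O_p(\cdot)$ constants do not blow up across the finite number of iterations permitted by Proposition \ref{prop:pro1}; this ultimately reduces to showing that the combined E--M map is approximately contractive in a small Frobenius neighborhood of $\bO^*$.
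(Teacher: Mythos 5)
Your proposal is correct and follows essentially the same route as the paper: the paper's argument likewise alternates an E-step perturbation bound (obtained by expanding $(A^{(t)})^{-1}$ around $(A^{*})^{-1}$ via the Woodbury identity, with a second-order remainder, and bounding the multilinear terms in \eqref{eq:sigmak}--\eqref{eq:sigma0} using $\|AB\|_F\le\|A\|\,\|B\|_F$ together with Condition 1 and Condition 3) with an M-step rate-preservation lemma that repeats the Theorem \ref{thm:Thm1} argument but bounds the linear term by $\|\dot\Sigma_k-\Sigma_k^*\|_F\,\|\Delta\|_F$, and then invokes finiteness of the number of iterations to conclude. The only cosmetic difference is that the paper simply drops the nonnegative off-support penalty term rather than invoking a KKT sign argument.
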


\begin{theorem}[Sparsistency of the one-step method]
\label{thm:Thm3}
{Under the assumptions of Theorem \ref{thm:Thm1}, {if we further assume that the one-step solution $\{\hat{\bO}_k^{\mone} \}_{k = 0}^K$ satisfies  $\sum_{k = 0}^{K} \| \hat{\bO}_k^{\mone} -  \bO^{*}_k \| = O_p(\eta_n) $  for a sequence $\eta_n \rightarrow 0$, and $ \{ (\log p) /n \}^{1/2}+ \eta_n = O(\lambda_1) = O(\lambda_2)$, then with probability tending to 1, $\hat{\omega}_{k(i,j)}^{\mone} = 0$ for all $(i,j) \in T^c_k$ $ (k = 0,\ldots,K)$.} }
\end{theorem}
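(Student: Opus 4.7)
Each subproblem $\mathcal{W}_k$ in \eqref{eq:onestep} is convex in $\Omega_k \succ 0$, so its minimizer $\hat{\bO}_k^{\mone}$ is characterized by the KKT subdifferential condition
\begin{align*}
  \bigl(\hat{\Sigma}_k^{\prime} - (\hat{\bO}_k^{\mone})^{-1}\bigr)_{ij} + \lambda_k s_{k,ij} = 0 \quad (i\neq j),
\end{align*}
where $\lambda_0 = \lambda_2$, $\lambda_k = \lambda_1$ for $k\geq 1$, $s_{k,ij} = \mathrm{sign}(\hat{\omega}_{k(i,j)}^{\mone})$ when that entry is nonzero, and $s_{k,ij}\in[-1,1]$ otherwise. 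Hence it suffices to show that, with probability tending to one,
\begin{align*}
  \max_{0 \leq k \leq K}\max_{(i,j)\in T_k^c}\bigl|\bigl(\hat{\Sigma}_k^{\prime}-(\hat{\bO}_k^{\mone})^{-1}\bigr)_{ij}\bigr| < \lambda_k,
\end{align*}
since then the subgradient $s_{k,ij}$ cannot attain $\pm 1$ and the KKT conditions force $\hat{\omega}_{k(i,j)}^{\mone}=0$ on $T_k^c$.

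The plan is to split the quantity of interest, using $(\bO_k^{*})^{-1}=\Sigma_k^{*}$, as
\begin{align*}
  \bigl(\hat{\Sigma}_k^{\prime}-(\hat{\bO}_k^{\mone})^{-1}\bigr)_{ij}
  = \bigl(\hat{\Sigma}_k^{\prime}-\Sigma_k^{*}\bigr)_{ij}
    -\bigl((\hat{\bO}_k^{\mone})^{-1}-\Sigma_k^{*}\bigr)_{ij},
\end{align*}
and to bound each piece separately. For the first piece I would exploit the definition of the positive-semidefinite projection in \eqref{eq:Projection}: since $\Sigma_k^{*}\succeq 0$ is a feasible point, $\|\hat{\Sigma}_k^{\prime}-\hat{\Sigma}_k\|_\infty\leq \|\Sigma_k^{*}-\hat{\Sigma}_k\|_\infty$, and the triangle inequality yields $\|\hat{\Sigma}_k^{\prime}-\Sigma_k^{*}\|_\infty\leq 2\|\hat{\Sigma}_k-\Sigma_k^{*}\|_\infty$. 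Because $E[\hat{\Sigma}_k]=\Sigma_k^{*}$ for every $k$ (including $k=0$, where the cross-tissue average in \eqref{eq:Sigma0orig} has this mean), Bernstein-type entrywise tail bounds for products of sub-Gaussian variables and a union bound over $O(p^2)$ entries give $\|\hat{\Sigma}_k-\Sigma_k^{*}\|_\infty = O_p(\{(\log p)/n\}^{1/2})$. For the second piece I would apply the resolvent identity
\begin{align*}
  (\hat{\bO}_k^{\mone})^{-1}-\Sigma_k^{*}
  = \Sigma_k^{*}(\bO_k^{*}-\hat{\bO}_k^{\mone})(\hat{\bO}_k^{\mone})^{-1},
\end{align*}
bound the $(i,j)$ entry by the spectral norm, and use Condition~1 (giving $\|\Sigma_k^{*}\|\leq \tau_1^{-1}$) together with Weyl's inequality and the assumed estimation-consistency bound $\|\hat{\bO}_k^{\mone}-\bO_k^{*}\| = O_p(\eta_n) = o_p(1)$ to conclude $\phi_{\text{min}}(\hat{\bO}_k^{\mone})\geq \tau_1/2$, hence $\|(\hat{\bO}_k^{\mone})^{-1}\|\leq 2\tau_1^{-1}$, with probability tending to one. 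The resulting bound is $\|(\hat{\bO}_k^{\mone})^{-1}-\Sigma_k^{*}\| = O_p(\eta_n)$, which dominates the required entrywise quantity.

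Combining the two pieces gives $\max_{(i,j)\in T_k^c}\bigl|\bigl(\hat{\Sigma}_k^{\prime}-(\hat{\bO}_k^{\mone})^{-1}\bigr)_{ij}\bigr| = O_p(\{(\log p)/n\}^{1/2}+\eta_n)$, and a union bound over $k=0,\ldots,K$ (a fixed number of graphs) preserves the same rate. By the assumed rate condition $\{(\log p)/n\}^{1/2}+\eta_n = O(\lambda_1) = O(\lambda_2)$, the error is $O_p(\lambda_k)$. The main obstacle is upgrading this $O_p(\lambda_k)$ control to the strict inequality $<\lambda_k$ demanded by the KKT characterization; the standard resolution, implicit in sparsistency arguments of this type, is that the rate assumption is read as $\lambda_k$ dominating the error terms by a multiplicative constant that can be chosen strictly less than one on an event of probability tending to one, so that the strict inequality indeed holds and the claimed sparsistency $\hat{\omega}_{k(i,j)}^{\mone}=0$ on $T_k^c$ follows for all $k$ simultaneously. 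A secondary, but routine, technical point is ensuring that the entrywise concentration of $\hat{\Sigma}_0$, which involves the sub-exponential cross-products $y_{m,i}y_{l,i}^\T$, produces the same $\{(\log p)/n\}^{1/2}$ rate, handled by standard Bernstein inequalities.
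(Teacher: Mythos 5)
Your proposal is correct and follows essentially the same route as the paper's proof: the same decomposition of $\hat{\Sigma}_k^{\prime}-(\hat{\bO}_k^{\mone})^{-1}$ into the sampling error $\hat{\Sigma}_k^{\prime}-\Sigma_k^{*}$ (rate $\{(\log p)/n\}^{1/2}$ via the projection/triangle-inequality step and entrywise concentration) plus the inverse-perturbation error $(\hat{\bO}_k^{\mone})^{-1}-\Sigma_k^{*}$ (rate $\eta_n$), followed by an optimality-condition argument showing the penalty term dominates on $T_k^c$. The only differences are cosmetic: you phrase the optimality step via the KKT subgradient rather than the paper's sign-of-derivative contradiction, and you control the inverse perturbation with the resolvent identity and Weyl's inequality where the paper applies the Woodbury identity twice with a Neumann-series lemma; both yield the same $O_p(\eta_n)$ bound, and your noted concern about upgrading $O_p(\lambda)$ to a strict inequality is a looseness shared by the paper itself.
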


For sparsistency we require a lower bound on the rates of $\lambda_1$ and $\lambda_2$, but for consistency, we need an upper bound for $\lambda_1$ and $\lambda_2$ to control the biases. In order to have consistency and sparsistency simultaneously, we need the bounds to be compatible, that is, we need $ \{ (\log p) /n \}^{1/2}+ \eta_n = O(\lambda_1, \lambda_2) = \{(1 + p/q) \log p/n \}^{1/2}$. From the inequalities $\| W  \|^2_F/p \leq \| W \|^2 \leq \| W \|^2_F$, there are two extreme scenarios describing the rate of $\eta_n$, as discussed in \citet{Lam2009}. In the worst case, where $\sum_{k = 0}^{K} \| \hat{\bO}_{k} - \bO^{*}_{k} \|$  has the same rate as $\sum_{k = 0}^{K} \| \hat{\bO}_{k} - \bO^{*}_{k}\|_F$, we achieve both consistency and sparsistency only when $q = O(1)$. In the most optimistic case, where $\sum_{k = 0}^{K} \| \hat{\bO}_{k} - \bO^{*}_{k} \|^2 = \sum_{k = 0}^{K}  \| \hat{\bO}_{k} - \bO^{*}_{k} \|_F^2/p$, we have $\eta_n^2 = (1 + q/p) \log p/n$, and the compatibility of the bounds requires $q = O(p)$.

\begin{theorem}[Sparsistency of the EM method]
\label{thm:Thm4}
{Under the assumptions of Theorem \ref{thm:Thm2}, {if we further assume the EM solution $\{\hat{\bO}_k^{\mEM} \}_{k = 0}^K$ satisfies  $\sum_{k = 0}^{K} \| \hat{\bO}_k^{\mEM} -  \bO^{*}_{k} \| = O_p(\zeta_n)$ for a sequence $\zeta_n \rightarrow 0$, and if $ \{(p + q) (\log p) /n \}^{1/2}  + \zeta_n = O(\lambda_1) = O(\lambda_2)$, then  } with probability tending to 1, $\hat{\omega}_{k(i,j)}^{\mEM} = 0$ for all $(i,j) \in T^c_k$ $(k = 0,\ldots,K).$}
\end{theorem}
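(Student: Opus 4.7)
The plan is to mirror the KKT-based argument used for Theorem \ref{thm:Thm3}. By Proposition \ref{prop:pro1}, after finitely many iterations the EM sequence has stabilized, so the converged iterate $\hat{\bO}^{\mEM}$ is a fixed point of the M-step \eqref{eq:glasso} whose working covariance $\dot{\Sigma}_k^{(t)}$ is generated by \eqref{eq:sigmak}--\eqref{eq:sigma0} at $\bO^{(t)}=\hat{\bO}^{\mEM}$. The subdifferential optimality condition then reads, for each $k$ and each off-diagonal $(i,j)$,
\[
\bigl[\dot{\Sigma}_k^{(t)} - (\hat{\bO}_k^{\mEM})^{-1}\bigr]_{(i,j)} + \lambda\,s_{k(i,j)} = 0,
\]
with $s_{k(i,j)}\in[-1,1]$ and $s_{k(i,j)}=\mathrm{sgn}(\hat{\omega}_{k(i,j)}^{\mEM})$ whenever $\hat{\omega}_{k(i,j)}^{\mEM}\neq 0$; here $\lambda$ equals $\lambda_2$ when $k=0$ and $\lambda_1$ otherwise. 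Arguing by contradiction, it is enough to show that for every $(i,j)\in T_k^c$ the residual $\bigl|\dot{\Sigma}_{k(i,j)}^{(t)} - (\hat{\bO}_k^{\mEM})^{-1}_{(i,j)}\bigr|$ is strictly less than $\lambda$ with probability tending to one, which forces $\hat{\omega}_{k(i,j)}^{\mEM}=0$.

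Next I would decompose the residual through the truth,
\[
\dot{\Sigma}_k^{(t)} - (\hat{\bO}_k^{\mEM})^{-1} = \bigl(\dot{\Sigma}_k^{(t)} - \Sigma_k^*\bigr) + \bigl(\Sigma_k^* - (\hat{\bO}_k^{\mEM})^{-1}\bigr),
\]
and bound each piece entrywise. The second piece is handled by the identity $A^{-1}-B^{-1}=A^{-1}(B-A)B^{-1}$ together with Condition 1: on the high-probability event from Theorem \ref{thm:Thm2}, the eigenvalues of $\hat{\bO}_k^{\mEM}$ are bounded away from $0$ and $\infty$, so $\|(\hat{\bO}_k^{\mEM})^{-1}-\Sigma_k^*\|_F = O_p(\zeta_n)$. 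The first piece is the more delicate one. I would begin by checking that substituting $(\bO^*,\Sigma_Y^*)$ into the E-step formula \eqref{eq:sigmak} returns $\Sigma_k^*$ exactly, an identity that follows from $\Sigma_Y^*=J\otimes\Sigma_0^*+\{_d \Sigma_k^*\}$ and the block-matrix inversion lemma. Then, viewing the E-step as a smooth map of $(\bO,\hat{\Sigma}_Y)$ and propagating the perturbations $\sum_k\|\hat{\bO}_k^{\mEM}-\bO_k^*\|=O_p(\zeta_n)$ together with $\|\hat{\Sigma}_Y-\Sigma_Y^*\|_F=O_p[\{(p+q)\log p/n\}^{1/2}]$ from Condition 3 through the products and inversions yields $\|\dot{\Sigma}_k^{(t)}-\Sigma_k^*\|_F=O_p(\zeta_n+\{(p+q)\log p/n\}^{1/2})$, which controls the $(i,j)$ entry in particular.

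Combining the two bounds gives a total residual of order $O_p(\zeta_n+\{(p+q)\log p/n\}^{1/2})$, which by the tuning-parameter assumption $\{(p+q)(\log p)/n\}^{1/2}+\zeta_n=O(\lambda_1)=O(\lambda_2)$ is strictly smaller than $\lambda$ with probability tending to one (provided the implicit constant in the choice of $\lambda_1,\lambda_2$ is large enough), which delivers the required contradiction. The main obstacle I anticipate is the Lipschitz control of the E-step map: the formula \eqref{eq:sigmak} is a sum of triple products of $\hat{\bO}_l^{\mEM}$ and $\hat{\Sigma}_{Y(l,k)}$ sandwiched between $(A^{(t)})^{-1}$ with $A^{(t)}=\sum_{k=0}^K \hat{\bO}_k^{\mEM}$, so the perturbation must be propagated through several inversions simultaneously. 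The key technical observation is that, on the event from Theorem \ref{thm:Thm2}, Condition 1 produces uniform operator-norm bounds for $(A^{(t)})^{-1}$, each $\hat{\bO}_l^{\mEM}$, and each $\hat{\Sigma}_{Y(l,k)}$; once this is in hand the estimate reduces to a telescoping triangle-inequality argument combined with $A^{-1}-B^{-1}=A^{-1}(B-A)B^{-1}$ applied to $A^{(t)}$ versus $A^*=\sum_k \bO_k^*$.
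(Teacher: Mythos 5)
Your proposal is correct and takes essentially the same route as the paper's proof: the paper states the optimality condition as the sign of the derivative of the M-step objective at the minimizer (your subgradient inclusion is the same condition in different clothing), bounds $\|(\hat{\bO}_k^{\mEM})^{-1}-\Sigma_k^*\|$ by the same Woodbury/resolvent expansion under Condition 1, and simply imports the bound $\|\dot{\Sigma}_k-\Sigma_k^*\|_F=O_p[\{(p+q)(\log p)/n\}^{1/2}]$ from the proof of Theorem \ref{thm:Thm2} rather than re-deriving the E-step perturbation as you sketch. The resulting residual rate $O_p[\{(p+q)(\log p)/n\}^{1/2}+\zeta_n]$ and the dominance-of-$\lambda$ conclusion are identical.
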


Similar to the discussion above for the EM algorithm, we obtain both consistency and sparsistency when $q= O(1)$. See the Supplementary Material.

\section{Simulation}
\label{s:simul}

\subsection{Simulating category-specific and systemic networks}

We assessed the performance of the one-step and EM methods by applying them to simulated data generated by two types of synthetic networks: a chain network and a nearest-neighbor network as shown in Fig. \ref{fig:networks}. Twelve simulation settings were considered. These varied the base architecture of the category-specific network, the degree to which the actual structure could deviate from this base architecture, and the number of outcome variables.

Under each of the 12 simulation conditions, samples were independently and identically distributed, with systemic outcomes generated as $Z_i\sim \mathcal{N}(0, \Omega_0^{-1})$, category-specific outcomes as $X_{k,i}\sim \mathcal{N}(0, \bO_k^{-1})$, and observed outcomes as $y_{k,i} = x_{k,i} + z_i$, for $K=4$, and $n = 300$. The following architectures were considered for the five networks $\{ \bO_{k}\}_{k = 0}^4$:

(I) the $K$ category-specific networks are chain-networks and the systemic network is a nearest-neighbor network with the number of neighbors $m = 5$ and $25$ for $p = 100$ and $1000$;

(II) the $K$ category-specific networks and the systemic network are all nearest-neighbor networks with $m = 5$ and $25$ for $p = 100$ and $1000$ respectively.

Chain and nearest-neighbor networks were generated using the algorithms in \citet{Fan2009} and \citet{Li2006}. The structures of network (I) are shown in Fig. \ref{fig:networks}(a) and (d). Simulated networks were allowed to deviate from their base architectures by a specified degree $\rho$, through a random addition of edges following the method of \citet{Guo2011}. Specifically, for each $\bO_{k}$  $(k = 0,1,\dots, K)$ generated above, a symmetric pair of zero elements is randomly selected and replaced with a value generated uniformly from $[-1, -0.5] \cup [0.5, 1]$. We repeat this procedure $\rho T$ times, with $T$ being the number of links in the initial structure, and $\rho\in\{0, 0.2,1\}$. 

\begin{figure}[H]
	\begin{center}
      	\includegraphics[width = \textwidth]{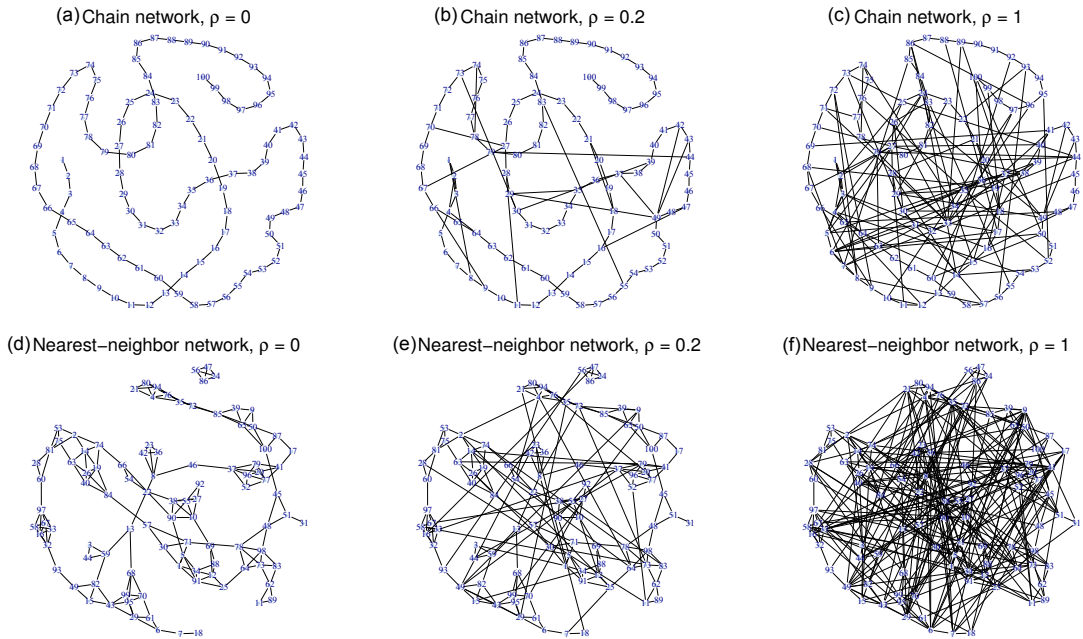}
	\end{center}
	\caption{Network topologies generated in the simulations. Top row (a-c) shows chain networks with noise ratios $\rho = 0$, $0.2$, and $1$. Bottom row (d-f) shows nearest-neighbor networks with $\rho = 0$, $0.2$, and $1$.}
	\label{fig:networks}
\end{figure}

We compared the performance of the one-step and EM methods by examining the average false positive rate, average false negative rate, average Hamming distance, average entropy loss
\begin{equation*}
	\text{EL} = \frac{1}{K + 1} \sum_{k = 0}^K 
	\Big\{ \tr \big( \bO_k^{*-1}\hat{\bO}_k \big) 
	- \log \det \big( \bO_k^{*-1}\hat{\bO}_k \big)   \Big\} - p \,,
\end{equation*}
and average Frobenius loss
\begin{equation*}
	\text{FL} =  \frac{1}{K + 1}\sum_{k = 0}^K \frac{\| \bO_k^* - \hat{\bO}_k\|_F^2}
	{ \| \bO_k^* \|_F^2 } \,.
\end{equation*}
We also examined receiver operating curves for the two methods. 

\subsection{Estimation of  category-specific $\bO_k$ and systemic networks $\bO_0$}

As shown in Fig. \ref{fig:toy}, existing methods are designed to estimate the aggregate networks $\bO_{Y_k}$ instead of category-specific $\bO_k$ and systemic $\bO_0$ networks. In this subsection, we focus only on our proposed one-step and EM methods.

Results of the simulations are reported in Table \ref{tab:simresults1}. Summary statistics are based on $50$ replicate trials under each of the 12 conditions, and given for model fitting under both extended Bayesian information criterion with $\gamma = 0.1$ and under cross-validation criteria. In general, the one-step method under either model selection criteria resulted in higher values of entropy loss, Frobenius loss, false positive rates and Hamming distance. For both methods, cross-validation tended to choose models with more false positive links but fewer false negative links, leading to a denser graph. {For model selection, a rule of thumb is to use cross-validation when $p > 500$, and to use the extended Bayesian information criterion  otherwise.}

Receiver operating curves for the one-step and EM methods are plotted in Fig. \ref{fig:roc}; each is based on 100 replications with the constraint $\lambda_1 = \lambda_2$. Under all settings, the EM method outperforms the one-step method, yielding greater improvements as the structures become more complicated.

\begin{figure}[htb!]\footnotesize
	\includegraphics[width = 5.5in]{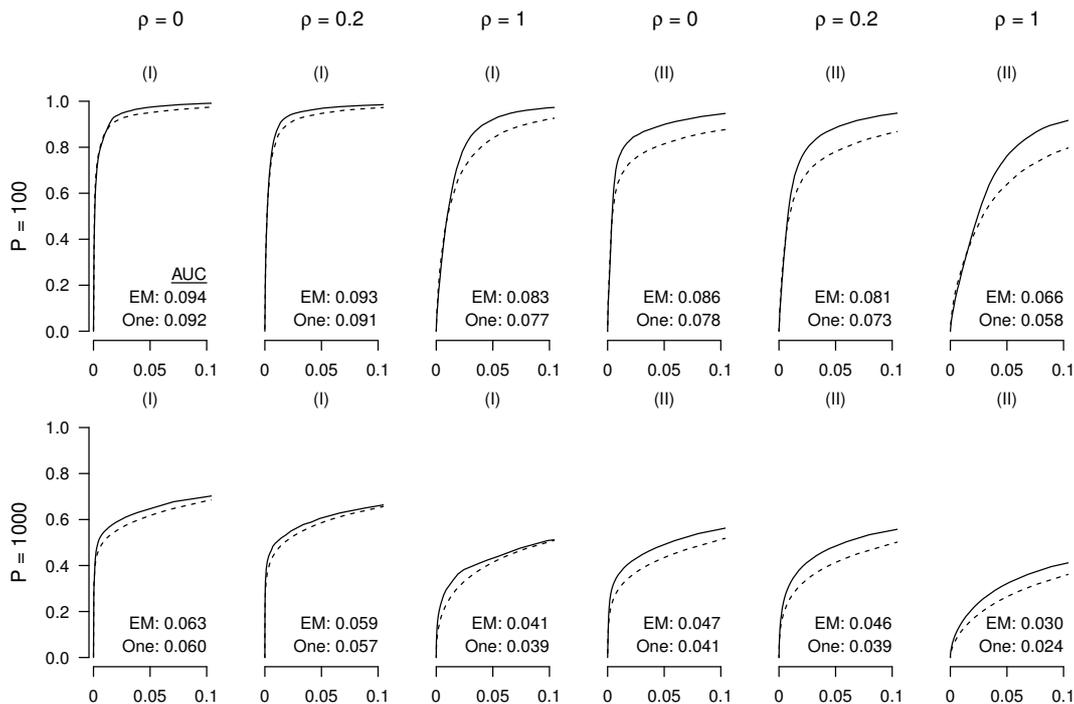}
	\caption{Receiver operating characteristic  curves assessing power and discrimination of graphical inference under different simulation settings. Each panel reports performance of the EM method (solid line) and the one-step method (dashed line), plotting true positive rates (y-axis) against false positive rates (x-axis) for a given noise ratio $\rho$, \edit{network base architecture I or II,} sample size $n = 300$, number of neighbors $m = 5$ and $25$ for $p = 100$ and $1000$ respectively. The numbers in each panel represent the areas under the curve for the two methods.}
	\label{fig:roc}
\end{figure}

\begin{table}[htb!]\footnotesize
\caption{\label{tab:simresults1} Summary statistics reporting performance of the EM and one-step methods inferring graph structure for different networks. The numbers before and after the slash are the results based on the extended Bayesian information criterion and cross-validation, respectively.}
{
\begin{center}
  \scalebox{1}{
  \begin{tabular}{c c c c c c c c c}
  \hline
  \emph{p} & \specialcell{Network\\architecture}  & $\rho$ & Method & EL 
  & FL & FP$(\%)$ & FN$(\%)$  & HD $(\%)$  \\
  \hline\multirow{12}{*}{$100 $ }
    &\multirow{6}{*}{(I)}
    & 0    & One-step & 12.1/10.0           & 0.24/0.16           & 5.5/20.9            &
    4.2/0.9           & 5.5/20.4  \\
    &&   0 & EM       & 6.7/4.7   & 0.15/0.08 & 4.2/15.8  &
    3.4/0.6 & 4.2/{15.4} \\
    && 0.2 & One-step & 10.6/8.6            & 0.22/0.15           & 5.4/19.4            &
    3.7/0.9           & 5.3/18.8 \\
    && 0.2 &  EM      & {6.4}/{4.8}   & {0.15}/{0.09} & {4.9}/{14.3}  &
    {3.5}/{0.6} & {4.8}/ {14.0} \\
    && 1   & One-step & 12.6/9.9            & 0.24/0.17           & 7.3/23.3            &
    9.5/2.9           & 7.5/22.3           \\
    && 1  &  EM       & {8.3}/{6.0}   & {0.17}/{0.11} & {6.7}/{15.3}  &
    {5.3}/{1.6} & {6.6}/{14.6} \\
    \cline{2-9}
    &\multirow{6}{*}{(II)}
    & 0   & One-step  & 12.1/9.6           & 0.27/0.19            & {3.4}/19.6        &
    22.0/7.6          & 4.1/19.1 \\
    &&0   & EM        & {7,9}/{6.0}  & {0.20}/{0.14}  & 3.8/{13.5}        &
    {12.4}/{4.2}&{4.1}\ {13.4}\\
    &&0.2 & One-step  & 12.5/9.7           & 0.26/0.18            & 4.6/21.0             &
    23.0/7.8          & 5.5/20.4 \\
    &&0.2 & EM        & {8.7}/{6.1}  & {0.19}/{0.12}  & {4.5}/{15.2}   &
    {14.1}/{3.2}&{5.0}/{14.6} \\
    && 1  & One-step  & 16.3/12.6          & 0.27/0.17            & 8.7/30.4             &
    24.0/8.8          & 9.9/28.7 \\
    && 1  & EM        & {11.3}/{7.6} & {0.20}/{0.11}  & {8.1}/{22.9}   &
    {13.7}/{2.7}&{8.6}/{21.4} \\
    \hline\multirow{12}{*}{$1000$}
	&\multirow{6}{*}{(I)}
    &  0  & One-step   & 276.7/240.6          & 0.44/0.36           & 0.6/5.5            &
    52.1/{34.6}     &0.9/5.6   \\
    && 0  & EM         & {120.3}/{94.9} & {0.22}/{0.16} & {0.5}/{2.5}  &
    {48.9}/35.7     &{0.8}/{2.7} \\
    && 0.2& One-step   & 201.5/162.3          & 0.35/0.27           & 0.2/5.0            &
    64.3/{37.9}     & 0.6/5.3 \\
    && 0.2& EM         & {117.7}/{88.5} & {0.19}/{0.13} & {0.2}/{2.2}  &
    { 57.8}/39.8    &{0.6}/ {2.5}  \\
    && 1 & One-step    & 171.6/146.0          & 0.28/0.22           & 0.0/5.3            &
    100/{54.1}          & 1.2/5.9\\
    && 1 & EM          & {147.1}/{108.1}& {0.20}/{0.14} & {0.0}/{2.3}  &
    {99.2}/56.5    &{1.2}/{2.9}  \\
    \cline{2-9}
    &\multirow{6}{*}{(II)}
    & 0   & One-step   & 301.0/234.4          & 0.43/0.33           &  {0.1}/6.7      &
    83.5/{53.7}     & 2.0/7.7\\
    && 0  & EM         & {206.7}/{160.9}& {0.29}/{0.23} &  0.2/{2.6}      &
    {73.8}/56.4     &{1.9}/{3.8}  \\
    && 0.2& One-step   & 349.8/257.5          & 0.44/0.31           & {0.1}/8.4       &
    89.2/{52.9}     & 2.5/9.6 \\
    && 0.2& EM         & {275.0}/{190.8}& {0.32}/{0.23} & 0.2/{3.9}       &
    {82.7}/53.8     &{2.4}/{5.2} \\
    && 1& One-step     & 325.4/268.8          & 0.41/0.29           & 0.0/10.1           &
    99.9/{64.3}     & 4.4/12.5\\
    &&1 & EM           & {301.6}/{232.6}& {0.31}/{0.23} & { 0.0}/{4.8} &
    {99.8}/68.2     & {4.4}/ {7.6}\\
   \hline
     \end{tabular}}
  \end{center}}
\end{table}

\subsection{Estimation of aggregate networks $\bO_{Y_k}$}

Although our goal is to estimate the two network layers, we can also use our estimators of  $\bO_k$ $(k = 0, \ldots, K)$ to estimate the aggregate network $\bO_{Y_k} = (\bO_k^{-1} + \bO_0^{-1} )^{-1} $ as a derived statistic. Doing so allows us to compare our method with methods that aim to estimate the aggregate network $\bO_{Y_k}$, these methods otherwise being incomparable.

We compared the performance of the EM method with two existing  \edit{single-level} methods for estimating multiple graphs: the hierarchical penalized likelihood method of \citet{Guo2011}, and the joint graphical lasso of \citet{Danaher2013}. \edit{As shown by simulation results reported in the Supplementary Material, these two single-level methods tended to give similar, sparse estimates that were very different from the true aggregate graph. The true aggregate graph tended to be highly connected, as illustrated in Fig \ref{fig:toy}, and under most settings was much better estimated by the EM.
An exception was setting (II) with $\rho = 0$ and $0.2$, where $\bO_{Y_k}$ is relatively sparse, and where the best performance came from the method of \citet{Guo2011}. Sparsity in $\bO_{Y_k}$ arises under this setting because when $\bO_k$ and $\bO_0$ are chain networks $\bO_{Y_k}$ has a strong banding structure, with large absolute values within the band and small absolute values outside.}


\section{Application to gene expression data in mouse}

To demonstrate the potential utility of our approach, we apply the EM method to mouse genomics data from \citet{Dobrin2009} and \citet{Crowley2014}. In each case, we aim to infer systemic and category-specific gene co-expression networks from transcript abundance as measured by microarrays. In describing our inference on these datasets we find it helpful to distinguish two interpretations of a network: the potential network is the network of biologically possible interactions in the type of system under study; the induced network is the subgraph of the potential network that could be inferred in the population sampled by the study. The induced network is therefore a statistical, not physical, phenomenon, and describes the dependence structure induced by the interventions, or perturbations, applied to the system.

A simple example is the relationship between caloric intake, sex, and body weight.  Body weight is influenced by both the state of being male or female and the degree of calorie consumption; these relations constitute edges in the potential network. Yet in a population where caloric intake varies but where individuals are exclusively male, the effect of sex is undefined and the corresponding edges relating sex to body weight are undetectable; these edges are therefore absent in the induced network. More generally, the induced network for a system is defined both by the potential network and the intervention applied to it: two populations of mice could have the same potential network, but when subject to different interventions their induced networks could differ. Conversely, when estimating the dependence structure of variables arising from population data, the degree to which the induced network reflects the potential network is a function of the underlying conditions being varied and interventions at work.

The \citet{Dobrin2009} dataset comprises expression measurements for 23,698 transcripts on 
301 male mice in adipose, liver, brain and muscle tissues. These mice arose from an \FTwo cross between two contrasting inbred founder strains, one with normal body weight physiology and the other with a heritable tendency for rapid weight-gain. In a cross of this type, the analyzed offspring constitutes an independent and identically distributed sample of individuals who are genetically distinct and have effectively  been subject to a randomized allocation of normal and weight-inducing DNA variants, or alleles, at multiple locations along its genome. As a result of this allocation, gene expression networks inferred on such a population would be expected to emphasize more strongly those subgraphs of the underlying potential network that are related to body weight. Moreover, since the intervention alters a property affecting the entire individual, we might expect it to exert at least some of its effect systemically, that is, globally across all tissues in each individual.

Using a subset of the data, we inferred the dependence structure of gene co-expression among three groups of well-annotated genes in brain and liver: an obesity-related gene set, an imprinting-related gene set, and an extracellular matrix, i.e., the ECM-related gene set. These groups were chosen based on criteria independent of our analysis and represent three groups whose respective effects would be exaggerated under very different interventions. 
%
%
The tissue-specific and systemic networks inferred by our EM method are shown in Fig. \ref{fig:f2smallnet}. Each node represents a gene, and the darkness of an edge represents the magnitude of the associated partial correlation. The systemic network in Fig. \ref{fig:f2smallnet}(c) includes edges on the \textit{Aif1} obesity-related pathway only, which is consistent with the \FTwo exhibiting a dependence structure induced primarily by an obesity-related genetic intervention that acts systemically.  The category-specific networks in Fig. \ref{fig:f2smallnet}(a) and (b) still include part of the \textit{Aif1} pathway, suggesting that variation in this pathway tracks variation at  both the systemic and tissue-specific level; in other ways their dependence structures differ, with, for instance, \textit{Aif1} and \textit{Rgl2} being linked in the brain but not in the liver. The original analysis of \citet{Dobrin2009} used a correlation network approach, whereby unconditional correlations with statistical significance above a predefined threshold were declared as edges; that analysis also supported a role for \textit{Aif1} in tissue-to-tissue co-expression.

\begin{figure}[h]
  \begin{center}
	\includegraphics[width=0.65\textwidth]{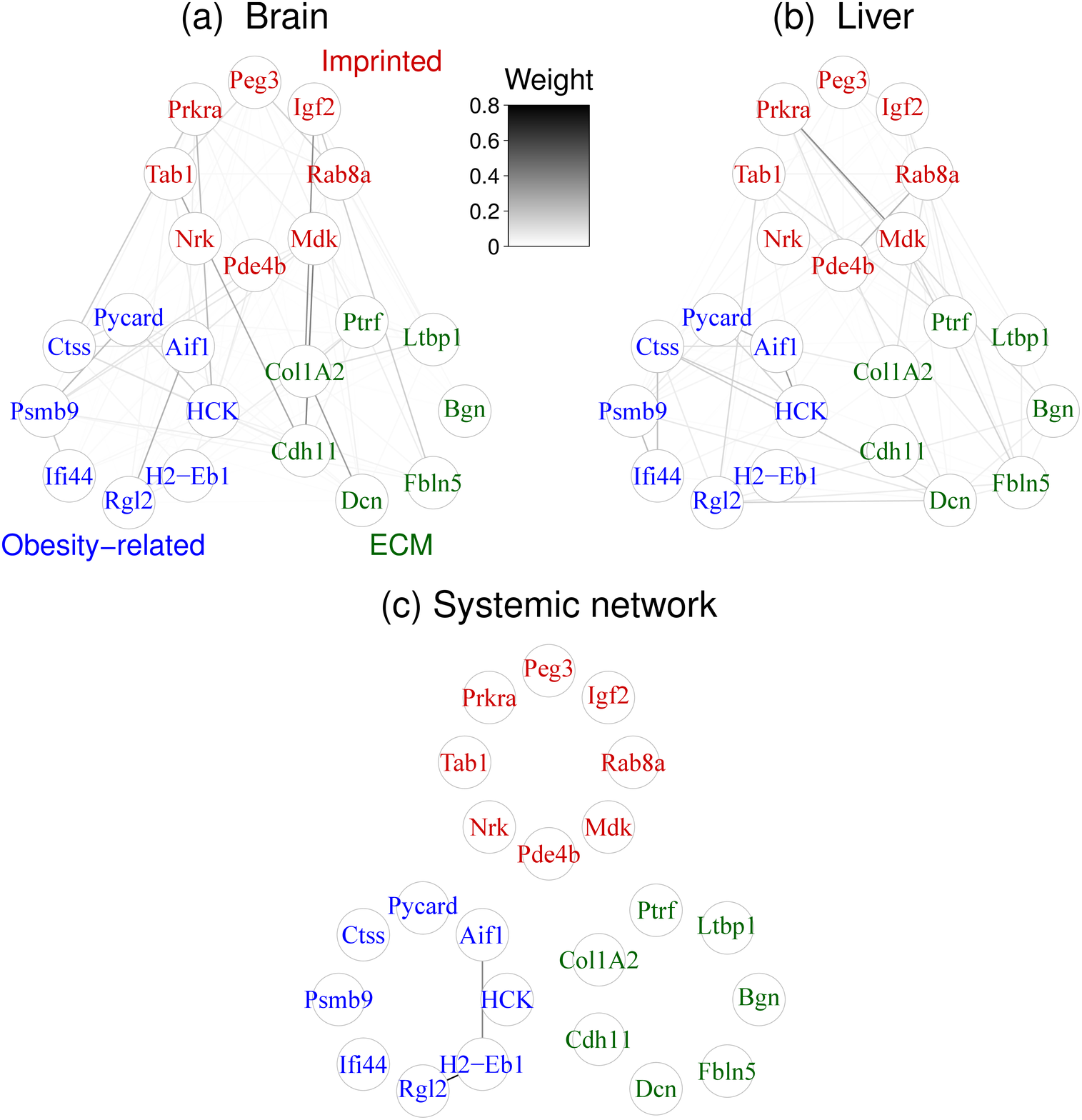}
	\caption{\label{fig:f2smallnet}Topology of gene co-expression networks inferred by the EM method for the data from a population of \FTwo mice with randomly allocated high-fat versus normal gene variants. Panels (a) and (b) display the estimated brain-specific and liver-specific dependence structures. Panel (c) shows the estimated systemic structure describing whole body interactions that simultaneously affect variables in both tissues.}
  \vspace{-0.25in} 
\end{center}
\end{figure}

The \citet{Crowley2014} data comprise expression measurements of $23,000$ transcripts in brain, liver, lung and kidney tissues in $45$ mice arising from three independent reciprocal $\text{F}_1$ crosses. A reciprocal F1 cross between two inbred strains A and B generates two sub-populations: the progeny of strain A females mated to strain B males denoted by AxB, and the progeny of strain B females to strain A males denoted by BxA. Across the two progeny groups, the set of alleles inherited is identical, with each mouse having inheriting half of its alleles from A and the other half from B; but the route through which those alleles were inherited differs, with, for example, AxB offspring inheriting their A alleles only from their fathers and BxA inheriting them only from their mothers. The underlying intervention in a reciprocal cross is therefore not the varying of genetics as such but the varying of parent-of-origin, or epigenetics, and so we might expect some of this epigenetic effect to manifest across all tissues.

We applied our EM method to a normalized subset of the \citet{Crowley2014} data, restricting attention to brain and liver, and removing gross effects of genetic background. Our analysis identified three edges on the systemic network as shown in Fig. \ref{fig:f1smallnet}(c) that include the genes \textit{Igf2}, \textit{Tab1}, \textit{Nrk} and \textit{Pde4b}, all from the imprinting-related set implicated in mediating epigenetic effects.  Thus, the inferred patterns of systemic-level gene relationships in the two studies coincide with the underlying interventions implied by the structure of those studies, with genes affecting body weight in the \citet{Dobrin2009} data and genes affected by parent-of-origin in the \citet{Crowley2014} data.

\begin{figure}[htb]
  \begin{center}
    \includegraphics[width=0.65\textwidth]{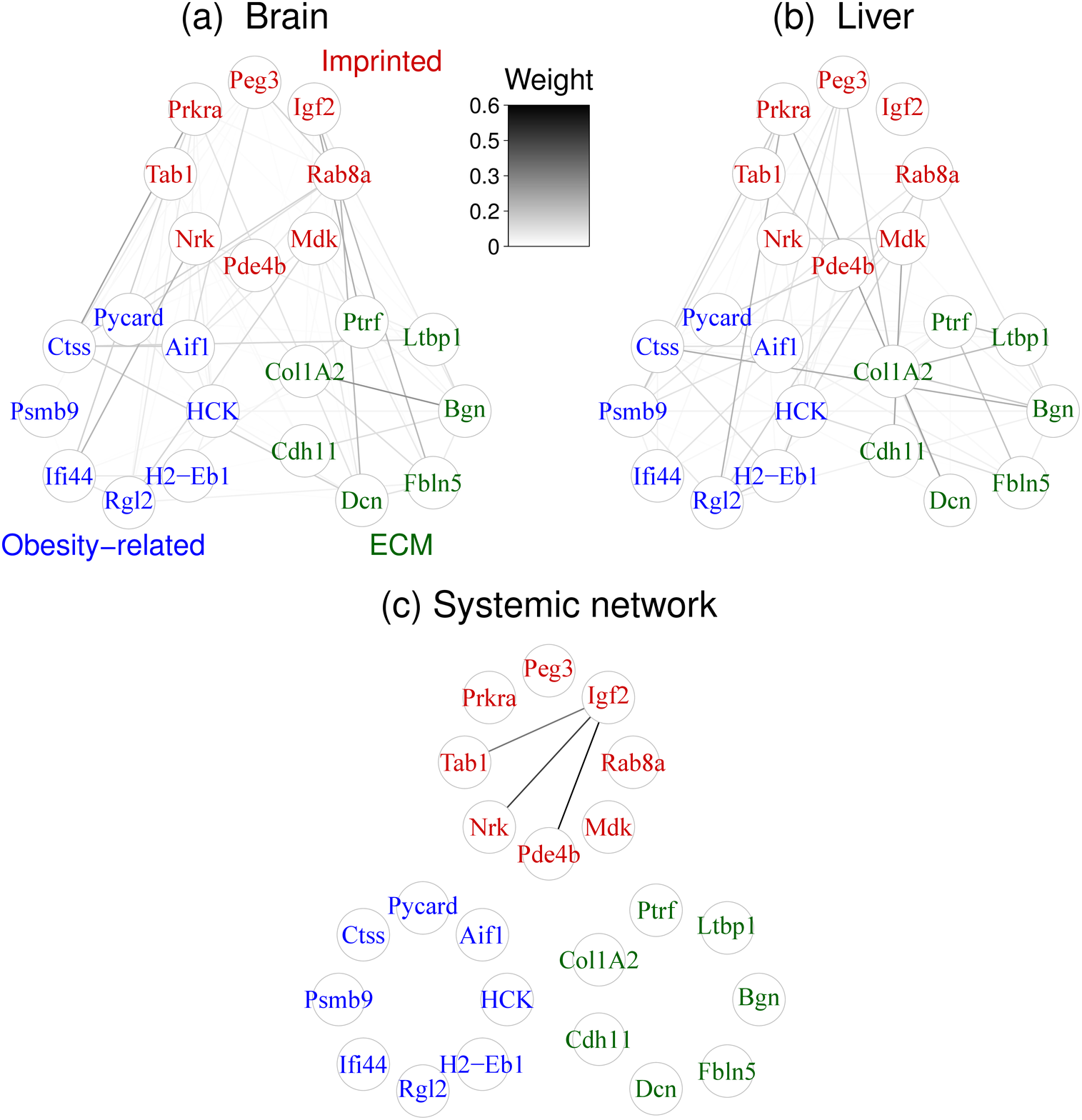}
	\caption{\label{fig:f1smallnet} Topology of gene co-expression networks inferred by the EM method for the data from a population of reciprocal $\text{F}_1$  mice. Panels (a) and (b) display the estimated brain-specific and liver-specific dependence structures. Panel (c) shows the estimated systemic structure describing whole body interactions that simultaneously affect variables in both tissues.} 
  \end{center}
  \vspace{-10pt}
\end{figure}

\begin{figure}[htb]\footnotesize
	\begin{center}
    \includegraphics[width=\textwidth]{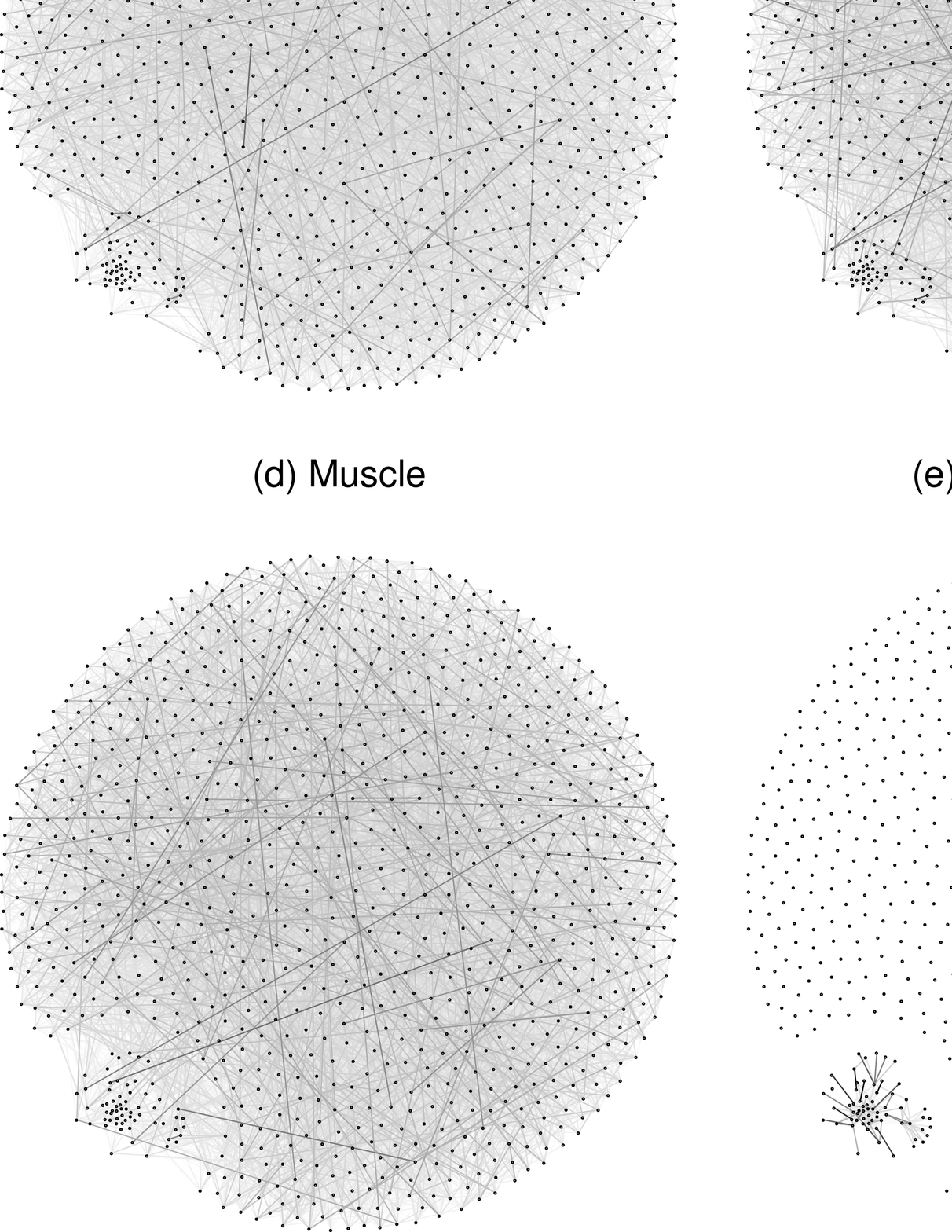}	
	\end{center}
	\caption{\label{fig:f2bignet} Topology of co-expression networks inferred by the EM method applied to measurements of the 1000 genes with highest within-tissue variance in a population of \FTwo mice. Panels (a-d) show category-specific networks estimated for adipose, hypothalamus, liver and muscle tissue. Panel (e) shows the structure of the estimated systemic network, describing across-tissue dependencies, with panel (f) showing a zoomed-in view of the connected subset of nodes in this graph. }
\end{figure}

To demonstrate the use of our method for higher dimensional data, we examined a larger subset of genes from \citet{Dobrin2009}. Selecting the $p = 1,000$ genes that had the largest within-group variance among the four tissues in the \FTwo population, we applied our graphical EM method, using the extended Bayesian information criterion to select the tuning parameters $\lambda_1$ and $\lambda_2$. The existence of a single, non-zero systemic layer for these data was strongly supported by significance testing, as described in the Supplementary Material. The topologies of the estimated tissue-specific and systemic networks are shown in Fig. \ref{fig:f2bignet}(a-d), with a zoomed-in view of the edges of the systemic network shown in Fig. \ref{fig:f2bignet}(f). The systemic network is sparse, with 249 edges connecting 62 of the 1000 genes in Fig. \ref{fig:f2bignet}(e); this sparsity may reflect there being few interactions simultaneously occurring across all tissues in this \FTwo population, with one contributing reason being that some genes are being expressed primarily in one tissue and not others. The systemic network also includes a connection between two genes, \textit{Ifi44} and \textit{H2-Eb1}, that are members of the \textit{Aif1} network of Fig. \ref{fig:f2smallnet}.
To characterize more broadly the genes identified in the systemic network, we conducted an analysis of gene ontology enrichment \citep{Shamir2005} in which  the distribution of gene ontology terms associated with connected genes in the systemic network was contrasted against the background distribution of gene ontology terms in the entire 1000-gene set; this showed that the systemic network is significantly enriched for genes associated with immune and metabolic processes, which accords with recent studies linking obesity to strong negative impacts on immune response to infection \citep{Milner2012,Lumeng2013}.
The original study of \citet{Dobrin2009} also showed that the enrichment of inflammatory response processes in co-expression from liver and adipose, again using unconditional correlations. 
%

\section{Discussion}\label{s:discussion}
{In this paper we consider joint estimation of a two-layer Gaussian graphical model that is different from but related to the single-layer model. In our setting, the single-layer model estimates an aggregate graph $\bO_{Y_k}$ by imposing sparsity on $\bO_{Y_k}$ directly. Our model, by contrast, estimates the two graphical layers that compose the aggregate, namely $\bO_k$ and $\bO_0$, and imposes sparsity on each. This can imply an aggregate graph $\bO_{Y_k}$ that is less sparse; but this is appropriate because in our setting $\bO_{Y_k}$ is a byproduct and, as such, is a secondary subject of inference. Importantly, our two-layer model includes the single-layer model as a special case, since in the absence of an appreciable systemic dependence, when $\Sigma_Z=0$, the two-layer model reduces to a single layer.}

{Our model lends itself to several immediate extensions. First, we currently assume that the systemic graph affects all tissues equally, but, as suggested by one reviewer, we can extend our model to allow the influence of the systemic layer to vary among tissues. For example, since muscle and adipose are both developed from the mesoderm, we might expect them to be more closely related to each other as compared with the pancreas, which is developed from the endoderm. We can accommodate such variation in our model as: 
\begin{equation*}
	Y_{k, i} = X_{k, i} + \alpha_k Z_i \quad (k = 1, \ldots, K; \; i = 1, \ldots, n),
\end{equation*}
where $\alpha_k$ quantifies the level of systemic influence in each tissue $k$. Our EM algorithm can also be modified to calculate $\alpha_k$ and $\Omega_k$. More details can be found in the Supplementary Material. }

Second, we can extend the $\ell_1$ penalized maximum likelihood framework to other nonconvex penalties such as the truncated $\ell_1$-function \citep{Shen2012} and the smoothly clipped absolute deviation penalty \citep{Fan2001}. Furthermore, we believe it would be both practicable and useful to extend these methods beyond the Gaussian assumption \citep[]{Cai2011,Liu2012,Xue2012}.

\section*{Acknowlegements}
The authors thank the editor, the associate editor and two reviewers for their helpful suggestions.
This work was supported in part by the U.S. National Institutes of Health and the National Science Foundation. Yuying Xie is also affiliated with Department of Statistics and Probability at Michigan State University.
Yufeng Liu is affiliated with Department of Genetics and Carolina Center for Genome Sciences, and both he and William Valdar  are also affiliated with Department of Biostatistics and the Lineberger Comprehensive Cancer Center 
at the University of North Carolina.

\begin{appendices}
\section{Derivation of the likelihood}
\label{appendix:likelihood}
For simplicity, we write $\bO$ for $\{\bO_k \}_{k=0}^K$ in the following derivation. To derive the log-likelihood of $y$, which is expressed as
\begin{align}\label{eq:Likelihood2}
	\mathcal{L}(\bO; y) 
	\propto & \; \sum_{k=1}^K \left\{ \log  \det(\Omega_{k})
	- \tr(\hat{\Sigma}_{Y(k,k)}\Omega_k) \right\}
	+ \log  \det(\Omega_{0}) \notag \\
	& \; - \log \det (A) + \sum_{l,m=1}^K 
	\tr \left( \Omega_l \hat{\Sigma}_{Y(l,m)} \Omega_m  A^{-1} \right),
\end{align}
we first state Sylvester's determinant theorem.
\vspace{-0.1 in}
\begin{lemma}[Sylvester theorem]	\label{Lemma.A1}
If $A$, $B$ are matrices of sizes $p \times n$ and $n \times p$, respectively, then
\vspace{-0.04in}
	\begin{equation*}
 		\det(I_p + AB) = \det(I_n + BA),
 \end{equation*}
\vspace{-0.1in}
where $I_a$ is the identity matrix of order a.
\end{lemma}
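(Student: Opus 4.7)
The plan is to prove Sylvester's determinant theorem by the standard block-matrix trick: compute the determinant of a single $(p+n)\times(p+n)$ matrix in two different ways via block LU factorizations. Specifically, I would define
\[
M \;=\; \begin{pmatrix} I_p & -A \\ B & I_n \end{pmatrix},
\]
and show that $\det(M)$ admits two simple closed forms, one in terms of $I_n + BA$ and one in terms of $I_p + AB$, so that the identity falls out by equating them.

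For the first expression, I would verify the factorization
\[
M \;=\; \begin{pmatrix} I_p & 0 \\ B & I_n \end{pmatrix}\begin{pmatrix} I_p & -A \\ 0 & I_n + BA \end{pmatrix}
\]
by direct block multiplication; the only non-trivial entry is the $(2,2)$-block on the right, which yields $-BA + (I_n + BA) = I_n$. The first factor is block lower triangular with identity diagonal blocks, hence has determinant $1$, while the second is block upper triangular with $(1,1)$-block $I_p$ and $(2,2)$-block $I_n + BA$. Hence $\det(M) = \det(I_n + BA)$.

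For the second expression, I would verify the alternative factorization
\[
M \;=\; \begin{pmatrix} I_p & -A \\ 0 & I_n \end{pmatrix}\begin{pmatrix} I_p + AB & 0 \\ B & I_n \end{pmatrix},
\]
whose $(1,1)$-block simplifies as $(I_p + AB) - AB = I_p$. The same block-triangular reasoning gives $\det(M) = \det(I_p + AB)$. Equating the two expressions for $\det(M)$ delivers $\det(I_p + AB) = \det(I_n + BA)$.

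This is a textbook identity and there is no real obstacle: the only thing to check is that the off-diagonal blocks in the two factorizations cancel correctly, which is a routine matrix product. I expect the lemma to be invoked immediately afterward to simplify $\log\det(\Sigma_Y)$ using the structural decomposition $\Sigma_Y = \{_d \Sigma_k\} + J \otimes \Sigma_0$, where the Kronecker-plus-block-diagonal form can be rewritten as an identity plus a low-rank correction of the type $I + AB$, thereby producing the $-\log\det(A)$ term appearing in \eqref{eq:Likelihood2}.
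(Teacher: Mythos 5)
Your proof is correct. The paper does not actually prove this lemma --- it simply states Sylvester's determinant theorem as a known result and immediately applies it to evaluate $\det(\Sigma_Y)$ --- so your block-factorization argument supplies a proof the paper omits. Both factorizations of $\begin{pmatrix} I_p & -A \\ B & I_n \end{pmatrix}$ check out by direct block multiplication, the determinant computations for the block-triangular factors are right, and your remark about how the lemma is then used to produce the $-\log\det(A)$ term in the likelihood matches the paper's derivation exactly.
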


Since $Y$ follows a $Kp-$variate Gaussian distribution with mean ${0}$ and covariance matrix $\Sigma_Y = \{ \Sigma_{Y(l,m)} \}_{1 \leq l,m \leq K}$, we have $ f_Y(s) \propto \exp( s^\T \bO_Y s )$. In addition, we can derive $f_Y(s)$ from the joint probability $f_{Y, Z}(s,t)$ by integrating out $Z$ as follows:
\vspace{-0.04in}
	\begin{align*}
		f_Y(s)= & \; \int_{-\infty}^{\infty} 
		{f_Y(s \mid Z = t)f_Z(t) \mathrm{d}t} \\
		\propto & \; \int_{-\infty}^{\infty} \exp
		\Big[ \sum_{k=1}^K \Big\{ (s_{k} - t)^\T \bO_{k} (s_k - t) \Big\} 
		+ t^\T \bO_0 t \Big] \mathrm{d} t,
	\end{align*}
\vspace{-0.1in}
where $s = (s_1^\T, \ldots, s_K^\T)^\T$. We then expand the formula and have 
\begin{align*}
	f_Y(s) & = \exp \Big\{ \sum_{k=1}^K (s_k^\T \bO_k s_k) \Big\}
	\int_{-\infty}^{\infty}{\exp \Big[ t^\T \Big(\sum_{k = 0}^K \bO_{k} \Big) t
	-2 \Big\{ \sum_{k=1}^K (s_k^\T \bO_{k}) \Big\} t \Big] \mathrm{d} t} \\ 
	& = \exp \Big\{ \sum_{k=1}^K (s_k^\T \bO_k s_k) \Big\}
	\int_{-\infty}^{\infty}{\exp\Big(t^\T A t - 2 c^\T t \Big) \mathrm{d} t} \\
	& = \exp \Big\{ \sum_{k=1}^K (s_k^\T \bO_k s_k) \Big\}           
	\exp(-c^\T A^{-1} c)  
	\int_{-\infty}^{\infty}{\exp\Big\{( A t- c)^\T A^{-1}(At - c) \Big\} \mathrm{d}t} \\
	& = \exp \Big\{ \sum_{k=1}^K (s_k^\T \bO_k s_k) \Big\} \exp(-c^\T A^{-1} c) 
	\int_{-\infty}^{\infty}{\exp\Big\{(t - A^{-1} c)^\T A (t - A^{-1}c) \Big\} \mathrm{d}t} \\
	&\propto \exp \Big[\sum_{k=1}^K (s_k^\T \bO_{k} s_k)
	- \big\{ \sum_{k=1}^K (s_k^\T \bO_{k}) \big\} A^{-1}
	\big\{ \sum_{k=1}^K (\bO_{k} s_k) \big\} \Big] \\[5pt]
	& = \exp \Big\{ s^\T \Big(\{_d \bO_{k} \}_{1 \leq k \leq K} 
	- \{\bO_{l}A^{-1}\bO_{k} \}_{1 \leq l, k \leq K} \Big) s \Big\} \\[8pt]
	& = \exp ( s^{\T} \bO_Y s ) \, ,
\end{align*}
 where $ A = \sum_{k = 0}^{K} \bO_{k} $ and $c = \sum_{k=1}^K  \bO_{k} s_k $. Let $\{\bO_{l} A^{-1} \bO_{k} \}_{1 \leq l, k \leq K}$ as a block matrix in which the $(l, k )$th block is $\bO_{l} A^{-1} \bO_{k}$. Then we have $ Y \sim \mathcal{N}(0,\left[\{_d \bO_{k} \}_{ k=1}^K - \{\bO_{l} A^{-1} \bO_{k} \}_{1 \leq l, k \leq K}\right]^{-1})$ and $\bO_Y = \{_d \bO_{k} \}_{1 \leq k \leq K} - \{\bO_{l} A^{-1} \bO_{k} \}_{1 \leq l, k \leq K}$.
 
Next, we derive the expression for $\det(\Omega_Y)$. We know that
\begin{align*}
	\Sigma_Y = & \; \{_d \, \Sigma_{k} \}_{1 \leq k \leq K} + 
	\begin{pmatrix}
		I_p  \\
  		\vdots  \\
  		I_p
 	\end{pmatrix}
 	\begin{pmatrix}
  		\Sigma_0, & \ldots &, \Sigma_0  
	\end{pmatrix}  \\
	= & \; \{_d  \  \Sigma_{k} \}_{1 \leq k \leq K} 
	\left\{ I_{Kp} + \{_d  \  \bO_{k} \}_{1 \leq k \leq K}    
	\begin{pmatrix}
   		I_p  \\
  		\vdots  \\
  		I_p
 	\end{pmatrix}
 	\begin{pmatrix}
  		\Sigma_0, & \ldots &, \Sigma_0
	\end{pmatrix} \right\}, 
\end{align*}
 where $I_p$ and $I_{Kp}$ are $p \times p$ and $Kp \times Kp$  identity matrices, respectively. Using Lemma \ref{Lemma.A1}, 
\begin{align*}
	\det(\Sigma_Y) 
	& = \Big\{ \displaystyle\prod\limits_{k = 1}^K \det( \Sigma_{k}) \Big\} 
	\det \left\{ I_p +
	\begin{pmatrix}
  		\Sigma_0, & \ldots &, \Sigma_0  
  	\end{pmatrix}
  	\{_d \bO_{k} \}_{1 \leq k \leq K} 
  	\begin{pmatrix}
 		I_p  \\
 		\vdots  \\
		I_p
	\end{pmatrix}
	\right\} \\
	& = \Big\{ \prod_{k = 1}^K \det(\Sigma_{k}) \Big\}
	\det \Big(I_p + \Sigma_{0}\sum\limits_{k = 1}^K \bO_{k} \Big) \\
   & = \Big\{ \prod_{k = 1}^K \det(\Sigma_{k}) \Big\}
   \det \Big( \Sigma_{0} \bO_{0} 
   + \Sigma_{0} \sum_{k = 1}^K \bO_{k} \Big)  \\
   & = \Big\{ \prod_{k = 0}^{K }\det(\Sigma_{k})\Big\} \det(A).
\end{align*}
\vspace{-0.2in}
Therefore, we have
\begin{align*}
	\log \det(\bO_Y)  = - \log  \det(\Sigma_Y) 
	= - \log \det (A) +  \sum_{k = 0}^{K} \log \det(\bO_{k}).
\end{align*}
\vspace{-0.1in}
Combining the above results, the log-likelihood can be written as follows:
\begin{align*}
	\mathcal{L}( \bO_Y; y)
	= & \; -\frac{npK}{2} \log (2\pi)  
	+ \frac{n}{2} \log  \det(\bO_Y)
	- \frac{n}{2} \tr\big( \hat{\Sigma}_Y \bO_Y \big) \\
	= & \; - \frac{npK}{2} \log (2\pi) 
	+ \frac{n}{2} \Big\{ - \log \det (A)
	+  \sum_{k = 0}^{K} \log  \det(\bO_{k})  \Big\} \\
	& \; - \frac{n}{2} \tr\Big(\hat{\Sigma}_Y
	\Big[ \{_d \bO_{k} \}_{1 \leq k \leq K} 
	- \{\bO_{l} A^{-1} \bO_{k} \}_{1 \leq l, k \leq K} \Big] 
	\Big) \\ 
	= & \; -\frac{npK}{2} \log (2\pi) 
	+ \frac{n}{2} \Big\{ - \log \det(A) + \log \det(\bO_{0})
	+ \sum_{k = 1}^{K} \log  \det(\bO_{k}) \Big\} \\
	& \; - \frac{n}{2} \tr\Big(\hat{\Sigma}_Y
	\{_d \bO_{k} \}_{1 \leq k \leq K} 
	- \hat{\Sigma}_Y \{\bO_{l} A^{-1} \bO_{k} \}_{1 \leq l, k \leq K}  
	\Big) \\ 
	= & \; -\frac{npK}{2} \log (2\pi)  
	+ \frac{n}{2} \Big\{ \log \det(\bO_{0}) - \log \det (A) \Big\}
	+ \frac{n}{2}  \sum_{k = 1}^{K} \log \det(\bO_{k})
	\\
	& \; - \frac{n}{2} 
	 \sum_{k = 1}^{K} \tr \Big( \hat{\Sigma}_{Y(k, k)} \bO_{k} \Big)
	+ \frac{n}{2} \tr\Big\{\hat{\Sigma}_Y 
	\begin{pmatrix}
  		\Omega_{1}, & \ldots &, \Omega_{K}  
  	\end{pmatrix}^\T
  	A^{-1} 
	\begin{pmatrix}
  		\Omega_{1}, & \ldots &, \Omega_{K}  
  	\end{pmatrix}
	\Big\}\\ 
	= & \; -\frac{npK}{2} \log (2\pi)  
	+ \frac{n}{2} \tr\Big\{
	\begin{pmatrix}
  		\Omega_1, & \ldots&, \Omega_k  
  	\end{pmatrix}
	\hat{\Sigma}_Y
	\begin{pmatrix}
  		\Omega_1, & \ldots &, \Omega_k
  	\end{pmatrix}^\T
  	A^{-1} 
	\Big\} \\
	& \; + \frac{n}{2} \big\{ \log \det(\bO_{0})- \log \det(A) \big\}
	 + \frac{n}{2} \sum_{k = 1}^{K} \Big\{ \log \det(\bO_{k})
	- \tr \Big( \hat{\Sigma}_{Y(k, k)} \bO_{k} \Big) \Big\} \\ 
	= & \; -\frac{npK}{2} \log (2\pi) 
	+ \frac{n}{2} \sum_{l,m=1}^K \tr 
	\left( \bO_l \hat{\Sigma}_{Y(l,m)} \bO_m A^{-1} \right)  
	+ \frac{n}{2} \log \det(\bO_{0}) \\
	& - \frac{n}{2}\log  \det(A) + \frac{n}{2}\sum\limits_{k=1}^K 
	\Big\{ \log \det(\bO_{k}) - \tr(\hat{\Sigma}_{Y(k,k)}\bO_k) \Big\}.
\end{align*}


\section{Proof of Identifiability }\label{appendix:identifiable}
To demonstrate identifiability, it is sufficient to show the parameters $\bO_k$  $(k = 0, \ldots, K)$ are identifiable. To that end, we decompose $Y_k$ in two different ways as follows:
\begin{align}
	Y_k = X_k + Z = X_k - U + Z + U = X_k^* + Z^* \quad  (k = 1, ..., K), \notag
 \end{align}
where $U$ is a $p$-dimensional of random vector. With $ U \neq 0$, we have nonunique decompositions of $Y_k$. Under the model assumption, the resulting $X_k^*$ and $Z^*$  satisfy
\begin{align}
	&\cov(X^*_l, X^*_m) = 0 \quad (1 \leq l, m \leq K);  \label{eq:CovXstarX}\\
	& \cov(X^*_l, Z^*) = 0   \quad (l = 1, \ldots, K).  \label{eq:CovXstarZstar}
\end{align}
Expanding \eqref{eq:CovXstarX}, we have
\begin{align*}
	\cov(X^*_l, X^*_m) = & \;\cov(X_l, X_m) 
	+ \text{var}(U)  - \cov(X_l, U) - \cov(X_m, U)  \\
	= & \; 0 + \text{var}(U)  - \cov(X_i, U) - \cov(X_j, U)\\
	= & \; 0. 
\end{align*}
It follows that
\begin{align}
	\text{var}(U) = \cov(X_l, U) + \cov(X_m, U).\label{eq:ZX}
\end{align}
Similarly, from \eqref{eq:CovXstarZstar} we have
\begin{align*}
	\cov(X_l^*, Z^*) 
	& = \cov(X_l - U, Z + U)\\
    & = \cov(X_k, Z) - \text{var}(U) - \cov(U, Z) + \cov(U, X_l) \\
	& = 0 - \text{var}(U) - \cov(U, Z) + \cov(U, X_l) = 0, \notag
\end{align*}
which implies that
\begin{align}
	\text{var}(U)= - \cov(U, Z) 
	+ \cov(U, X_l). \label{eq:UIJ}
\end{align}
Since \eqref{eq:UIJ} holds for any $l$, we have
\begin{align}
	\cov(U, X_l) = \cov(U, X_m)  \quad (1 \leq l, m \leq K). \label{eq:XX}
\end{align}
Combining  \eqref{eq:ZX}--\eqref{eq:XX} gives
\begin{align}
	\cov(U, X_k) = & \; - \cov(U, Z), \label{eq:CovUX} \\
	\text{var}(U) = & \; -2 \, \cov(U, Z) \notag\\
	 = &\; 2 \,\cov(U, X_l) \quad (1 \leq l \leq K).  \label{eq:VarU}
\end{align}
By \eqref{eq:CovUX} and \eqref{eq:VarU}, we can further show that
\begin{align*}
	\text{var}(X_l^*) 
	= & \;  \text{var}(X_l - U) \\
	= & \; \text{var}(X_l) + \text{var}(U) - 2 \, \cov(U, X_l) \\
	= & \; \text{var}(X_l); \\
	\text{var}(Z^*) = & \; \text{var}(Z + U) \\
	= & \; \text{var}(Z) + \text{var}(U) - 2 \, \cov(U, Z) \\
	= & \; \text{var}(Z).
\end{align*}
Therefore, the resulting $\text{var}(Z^*)$ and $\text{var}(X_l^*)$ remain the same for different decomposition of  $Y_{l}$. Consequently, our model is identifiable.


\section{Proof of Proposition \ref{prop:pro1}}\label{appendix:Convergence}

We divide the proof into two parts. For the first part, we prove that the penalized log-likelihood is bounded above; and for the second part, we show that the penalized log-likelihood does not decrease for each iteration of the graphic EM algorithm. 

From \eqref{eq:Likelihood2}, the log-likelihood is
\begin{align*}
	\mathcal{L}(\bO; y )
	\propto & \; \sum_{k=1}^K 
	\Big\{ \log  \det(\bO_{k})
	- \tr(\hat{\Sigma}_{Y(k,k)}\bO_k) \Big\} 
	+ \log \det(\bO_{0}) \notag \\
	& \; - \log \det (A) +  \sum\limits_{l,m=1}^K \tr 
	\Big( \bO_l \hat{\Sigma}_{Y(l,m)} \bO_m  A^{-1} \Big) \\
	= & \; \sum_{k=0}^K  \big\{ \log \det(\bO_{k}) \big\} 
	- \log\det (A)
	- \sum_{k = 1}^K \tr\big(\hat{\Sigma}_{Y(k, k)} \bO_{k} \big)\\
	& \; + \sum\limits_{l,m=1}^K \tr 
	\Big( \bO_l \hat{\Sigma}_{Y(l,m)} \bO_m  A^{-1} \Big).
\end{align*}

For $\lambda_1 >0$ and $\lambda_2 > 0$, by Lagrangian duality, the problem \eqref{eq:Penlikelihood} is equivalent to the following constrained optimization problem:
\begin{align}
	\max & \Bigg[  \sum_{k=0}^K \big\{ \log \det(\bO_{k}) \big\}
	- \log \det(A)
	- \sum_{k = 1}^K \big\{ \tr\big(\hat{\Sigma}_{Y(k, k)} \bO_{k} \big) \big\}  \notag \\
	& + \sum\limits_{l,m=1}^K \tr 
	\left( \bO_l \hat{\Sigma}_{Y(l,m)} \bO_m  A^{-1} \right) \Bigg], \label{eq:bdd} 
	\end{align}
subject to $\bO_k \succ 0$, $|\bO_k^-|_1 \leq C_{\lambda_1, \lambda_2}$  $(k = 0,\ldots, K)$ and $C_{\lambda_1, \lambda_2} < \infty$. Here $\bO^-$ represents the off-diagonal entries of $\bO$, and $C_{\lambda_1, \lambda_2}$ is a constant depending on the values of $\lambda_1$ and $\lambda_2$. Since  $|\bO^-_k|_1$ is bounded, the potential problem comes from the behavior of the diagonal entries which could grow to infinity. Because of the positive-definite requirement, the diagonal entries of $\{ \bO_k \}_{k=0}^K$ are positive. After some algebra, \eqref{eq:bdd} becomes
\begin{align} 	
	& \;\sum_{k=0}^K  \Big\{ \log \det(\bO_{k}) \Big\} 
	- \log \det(A)
	- \sum_{k = 1}^K \tr\big(\hat{\Sigma}_{Y(k, k)} A A^{-1} \bO_{k}
	\big)  \notag \\
	& \; + \sum_{l,m=1}^K \tr 
	\left( \hat{\Sigma}_{Y(l,m)} \bO_m  A^{-1} \bO_l \right) \notag \\
	= & \; \sum_{k=0}^K \Big\{ \log \det(\bO_{k}) \Big\}	- \log\det (A) 
	- \sum_{k = 1}^K 
	 \tr(\hat{\Sigma}_{Y(k, k)} \bO_0 A^{-1} \bO_k) \notag \\
	& \; - \sum_{l \neq m}  \tr \big\{ (\hat{\Sigma}_{Y(m, m)} - \hat{\Sigma}_{Y(m, l)} )
	\bO_l A^{-1} \bO_m \big\} \notag \\
	= & \; \sum_{k=0}^K \Big\{ \log \det(\bO_{k}) \Big\}
	- \log \det (A)	- \sum_{k = 1}^K 
	\tr(\hat{\Sigma}_{Y(k, k)}\bO_0 A^{-1} \bO_k) \notag \\
	& \; - \sum_{l > m \geq 1}  \tr( M_{(l,m)} \bO_l A^{-1} \bO_m) 
	, \label{eq:target}
\end{align}
where $M_{(l,m)} = \hat{\Sigma}_{Y(l, l)} + \hat{\Sigma}_{Y(m, m)} - 2 \hat{\Sigma}_{Y(m, l)}$. The equality in \eqref{eq:target} comes from the fact that
\begin{align*}
	\tr \big\{ (\hat{\Sigma}_{Y(m, m)} - \hat{\Sigma}_{Y(m, l)})
	\bO_l A^{-1} \bO_m \big\} 
	 = & \; \tr \big[ \big\{ (\hat{\Sigma}_{Y(m, m)} - \hat{\Sigma}_{Y(m, l)} )
	\bO_l A^{-1} \bO_m \big\}^\T \big]\\
	 = & \; \tr \big\{ \bO_m A^{-1} \bO_l (\hat{\Sigma}_{Y(m, m)} - \hat{\Sigma}_{Y(m, l)}^\T )\big\} \\
	= & \;  \tr \big\{ (\hat{\Sigma}_{Y(m, m)} - \hat{\Sigma}_{Y(l, m)} ) \bO_m A^{-1} \bO_l \big\}.
\end{align*}

Since $\bO_k$ is positive definite and $|\bO_k^-|_1$ is bounded, we decompose them into $\bO_k = B_k + D_k$. Let $B_k$ be a matrix with bounded diagonal entries and satisfy $0 \leq \tau_3 \leq \phi_{min}( B_k) \leq \phi_{max}( B_k) \leq \tau_4$, and $D_k$ be a diagonal matrix whose diagonal entries are greater than some positive number $\v$ and possibly grow to infinity, namely, $ 0 \leq \| D_j^{-1} \| \leq 1/{\v}$. Let $B_A = \sum_{k = 0}^K B_k$ and $D_A = \sum_{k = 0}^K D_k$. By Weyl's inequality, we have 
\begin{align*} 
	(K+ 1)\tau_3 \leq \phi_{min}(B_A) \leq 
	& \; \phi_{max}(B_A) \leq (K + 1) \tau_4, \\
	0 < & \; (K+ 1)\v \leq \phi_{min}(D_A), \\
	\phi_{max}(D_A^{-1}) \leq & \; \frac{1}{(K+ 1)\v}.
\end{align*}

Now we consider four different cases:

Case One: $| D_k |_1$  $(k = 0, \ldots, K)$ is bounded.

 In this case, $\det(\bO_k)$ and $ \| \bO_k \|_\infty$ $(k =  0, \ldots, K)$ are all bounded above. Thus, the function in \eqref{eq:target} is also bounded above.

Case Two: All $| D_k |_1$ are bounded except $D_l$.

In this case,  we only need to control the behavior of the following terms
\begin{align*}
	& \; \log \det(\bO_{l}) - \log  \det(A)  
	- \sum_{K \geq k > l} 
 	\tr( M_{(k, l)} \bO_k A^{-1} \bO_l ) 
	- \sum_{l > m \geq 1} 
	\tr( M_{(l, m)} \bO_l A^{-1} \bO_m )  \\ 
	& \; - \tr(\hat{\Sigma}_{Y(l, l)}\bO_0 A^{-1} \bO_l) \\ 
	= & \; \Big\{ \log \det(\bO_{l}) - \log  \det(A) \Big\} 
	- \sum_{K \geq k > l} \Big[ 
	\tr\big\{ M_{(k, l)} \bO_k (B_A + D_A)^{-1} (B_l + D_l) \big\} \Big] \\
 	& \; - \sum_{l > m\geq 1} \Big[ 
	\tr\big\{ M_{(l, m)} (B_l + D_l ) (B_A + D_A)^{-1} \bO_m \big\} \Big]
	- \tr\big\{ \hat{\Sigma}_{Y(l, l)}\bO_0 (B_A + D_A)^{-1} (B_l + D_l) \big\} \\
 	= & \; \text{I} + \text{II} + \text{III} + \text{IV}.  
\end{align*}
We first want to bound Term I: $ \log \det(\bO_{l}) - \log  \det(A)$. 
Since $A = \sum_{k = 0}^K \bO_k$ and all $\bO_k$ are positive definite, by the  Minkowski determinant theorem, it follows that 
\begin{align*}
	\det(A) \geq & \; \Big\{ \det \Big(\sum_{k \neq l} \bO_k \Big)^{1/p} 
	+ \det(\bO_l)^{1/p} \Big\}^p \\
	\geq & \; \{ \det(\bO_l)^{1/p} \}^p \\ 
	= & \; \det(\bO_l).
\end{align*}
Therefore,  we have $\text{term I}  < 0$. 

To bound Terms II and III, using the Woodbury matrix identity, we have
\begin{align} 
	\bO_l A^{-1} = & \; (B_l + D_l) (B_A + D_A  )^{-1} \\
	 = & \; (B_l + D_l )
	\{ D_A^{-1} - D_A^{-1} (D_A^{-1} + B_A^{-1})^{-1} D_A^{-1} \} \notag \\
	= & \;  B_l D_A^{-1} + D_l D_A^{-1} 
	- B_l D_A^{-1} (D_A^{-1} + B_A^{-1})^{-1} D_A
	^{-1} \notag \\
	& \;  - D_l D_A^{-1} (D_A^{-1} + B_A^{-1})^{-1} D_A^{-1},
	 \label{eq:target2} \\
	A^{-1} \bO_l = & \; (B_A + D_A )^{-1}(B_l + D_l) \notag\\
	 = & \; \{D_A^{-1} - D_A^{-1} (D_A^{-1} 
	+ B_A^{-1})^{-1} D_A^{-1} \}(B_l + D_l ) \notag \\
	= & \; D_A^{-1} B_l +  D_A^{-1} D_l 
	- D_A^{-1} (D_A^{-1} + B_A^{-1})^{-1} D_A^{-1}B_l \notag \\
	& \; - D_A^{-1} (D_A^{-1} + B_A^{-1})^{-1} D_A^{-1} D_l. \label{eq:target2a}
\end{align}
We want to bound the spectral normal of \eqref{eq:target2} and \eqref{eq:target2a}. Since 
\begin{align*}
	D_A^{-1} = \Big\{_d \Big(\sum_{k = 0}^K D_{k(i,i)}\Big)^{-1} \Big\},
\end{align*}
we first show
\begin{align} 
	&\|D_A^{-1} \| = \Big\| \Big\{_d \frac{1}{\sum_{k = 0}^K D_{k(i,i)}} \Big\} \Big\|
	\leq \frac{1} {(K + 1)\v}, \label{eq:bound1}\\
	&\|B_l D_A^{-1}\| \leq \|B_l\| \;  \|D_A^{-1}\|
	\leq \frac{ \tau_4} {(K + 1)\v}, \label{eq:bound2}\\
	&\|D_A^{-1} B_l\| \leq \|B_l\| \; \|D_A^{-1}\|
	\leq \frac{ \tau_4} {(K + 1)\v}, \label{eq:bound3}\\
	& \|D_l D_A^{-1}\| = \| D_A^{-1} D_l \|
	 = \Big\| \Big\{_d \frac{D_{l(i,i)}}{ D_{A(i,i)}} \Big\} \Big\| 
	= \Big\| \Big\{_d \frac{D_{l(i, i)}}{\sum_{k =0}^K D_{k(i,i)}}
	 \Big\} \Big\| \leq 1.\label{eq:bound4} 
\end{align}
By the Weyl's inequality, we have
\begin{align}
	& \| ( B_A^{-1} + D_A^{-1})^{-1}\| 	= \frac{1}{\phi_{\text{min}} ( B_A^{-1} + D_A^{-1})}
	\leq \frac{1}{\phi_{\text{min}} ( B_A^{-1}) +\phi_{\text{min}}( D_A^{-1})} \notag \\
	&\; \; \quad \quad \quad \quad \quad \quad \quad
	\leq \frac{1}{ \phi_{\text{min}}(B_A^{-1})} 
	= \phi_{\text{max} }(B_A) \leq (K + 1)\tau_4, \label{eq:bound5}\\
	& \|B_l D_A^{-1} (B_A^{-1} + D_A^{-1})^{-1} D_A^{-1} \|
	\leq \| B_l \| \; \| D_A^{-1} \|^2 \; \|(B_A^{-1} + D_A^{-1})^{-1} \|  
	\leq \frac{\tau_4^2}{(K + 1) \v^2}, \label{eq:bound6} \\
	& \|D_A^{-1} (B_A^{-1} + D_A^{-1})^{-1} D_A^{-1} B_l \|
	\leq \| B_l \| \; \| D_A^{-1} \|^2 \; \|(B_A^{-1} + D_A^{-1})^{-1} \|  
	\leq \frac{\tau_4^2}{(K + 1) \v^2}, \label{eq:bound7}\\
	&\| D_l D_A^{-1} (B_A^{-1} + D_A^{-1})^{-1} D_A^{-1} \|
	\leq \|D_l D_A^{-1}\| \; \|(B_A^{-1} + D_A^{-1})^{-1} \| \; \| D_A^{-1} \|
	\leq  \frac{\tau_4}{\v}, \label{eq:bound8}\\
	&\| D_A^{-1} (B_A^{-1} + D_A^{-1})^{-1} D_A^{-1} D_l\|
	\leq \|D_A^{-1} D_l\| \; \|(B_A^{-1} + D_A^{-1})^{-1} \| \; \| D_A^{-1} \|
	\leq  \frac{\tau_4}{\v}. \label{eq:bound9}
\end{align}

Combining \eqref{eq:bound1}--\eqref{eq:bound9}, the spectral norms of \eqref{eq:target2} and \eqref{eq:target2a} are bounded above as 
\begin{align}
	\| \bO_l A^{-1} & \| = \|(B_l + D_l) (B_A + D_A  )^{-1} \| \notag \\
	 & = \| B_l D_A^{-1} + D_l D_A^{-1} 
	- B_l D_A^{-1} (D_A^{-1} + B_A^{-1})^{-1} D_A^{-1}
	- D_l D_A^{-1} (D_A^{-1} + B_A^{-1})^{-1} D_A^{-1} \| \notag \\
	& \leq \| B_l D_A^{-1} \| + \| D_l D_A^{-1}\| 
	+ \| B_l D_A^{-1} (D_A^{-1} + B_A^{-1})^{-1} D_A^{-1}\| \notag \\
	& \quad + \| D_l D_A^{-1} (D_A^{-1} + B_A^{-1})^{-1} D_A^{-1} \| \notag \\
	& \leq \frac{ \tau_4} {(K + 1)\v} + 1 +  \frac{\tau_4^2}{(K + 1) \v^2}
	+  \frac{\tau_4}{\v} \notag \\
	& = \frac{(K + 2)\v \tau_4 + (K + 1) \v^2 + \tau_4^2}
	{(K + 1) \v^2}
	< \infty; \label{eq:boundOA1} \\
	\|A^{-1} \bO_l & \| = \| (B_A + D_A  )^{-1}(B_l + D_l) \| \notag \\
	 & = \| D_A^{-1}B_l + D_A^{-1} D_l 
	-  D_A^{-1} (D_A^{-1} + B_A^{-1})^{-1} D_A^{-1}B_l
	- D_A^{-1} (D_A^{-1} + B_A^{-1})^{-1} D_A^{-1} D_l \| \notag\\
	& \leq \| D_A^{-1} B_l\| + \| D_A^{-1} D_l\| 
	+ \| D_A^{-1}(D_A^{-1} + B_A^{-1})^{-1} D_A^{-1} B_l\| \notag \\
	& \quad + \| D_A^{-1} (D_A^{-1} + B_A^{-1})^{-1} D_A^{-1} D_l\| \notag \\
	& \leq \frac{ \tau_4} {(K + 1)\v} + 1 +  \frac{\tau_4^2}{(K + 1) \v^2}
	+  \frac{\tau_4}{\v} \notag \\
	& = \frac{(K + 2)\v \tau_4 + (K + 1) \v^2 + \tau_4^2}{(K + 1) \v^2}
	< \infty. \label{eq:boundOA2}
\end{align}

Since $M_{(l,k)}$ and $M_{(k,l)}$ only depend on the value of the sample covariance $\hat{\Sigma}_Y$, they are bounded above for any $k \neq l$. Based on the assumption that $| \bO_k |_1$ is bounded above for any $k \neq l$, we can bound $ \| \bO_k \|$ using the fact $ \| \bO_k \| < p \| \bO_k \|_\infty < p | \bO_k |_1 < \infty $. Therefore, 
\begin{align}
	\text{II} = &\; \sum_{K \geq k \geq l} 
	 -\tr\big( M_{(k, l)} \bO_k A^{-1} \bO_l \big) \notag \\ 
	= & \; \sum_{K \geq k \geq l} \sum_{j = 1}^p  
	\Big\{ - \big( M_{(k, l)} \bO_k 
	A^{-1} \bO_l \big)_{(j,j)} \Big\}  \notag\\
	\leq & \; \sum_{K \geq k \geq l} 
	\big( p \| M_{(k, l)} \bO_k A^{-1} \bO_l \| \big) \label{eq:caseII1} \\
	\leq & \; \sum_{K \geq k \geq l} 
	\Big( p \| M_{(l, k)}\| \; \| \bO_k\| \; \| A^{-1} \bO_l \| \Big) 
	< \infty, \notag
\end{align}
where the inequality of \eqref{eq:caseII1} is due to Lemma \ref{Lemma.A5}. Similarly, the term III is also bounded above. 

Since $\| \bO_0 \| $ is bounded by assumption, we can bound term IV as follows: 
  \begin{align*}
	\text{IV} = & \; -\tr \big( \hat{\Sigma}_{Y(l, l)} 
	\bO_0 A^{-1} \bO_l \big) \notag \\ 
	= & \;\sum_{j = 1}^p \Big\{ - \big( \hat{\Sigma}_{Y(l, l)} \bO_0 
	A^{-1} \bO_l \big)_{(j,j)} \Big\}   \\
	\leq & \; p\| \hat{\Sigma}_{Y(l, l)} \bO_0 A^{-1} \bO_l \| \\
	\leq & \;  p\|\hat{\Sigma}_{Y(l, l)} \| \; 
	\| A^{-1}\bO_l \| \; \| \bO_0 \| < \infty. 
\end{align*}
Therefore, the log-likelihood in \eqref{eq:target} is bounded above in this case. 

Case Three: All $| D_k |_1$ are bounded except $D_0$.

In this case, we only need to control the behaviour of 
\[
 \log\det(\bO_0) - \log \det(A)
 - \sum_{k = 1}^K  \tr (\hat{\Sigma}_{Y(k,k)} \bO_0 A^{-1}\bO_k) .
\]
Following the same argument as \eqref{eq:boundOA1}, we have $\|\bO_0 A^{-1} \| < \infty$ and 
\begin{align*}
	-\sum_{k = 1}^K \big\{ \tr (\hat{\Sigma}_{Y(k,k)} \bO_0 A^{-1}\bO_k) \big\} 
	= & \; \sum_{k = 1}^K\sum_{j = 1}^p 
	\Big\{ -\big(\hat{\Sigma}_{Y(k,k)} \bO_0 A^{-1}\bO_k \big)_{(j,j)} \Big\}  \\
	\leq &\; \sum_{k = 1}^K \big( p\| \hat{\Sigma}_{Y(k, k)} \bO_0 A^{-1} \bO_k \| \big) \\
	\leq & \; \sum_{k = 1}^K \big( p \|\hat{\Sigma}_{Y(k, k)} \| \;
	 \|\bO_0 A^{-1} \| \; \| \bO_k \| \big) 
	 \leq  \infty.
\end{align*}
Combining with the fact that $\log \det(\bO_{0}) - \log \det(A) < 0$, we prove that the log-likelihood in \eqref{eq:target} is also bounded in this case.

Case Four: $|\bO_r|_1$ and $|\bO_s|_1$ are not bounded. Namely, $|D_r |_1$ and $|D_s|_1$ have the same rate going to infinity. 

By the Hadamard's inequality, it follows 
\[ 
\sum_{i = 1}^p ( \log  \bO_{k(i,i)}) \geq \log \det(\bO_k).
\] Also, since $ \bO_{k(i,i)} = B_{k(i,i)} + D_{k(i,i)}$ with  bounded $B_{k(i,i)}$,  $\sum_{i = 1}^p (\log \bO_{k(i,i)} )$ has the same rate going to infinity as $\sum_{i = 1}^p (\log  D_{k(i,i)} )$. Then the order of log-likelihood in \eqref{eq:target} is equivalent to the order of
\begin{align}
	& \sum_{k=0}^K \sum_{i=1}^p  ( \log D_{k(i,i)}) 
	 - \log \det(A)
	- \sum_{l >  m \geq 1} \big\{ \tr( M_{(l,m)} \bO_l A^{-1} \bO_m) \big\} \notag \\
	& - \sum_{k=1}^K \big\{ \tr(\hat{ \Sigma}_{Y(k, k)}\bO_0 A^{-1} 
	\bO_k)\big\}. \label{eq:target3}
\end{align}
By the Minkowski determinant theorem, we have
\begin{align*}
	\det(A) & =  \det(B_A + D_A) \geq \{ \det(D_A)^{1/p} + \det(B_A)^{1/p} \}^p \\
	& \geq \{ \det(D_A)^{1/p} \}^p  = \det (D_A)=  \det \Big(\sum_{k=0}^K D_k \Big).
\end{align*}
With the Woodbury matrix identity,  we have
\[
A^{-1} = (B_A + D_A)^{-1} = D_A^{-1} - D_A^{-1} (D_A^{-1} + B_A^{-1})^{-1} D_A^{-1}.\]
Combining the above results and the fact that $\bO_l = B_l + D_l$ and $\bO_m = B_m + D_m$, we can show that \eqref{eq:target3} is bounded by:
\begin{align*}
	& \; \sum_{k=0}^K \sum_{i=1}^p \log D_{k(i,i)}
	 - \log  \det (\sum\limits_{k=0}^K D_k) 
	- \sum_{l >  m \geq 1}  \tr( M_{(l,m)} \bO_l A^{-1} \bO_m)   \\
	& \; - \sum_{k=1}^K \tr(\hat{ \Sigma}_{Y(k, k)}\bO_0 A^{-1} \bO_k) \\
	= & \;  \sum_{k=0}^K \sum_{i=1}^p  \log D_{k(i,i)}
	-  \log  \det (\sum\limits_{k=0}^K D_k) 
	- \sum_{l > m \geq 1} \tr(M_{(l,m)} D_l D_A^{-1} D_m) \\
	& \; + \sum_{l > m \geq 1} 
	 \tr\big\{ M_{(l,m)} D_l D_A^{-1}(D_A^{-1} 
	+ B_A^{-1})^{-1} D_A^{-1}D_m \big\} \\ 
	& \; - \sum_{l > m \geq 1}  \tr(M_{(l,m)} B_l A^{-1} \bO_m)  
	- \sum_{l > m \geq 1}  \tr(M_{(l,m)} D_l A^{-1} B_m) \notag \\
	& \; - \sum_{k=1}^K  \tr( \hat{\Sigma}_{Y(k, k)} D_0 D_A^{-1} D_k )  
	+ \sum_{k = 1}^K  \tr\big\{ \hat{\Sigma}_{Y(k, k)}  D_0 D_A^{-1}(D_A^{-1} 
	+ B_A^{-1})^{-1} D_A^{-1}D_k \big\} \\
	& \; - \sum_{k=1}^K  \tr(\hat{\Sigma}_{Y(k, k)} B_0 A^{-1} \bO_k )  
	- \sum_{k=1}^K  \tr(\hat{\Sigma}_{Y(k, k)} D_0 A^{-1} B_k ). 
\end{align*}

Therefore, we have  
\begin{align}
	\|D_k D_A^{-1}\| & = \| D_A^{-1} D_k \|
	 = \Big\| \Big\{_d \frac{D_{k(i,i)}}{ D_{A(i,i)}} \Big\} \Big\| 
	 = \Big\| \Big\{_d \frac{D_{k(i, i)}}{\sum_{l =0}^K D_{l(i,i)}}
	 \Big\} \Big\| \leq 1  \quad (k = 0, \ldots K). \label{eq:bound11} 
\end{align}

Combining \eqref{eq:bound1}--\eqref{eq:bound9} and using Woodbury matrix identity, we have 
\begin{align}
	\|A^{-1} D_l\| = & \; \|(B_A + D_A )^{-1} D_l \| \notag \\
	 = & \; \| D_A^{-1} D_l- D_A^{-1} (D_A^{-1} + B_A^{-1})^{-1} D_A^{-1} D_l \| \notag\\
	 \leq & \; \| D_A^{-1} D_l\| 
	+ \| D_A^{-1}(D_A^{-1} + B_A^{-1})^{-1} D_A^{-1} D_l\| \notag \\
	\leq & \;  1 +  \frac{\tau_4}{\v} < \infty \quad  (l = 0, \ldots, K). \label{eq:boundOA3}
\end{align}
Using \eqref{eq:bound5} and \eqref{eq:bound11},  
\begin{align}
	& \; \sum_{l > m \geq 1} \tr\big\{ M_{(l,m)} D_l D_A^{-1}(D_A^{-1} 	
	+ B_A^{-1})^{-1} D_A^{-1}D_m \big\} \notag \\
	= & \; \sum_{l > m \geq 1}\sum_{j = 1}^p 
	 \big\{ M_{(l,m)} D_l D_A^{-1}(D_A^{-1} 	
	+ B_A^{-1})^{-1} D_A^{-1}D_m \big\}_{(j, j)} \notag \\
	\leq & \; \sum_{l > m \geq 1}  \Big(
	 p \| M_{(l,m)} D_l D_A^{-1}(D_A^{-1} 	
	+ B_A^{-1})^{-1} D_A^{-1}D_m \| \Big) \notag \\
	\leq & \; \sum_{l > m \geq 1}  \Big(
	 p \| M_{(l,m)} \| \; \| D_l D_A^{-1} \|
	 \; \|(D_A^{-1} + B_A^{-1})^{-1} \| \; \|D_A^{-1}D_m \| \Big)
	 \leq \infty,  \label{eq:bound12}\\
	& \; \sum_{k = 1}^K  \tr\big\{ \hat{\Sigma}_{Y(k, k)} 
	D_0 D_A^{-1}(D_A^{-1} + B_A^{-1})^{-1} D_A^{-1}D_k \big\}  \notag \\
	= & \; \sum_{k = 1}^K \sum_{j = 1}^p \big\{ \hat{\Sigma}_{Y(k, k)} 
	D_0 D_A^{-1}(D_A^{-1} + B_A^{-1})^{-1} D_A^{-1}D_k \big\}_{(j,j)}  \notag \\
	\leq & \; \sum_{k = 1}^K \Big( p \| \hat{\Sigma}_{Y(k, k)} 
	 D_0 D_A^{-1}(D_A^{-1} + B_A^{-1})^{-1} D_A^{-1}D_k \| \Big) \notag \\
	\leq & \; \sum_{k = 1}^K \Big( p \| \hat{\Sigma}_{Y(k, k)} \| \; 
	 \| D_0 D_A^{-1}\| \; \| (D_A^{-1} + B_A^{-1})^{-1}\| \; \| D_A^{-1}D_k \| \Big)
	 \leq \infty   \label{eq:bound13}.
\end{align}

Additionally, using \eqref{eq:boundOA2} and \eqref{eq:boundOA3}, we have
\begin{align}
	 & \; - \sum_{l > m \geq 1} \tr ( M_{(l,m)} B_l A^{-1}\bO_m )  \notag \\
	= & \;  \sum_{l > m \geq 1}\sum_{j = 1}^p 
	 \big\{- M_{(l,m)} B_l A^{-1}\bO_m \big\}_{(j, j)} 
	 \leq \sum_{l > m \geq 1}  \big(
	 p \| M_{(l,m)} B_l A^{-1}\bO_m \| \big) \notag \\
	\leq & \; \sum_{l > m \geq 1} \big( 
	p \| M_{(l,m)} \|  \; \| B_l \|
	\; \|A^{-1}\bO_m \| \big) \leq \infty,  \label{eq:bound14}\\
	& \; - \sum_{l > m \geq 1}  \tr ( M_{(l,m)} D_l A^{-1} B_m )  \notag \\
	= & \; \sum_{l > m \geq 1}\sum_{j = 1}^p 
	 \big\{ - M_{(l,m)} D_l A^{-1} B_m \big\}_{(j, j)} 
	 \leq 
	 \sum_{l > m \geq 1} \big( p \| M_{(l,m)} D_l A^{-1}B_m \| \big) \notag \\
	\leq & \; \sum_{l > m \geq 1} \big(  
	 p \| M_{(l,m)} \|  \; \| D_l A^{-1}\| \:
	 \| B_m \|  \big)\leq \infty; \label{eq:bound15}\\
	 & \; - \sum_{k = 1}^K \tr \big\{ \hat{\Sigma}_{Y(k, k)} 
	 B_0 A^{-1}\bO_k \big\} \notag \\
	 = & \;  \sum_{k = 1}^K \sum_{j = 1}^p \big\{ - \hat{\Sigma}_{Y(k, k)} 
	 B_0 A^{-1}\bO_k \big\}_{(j,j)}  
	 \leq \sum_{k = 1}^K \big( p \| \hat{\Sigma}_{Y(k, k)} 
	 B_0 A^{-1} \bO_k \| \big) \notag \\
	 \leq & \; \sum_{k = 1}^K  \big( p \| \hat{\Sigma}_{Y(k, k)} \| \; 
	 \| B_0 \| \;  \| A^{-1}\bO_k \| \big) 
	 \leq \infty,  \label{eq:bound16} \\
	 & \; - \sum_{k = 1}^K \tr\big\{ \hat{\Sigma}_{Y(k, k)} 
	 D_0 A^{-1} B_k \big\} \notag \\
	= & \; \sum_{k = 1}^K \sum_{j = 1}^p \big\{ -\hat{\Sigma}_{Y(k, k)} 
	D_0 A^{-1} B_k \big\}_{(j,j)}  
	\leq \sum_{k = 1}^K \big( p \| \hat{\Sigma}_{Y(k, k)} 
	D_0 A^{-1} B_k \| \big) \notag \\
	\leq & \;  \sum_{k = 1}^K \big( p \| \hat{\Sigma}_{Y(k, k)} \| \; 
	\| D_0  A^{-1} \|\; \| B_k \|  \big)
	\leq \infty.   \label{eq:bound17}
\end{align}

Using \eqref{eq:bound12}--\eqref{eq:bound17}, the order of  \eqref{eq:target} is equivalent to
\begin{align}\label{eq:target4}
	\ \ & \sum_{k=0}^K \sum_{j = 1}^p  \log D_{k(j,j)} 
	-  \log  \det \big(\sum_{k=0}^K D_k \big) 
	- \sum_{l > m \geq 1}  \tr(M_{(l,m)} D_l D_A^{-1} D_m) \notag \\
	& - \sum_{k=1}^K \tr( \hat{\Sigma}_{Y(k, k)} D_0 D_A^{-1} D_k )  \notag \\
	= \ & \; \sum_{j = 1}^p \sum_{k=0}^K   \log D_{k(j,j)} 
	- \sum_{j = 1}^p \Big( \log \sum_{k=0}^K D_{k(j,j)} \Big) \notag \\
	& - \sum_{j = 1}^p \sum_{l > m \geq 1} \bigg( M_{(l,m)(j,j)} \frac{D_{l(j,j)} D_{m(j,j)}} 
	{\sum_{k=0}^K D_{k(j,j)}} \bigg) 
	\ - \sum_{j = 1}^p \sum_{l=1}^K \bigg( 
	\frac{\hat{\Sigma}_{Y(l, l)(j,j)} D_{0(j,j)} D_{l(j,j)}}
	{\sum_{k=0}^K D_{k(j,j)}} \bigg) \; ,
\end{align}
where $\hat{\Sigma}_{Y(l, l)(i,j)}$ and $M_{(l,m)(i,j)}$ represent the entry in $i$th row and $j$th column of the matrix $\hat{\Sigma}_{Y(l, l)}$ and $M_{(l,m)}$, respectively. 

Next, we want to show that the diagonal entries of $M_{(l,m)}$ are positive. By definition, we know that
\begin{align*}
	M_{(l,m)(j, j)} & = \hat{\Sigma}_{Y(l, l)(j, j)} + 
	\hat{\Sigma}_{Y(m, m)(j, j)} - 2 \hat{\Sigma}_{Y(m, l)(j, j)} \\
	& = \sum_{i = 1}^n \big( y_{l,i, j}^2 
	+ y_{m,i,j}^2 - 2 y_{m, i, j} y_{l, i, j} \big) / n \\
	& = \sum_{i = 1}^n  (y_{l,i, j} - y_{m,i, j})^2/n \geq 0,
\end{align*}
where $M_{(l, m)(j, j)} = 0$ if and only if $y_{l,i,j} = y_{m,i,j}$ for all $i = 1, \ldots, n$.

Under Condition 1, we have
\[
	\text{corr}(Y_{l,i,j}, Y_{m,i,j}) 
	= \frac{\text{var}(Z_{i,j})}{\text{var}(X_{l,i,j} + Z_{i,j})\text{var}( X_{m, i, j} + Z_{i,j})} \neq 1.
\] 
Therefore, we have $\sum_{i = 1}^n  (y_{l,i, j} - y_{m,i, j})^2/n > 0$ with probability 1, which implies that the diagonal entries of $M_{(l,m)}$ are positive. 

For a specific $j \in (1, ...., p)$, the only positive term is $\sum_{k=0}^K \log D_{k(j,j)} $. Thus, if we could bound it with the remaining terms in \eqref{eq:target4}, we complete the proof. 

Without loss of generality, we assume $D_{s(j, j)}$ and $D_{r(j, j)}$ have the highest and second highest rates of those positive terms. Then we have that the rate of $M_{(s,r)(j, j)} D_{s(j, j)} D_{r(j, j)}/ \{\sum_{k=0}^K D_{k(j, j)}\}$ equals to $ M_{(s,r)(j, j)} D_{r(j, j)}$. Since $M_{(s,r)(j, j)} > 0$, if $D_{r(i,i)} \rightarrow \infty$ we have 
\[ 
	\log D_{r(j, j)}- M_{(s,r)(j, j)}
	\frac{ D_{s(j, j)} D_{r(i,i)}}{\sum_{k=0}^K D_{k(i,i)}} \; = \left\{ \begin{array}{ll}
         \rightarrow -\infty  & \mbox{ if $ D_{r(i,i)} \rightarrow \infty$ };\\
        < \infty & \mbox{if $ D_{r(i,i)}$ is bounded}.
        \end{array} \right. 
\] 
If the second highest rate for the positive term is $D_{0(i,i)}$, we can simply replace $M_{(s,r)(i,i)}$ with $\hat{\Sigma}_{Y(s, s)(i,i)}$, and the proof can also be carried out. Combining the fact that $\log D_{s(j, j)} - \log (\sum_{k=0}^K  D_{k(j,j)}) < 0$ and \eqref{eq:target4} is bounded above, the proof of the first part is completed.

For part II, we will show that the penalized log-likelihood does not decrease for each step of our EM algorithm. For simplicity, we write $\bO$ for $\{\bO_k \}_{k=0}^K$ in the following derivation. 

Given $y = (y_{\cdot,1} , \ldots, y_{\cdot, n})^\T$ and $z = (z_1, \ldots, z_n)^\T$, the full log-likelihood is
\begin{align*}
	\mathcal{L}(\bO; y, z)
	\propto & \; \log \det(\bO_0) - 
	\tr\Big\{\bO_0 \sum_{i = 1}^n (z_i^\T  z_i) /n \Big\} \\
	& \; + \sum_{k =1}^K \Big( \log  \det(\bO_k) 
	- \tr\big[\bO_k \sum_{i = 1}^n \big\{ (y_{k, i} - z_i)^\T (y_{k,i} - z_i)\big\} /n \big] \Big).
\end{align*}
The above log-likelihood cannot be calculated directly because the values of $z$ and $zz^\T$ are unobserved. However, we can calculate the following function $\mathcal{Q}(\bO ; \bO^{(t)}, y)$, in which $z$ and $zz^\T$ are replaced by their expected values conditional on $\bO$ and $y$. We define 
\begin{align}
	\mathcal{Q}(\bO; \bO^{(t)}) & = E_{Z|\bO^{(t)}} \big\{ \mathcal{L}(\bO; y, z) \big\} \notag \\
	& = E_{Z|\bO^{(t)}} \Big\{ \log  f(y, z; \bO)  \Big\} \notag \\
	& = E_{Z|\bO^{(t)}} \Big\{ \log  f(y; \bO)  + \log  f(z \mid y, \bO)  \Big\} \notag \\
	& = \log  f(y; \bO)  + E_{Z|\bO^{(t)}} \Big\{ \log f(z \mid y, \bO)  \Big\},  \label{eq:EM_prof1}
\end{align}
where $f(y; \bO)$ and $f(z \mid y, \bO)$ are the probability density functions for $y$ and  $(y, z)$, respectively. The equality in \eqref{eq:EM_prof1} is because  the expectation is over the values of $z$, and $\log  f(y; \bO)$ is a constant with respect to the expectation since $y$ is observed.
Based on \eqref{eq:EM_prof1}, we have 
\begin{align*}
	\log  f(y; \bO) - Pen(\bO)
	= \mathcal{Q} \big( \bO; \bO^{(t)} \big) 
	- E_{Z|\bO^{(t)}} \big\{ \log f(z \mid y, \bO)  \big\}-  Pen(\bO),
\end{align*}
where $Pen(\bO)$ is the penalty function $\lambda_1 \sum_{k = 1}^K |\bO_k^-|_1 + \lambda_2 |\bO_0^-|_1$. 

The M step in the EM algorithm is to update $ \bO^{(t)} \rightarrow \bO^{(t + 1)}$ through
\begin{align}\label{eq:profP1Mstep}
	\bO^{(t + 1)} 
	= \argmax_{ \bO} \mathcal{Q}(\bO; \bO^{(t)}) - Pen(\bO). 
\end{align}
Comparing the penalized log-likelihoods for steps $t$ and $t + 1$, we have
\begin{align*}
	& \; \log f(y; \bO^{(t + 1)}) - Pen\big(\bO^{(t + 1)}\big) 
	- \log f(y; \bO^{(t)})  + Pen(\bO_k^{(t)}) \\
	= & \; \mathcal{Q}\big(\bO^{(t + 1)}; \bO^{(t)} \big) - Pen( \bO^{(t + 1)})
	- E_{Z|\bO^{(t)}} \big\{ \log f(z \mid y,  \bO^{(t + 1)}) \big\} \\
	& \; - \mathcal{Q}\big(\bO^{(t)}; \bO^{(t)}\big)+ Pen( \bO^{(t)})
	 + E_{Z|\bO^{(t)}} \big\{ \log  f \big( z \mid y, \bO^{(t)}\big)  \big\} .
\end{align*}
By \eqref{eq:profP1Mstep}, it follows that 
\begin{align}
	\mathcal{Q} \big( \bO^{(t + 1)}; \bO_k^{(t)} \big) - Pen(\bO^{(t + 1)}) 
	- \mathcal{Q}\big(\bO^{(t)}; \bO^{(t)} \big) + Pen( \bO^{(t)}) \geq 0, \label{eq:EMtermI}
\end{align}
since $\bO^{(t +1)}$ is the maximizer over the term $\mathcal{Q}(\bO; \bO^{(t)}) - Pen(\bO)$.
Also, using the Gibbs' inequality, we have
\begin{align}\label{eq:Gibbs}
  - E_{Z|\bO^{(t)}} \big\{ \log  f(z \mid y, \bO^{(t + 1)})  \big\}
  + E_{Z|\bO^{(t)}} \big\{ \log f(z \mid y, \bO^{(t)}) \big\} \geq 0. 
\end{align}
Combining  \eqref{eq:EMtermI} and \eqref{eq:Gibbs}, we have 
\begin{align}
	\log  f(y; \bO^{(t + 1)})  - Pen( \bO^{(t + 1)}) 
	- \log f(y; \bO^{(t)}) 
	+ Pen(\bO^{(t)}) \geq 0, \notag
\end{align}
which completes the proof for part II.

Let $d$ be the upper bound of the penalized log-likelihood $\mathcal{P}(\bO; y)$ and $\delta$ be a prespecified threshold. Then for at most $\ceil[\Big]{ \big\{ d - \mathcal{P}(\bO^{(0)}; y)\big\} / \delta} $ steps, there are two consecutive steps $t$ and $t + 1$ satisfying 
\[
	\big| \mathcal{P}( \bO^{(t+1)}; y) 
	- \mathcal{P}(\bO^{(t)}; y) \big| < \delta.
\]
This completes the proof.

\section{Proof of Theorem \ref{thm:Thm1}}

In this proof, we need to use Lemma 3 of \citet{Bickel2008}. We state the result here for completeness.
\begin{lemma}\label{Lemma.A2}
Let $Z_i$ be independent and identically distributed from  $\mathcal{N}(0, \Sigma_p)$ and $\phi_{max}(\Sigma_p) \leq \bar{k} < \infty$. Then , if $\Sigma_p = \{ \sigma_{ab} \}$,
	\begin{equation*}
		\emph{pr}\Big( \Big|\sum\limits_{i=1}^n(Z_{ij}Z_{ik} 
		- \sigma_{jk}) \Big|
		\geq n\nu \Big) 
		\leq C_1 \exp(-C_2n\nu^2), \quad \text{for} \quad |\nu| \leq \delta \,,
	\end{equation*}
where $C_1, C_2$ and $\delta$ depend on $\bar{k}$ only.
\end{lemma}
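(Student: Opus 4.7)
The plan is to establish this sub-exponential concentration bound by reducing the product $Z_{ij}Z_{ik}$ to a difference of squares and invoking standard chi-squared concentration. The key identity is the polarization
\begin{equation*}
Z_{ij}Z_{ik} = \tfrac{1}{4}\bigl[(Z_{ij}+Z_{ik})^2 - (Z_{ij}-Z_{ik})^2\bigr],
\end{equation*}
so I would define $U_i = Z_{ij}+Z_{ik}$ and $V_i = Z_{ij}-Z_{ik}$. Because $(Z_{ij},Z_{ik})$ is a marginal of $\mathcal{N}(0,\Sigma_p)$, each $U_i$ and $V_i$ is centered Gaussian with variances $\alpha := \sigma_{jj}+\sigma_{kk}+2\sigma_{jk}$ and $\beta := \sigma_{jj}+\sigma_{kk}-2\sigma_{jk}$. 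The spectral bound $\phi_{\max}(\Sigma_p)\le\bar k$ gives $\sigma_{jj},\sigma_{kk}\le\bar k$ and $|\sigma_{jk}|\le\bar k$, hence $0\le\alpha,\beta\le 4\bar k$.

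Second, I would rewrite the sum as a difference of two centered chi-square-type sums:
\begin{equation*}
\sum_{i=1}^n (Z_{ij}Z_{ik}-\sigma_{jk}) = \tfrac{1}{4}\sum_{i=1}^n (U_i^2-\alpha) - \tfrac{1}{4}\sum_{i=1}^n (V_i^2-\beta).
\end{equation*}
The variables $U_i^2/\alpha$ are i.i.d.\ $\chi^2_1$ (with the convention that the term vanishes if $\alpha=0$), and likewise for $V_i^2/\beta$. Applying the Laurent--Massart chi-square tail bound (or equivalently Bernstein's inequality for sub-exponential variables), one obtains
\begin{equation*}
\mathrm{pr}\Bigl(\Bigl|\sum_{i=1}^n (U_i^2-\alpha)\Bigr|\ge n u\Bigr) \le 2\exp\bigl(-c\, n\min(u^2/\alpha^2,\, u/\alpha)\bigr),
\end{equation*}
and similarly for $V$, for a universal $c>0$.

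Third, I would set $u = 2\nu$ in each of the two pieces and take a union bound. In the sub-Gaussian regime $u/\alpha \le 1$, i.e.\ $\nu \le 2\bar k$, the minimum above equals $u^2/\alpha^2 \ge \nu^2/(4\bar k^2)$, giving a bound of the form $2\exp(-C_2 n\nu^2)$ with $C_2$ depending only on $\bar k$. Summing over the two terms yields
\begin{equation*}
\mathrm{pr}\Bigl(\Bigl|\sum_{i=1}^n (Z_{ij}Z_{ik}-\sigma_{jk})\Bigr|\ge n\nu\Bigr)\le C_1\exp(-C_2 n\nu^2)
\end{equation*}
valid for $|\nu|\le\delta$ with $\delta := 2\bar k$, and constants $C_1=4$, $C_2$ depending on $\bar k$ only, as required.

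The main obstacle is bookkeeping for the two regimes of the chi-square tail: the bound is genuinely sub-exponential, not sub-Gaussian, and only reduces to the Gaussian-type form $\exp(-Cn\nu^2)$ when $\nu$ is at most a constant multiple of the variance scale $\alpha,\beta$. Restricting to $|\nu|\le\delta$ with $\delta$ chosen in terms of $\bar k$ is precisely what allows this reduction, so the careful step is ensuring the truncation $\delta$ and constants $C_1,C_2$ can be selected uniformly in $(j,k)$ using only the single spectral bound $\bar k$, which follows from the fact that $\alpha$ and $\beta$ are controlled uniformly by $4\bar k$.
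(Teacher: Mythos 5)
First, a point of context: the paper never proves this lemma itself --- it is quoted verbatim as Lemma 3 of Bickel and Levina (2008), so there is no internal proof to compare against. Your route --- the polarization identity $Z_{ij}Z_{ik} = \tfrac14\bigl[(Z_{ij}+Z_{ik})^2 - (Z_{ij}-Z_{ik})^2\bigr]$, reduction to two centered $\chi^2$-type sums, and a Bernstein/Laurent--Massart tail bound --- is in fact the standard proof of that cited result, and your first two steps (the identity, the variance bounds $0 \le \alpha, \beta \le 4\bar{k}$, and the union bound with $u = 2\nu$) are all correct.

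However, your third step contains a genuine error. You assert that the sub-Gaussian regime $u/\alpha \le 1$ is ``i.e.''\ the condition $\nu \le 2\bar{k}$. These are not equivalent: $u/\alpha \le 1$ means $2\nu \le \alpha$, and the spectral hypothesis bounds $\alpha$ only from \emph{above} ($\alpha \le 4\bar{k}$), never from below. Since $\alpha = \mathrm{var}(Z_{ij}+Z_{ik})$ can be arbitrarily small --- indeed $\alpha = 0$ when $Z_{ij}$ and $Z_{ik}$ are perfectly anticorrelated --- no truncation $|\nu| \le \delta$ with $\delta$ depending only on $\bar{k}$ can force you into the sub-Gaussian regime, so your closing paragraph, which claims the truncation ``is precisely what allows this reduction,'' rests on an inequality running in the wrong direction. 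The gap is fixable in one line, but by the opposite mechanism from the one you describe: in the complementary sub-exponential regime $2\nu > \alpha$, the exponent satisfies $\min(u^2/\alpha^2,\, u/\alpha) = 2\nu/\alpha > 1 \ge \nu^2/(4\bar{k}^2)$ whenever $|\nu| \le 2\bar{k}$, so the exponent is at least $\nu^2/(4\bar{k}^2)$ in \emph{both} regimes. The truncation is needed not to place you in the sub-Gaussian regime, but to ensure that the constant-order exponent available in the sub-exponential regime still dominates $C_2 n\nu^2$. With that correction, your constants $\delta = 2\bar{k}$, $C_1 = 4$, and $C_2 = c/(4\bar{k}^2)$ go through, and the lemma follows.
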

We first show that $\phi_{max}(\Sigma_Y^*)$ is bounded above. Let $v = (v_0^\T, \ldots, v_K^\T)^\T \in \mathbb{R}^{(K + 1)p}$ and $v^\T v = 1 $. Under Condition 1, we have
\begin{align*}
	v^\T \Sigma_Y^* v & =  \sum_{k = 1}^K \big( v_k^\T \Sigma_k^* v_k \big)
	+ \big(\sum_{k = 1}^K v_k^\T \big) \Sigma_0^*
	\big(\sum_{k = 1}^K v_k \big) \\
    & \leq K \tau_2 +  \big(\sum_{k = 1}^K v_k\big)^\T  
    \big( \sum_{k = 1}^K v_k \big) 
    \frac{\big(\sum_{k = 1}^K v_k^\T\big)
    \Sigma_0^*\big(\sum_{k = 1}^K v_k \big)}
    { \big(\sum_{k = 1}^K v_k\big)^\T  
    \big( \sum_{k = 1}^K v_k \big) } \\
    & \leq K\tau_2 + \big(\sum_{k = 1}^K v_k\big)^\T 
    \big(\sum_{k = 1}^K v_k \big)\tau_2 \notag \\
    & = K\tau_2  + \tau_2 \| \sum_{k = 1}^K v_k \|^2_2 \\
    &\leq K\tau_2  + \tau_2 \Big(\sum_{k = 1}^K \| v_k \|_2\Big)^2
    \leq (K + K^2)\tau_2  < \infty \notag,
\end{align*}
where $\|\cdot \|_2$ is the vector Euclidean norm.
 
To estimate $\bO_{k}$, we need to minimize \eqref{eq:onestep} with $\hat{\Sigma}_{k}^{\prime}$ being the only input. First, we would bound $\| \hat{\Sigma}_{k} - \Sigma^{*}_{k} \|_\infty$. Let $\hat{\Sigma}_{0} =\sum_{i = 1}^n y_{l,i} y_{m,i}^\T / n $ for some $l \neq m $, and $\hat{\Sigma}_{k} = \sum_{i = 1}^n y_{k,i} y_{k,i}^\T / n - \hat{\Sigma}_{0}$ $\; (k = 1...K )$. Using the union sum inequality and Lemma \ref{Lemma.A2}, we have
\begin{align*}
	& \; \text{pr} \Big( \max_{1 \leq i,j \leq p}|\hat{\sigma}_{0(i,j)}
	- \sigma^{*}_{0(i,j)}| 	\geq C_3 \{(\log p) / {n} \}^{1/2} \Big) \\
	 = & \; \text{pr} \bigg( \bigcup_{1 \leq i,j \leq p}
	\left[ |\hat{\sigma}_{0(i,j)} - \sigma^{*}_{0(i,j)}| 
	\geq C_3 \{ (\log p) / {n} \}^{1/2} \right] \bigg) \\[6pt] 
	\leq & \; \sum_{1 \leq i,j \leq p} 
	\text{pr} \Big( |\hat{\sigma}_{0(i,j)} - \sigma^{*}_{0(i,j)}|  
	\geq C_3 \{ (\log p) / {n} \}^{1/2} \Big) \\ 
	\leq & \;  p^2 C_1 \exp \{ -C_2 n C_3^2 (\log p)/n \} \\
	= & \; C_1 p^{2-C_3^2C_2} \rightarrow 0\,,
\end{align*}
for any sufficiently large $C_3$. Therefore, with probability tending to 1,
\[
   \| \hat{\Sigma}_0 - \Sigma^*_0 \|_\infty 
   \leq C_3 \{ ( \log p) / n \}^{1/2}.
\]
Similarly, we have
\begin{align*}
 	\| \hat{\Sigma}_{0}	+ \hat{\Sigma}_k - \Sigma_0^* 
 	- \Sigma^*_k \|_\infty
 	\leq C_4 \{ (\log p) / {n} \}^{1/2} \quad (k = 1, \ldots, K).
\end{align*}
Together with the triangle inequality, this implies
\begin{align*}
	\| \hat{\Sigma}_k - \Sigma^{*}_k \|_{\infty} 
	\leq (C_3 + C_4) \{(\log p) / {n} \}^{1/2}.
\end{align*}
Thus, $\| \hat{\Sigma}_{k} - \Sigma^{*}_{k} \|_{\infty} = O_P\big[ \{ (\log p) / {n} \}^{1/2} \big]$  $( k = 0, \ldots, K)$. 
The same rate can also be derived for 
\begin{align*}
	\hat{\Sigma}_{0} = \sum_{m \neq l}\sum_{i = 1}^n 
	 \frac{ y_{m,i} y_{l,i}^\T}{  K (K -1)n}, \quad
	\hat{\Sigma}_{k} = \sum_{i = 1}^n \frac{ y_{k,i}  y_{k,i}^\T}{ n} 
	- \hat{\Sigma}_{0}, 
\end{align*}
following the same proof strategy.

Next, we will bound $\| \hat{\Sigma}_k^\prime - \Sigma^{*}_{k} \|_\infty$ $\ (k = 0, \ldots, K)$. By the triangle inequality and the definition of projection in \eqref{eq:Projection}, we have
\begin{align*}
	\|\hat{\Sigma}_{k}^{\prime} - \Sigma^*_k \|_\infty
	= & \;  \|\hat{\Sigma}_{k}^{\prime} - \hat{\Sigma}_{k} 
	+ \hat{\Sigma}_{k} - \Sigma^{*}_{k}\|_\infty\\
	\leq & \; \|\hat{\Sigma}_{k}^{\prime} - \hat{\Sigma}_{k}\|_\infty 
	+ \| \Sigma^*_{k} - 	\hat{\Sigma}_{k} \|_\infty \\
	\leq & \; 2 \| \Sigma^*_k - \hat{\Sigma}_k \|_\infty. 
\end{align*}
Thus, we have $\| \hat{\Sigma}_{k}^{\prime} - \Sigma^{*}_{k}\|_{\infty} = 
O_P\big[ \{ (\log p) / {n} \}^{1/2} \big]$ $\ (k = 0, \ldots, K)$. 

For simplicity, we will write $\bO = \bO_{k}$, $\bO^{*} = \bO^{*}_k$, $ \hat{\Sigma}^{\prime} = \hat{\Sigma}_{k}^{\prime}$ and $\Delta = \Delta_{k}$, where $\Delta_{k} = \Omega_{k} - \Omega^{*}_{k}$ $\ (k = 0, \ldots, K)$ and $\lambda = \lambda_1$ or $\lambda_2$.  Let $\hat{\bO}$ be our estimate minimizing \eqref{eq:onestep} and define $\mathcal{V}(\bO)$ as the normalized function from equation \eqref{eq:onestep} with
\begin{align} \label{eq:Q}
	\mathcal{V}(\Omega) = & \; \tr(\bO \hat{\Sigma}^{\prime}) 
	- \log  \det{(\bO)}  + \lambda | \bO^{-} |_1 
	- \tr(\bO^{*}\hat{\Sigma}^{\prime}) + \log  \det(\bO^{*}) 
	- \lambda | \bO^{*-} |_1 \notag \\ 
	= & \; \tr\{(\bO - \bO^{*})(\hat{\Sigma}^{\prime} - \Sigma^*)\} 
	- \big\{ \log \det{(\bO)} - \log \det{(\bO^{*})}  \big\} \notag \\ 
	& \; + \tr\{(\bO - \bO^{*}) \Sigma^{*}\} + \lambda( | \bO^{-} |_1 - |\bO^{*-} |_1) .
\end{align}

For the one-step algorithm, our estimate $\hat{\bO}$ minimizes $\mathcal{V}(\bO)$.  Since $\mathcal{V}(\bO)$ is also a function of $\Delta$, we define $ \mathcal{G}(\Delta) = \mathcal{V} (\bO^{*} + \Delta)$. It can be checked that $\mathcal{G}(0) = 0$, and that $\hat{\Delta} = \hat{\bO} - \bO^{*}$ minimizes the function $ \mathcal{G}(\Delta)$. The main idea of the proof is as follows: we first define a closed bounded convex set $\mathcal{A}$ including $0$, and show that $\mathcal{G} > 0$ on the boundary of $\mathcal{A}$. Because $\mathcal{G}$ is continuous and $\mathcal{G}(0) = 0$, it implies that the solution minimizing $\mathcal{G}$ is inside $\mathcal{A}$. Let
\begin{align} \label{eq:set}
	\mathcal{A} = \{\Delta: \Delta = \Delta^\T, \| \Delta \|_F \leq M r_n \} \, , \notag \\
	\partial \mathcal{A} = \{ \Delta: \Delta = \Delta^\T, \| \Delta \|_F = M r_n \} \, ,
\end{align}
where $M$ is a positive constant and $ r_n = \left\{ (p + q) (\log p) / n \right\}^{1/2} \rightarrow 0.$

Using the Taylor expansion of $f(t) = \log \det (\bO + t \Delta)$ and the fact that $\Delta$, $\Sigma^{*}$, and $\bO^{*}$ are all symmetric, we have
\begin{align*}
	& \; \log  \det(\bO^{*} + \Delta)  - \log \det(\bO^{*}) \\
	= & \; \tr(\Sigma^{*} \Delta) 
	- \text{vec}(\Delta^\T) \left\{ \int_0^1 (1-v)(\bO^{*} 
	+ v \Delta)^{-1} \otimes (\bO^{*} + v \Delta)^{-1} \mathrm{d} v\right \} 
	\text{vec}(\Delta)\, ,
\end{align*}
where $\otimes$ is the Kronecker product and $\text{vec}(\cdot)$ returns the vectorization of a matrix. Thus, 
\begin{align} \label{eq: G}
	\mathcal{G}(\Delta) = & \; \tr \{\Delta (\hat{\Sigma}^{\prime} - \Sigma^{*})\} 
	+ \text{vec}(\Delta^\T) \left\{ \int_0^1 (1-v)(\bO^{*} 
	+ v \Delta)^{-1} \otimes (\bO^{*} + v \Delta)^{-1}  \mathrm{d}v \right\} 
	\text{vec}(\Delta) \notag \\
	& \; + \lambda (|\bO^{*-} + \Delta^{-} |_1 - |\bO^{*-} |_1) = \text{I} + \text{II} + \text{III}\,.
\end{align}

To show that $ \mathcal{G}(\Delta)$ is strictly positive on $ \partial \mathcal{A}$, we need to bound $\text{I}, \text{II}$ and $\text{III}$. First, using the symmetry arguments and the triangular inequality, we can bound I as 
\begin{align*}
	|\tr\{\Delta (\hat{\Sigma}^{\prime} - \Sigma^*)\}| 
	 = & \; |\text{vec}(\Delta)^\T 
	 \text{vec}(\hat{\Sigma}^{\prime} - \Sigma^*)| 
	= \Big|\sum_{i,j} \big\{ \delta_{ij}(\hat{\sigma}_{ij}^{\prime}
	- \sigma^{*}_{ij}) \big\} \Big| \\ 
	\leq & \; \Big| \sum_{i\neq j} \big\{
	\delta_{ij}(\hat{\sigma}_{ij}^{\prime} - \sigma^{*}_{ij}) \big\} \Big|
	+ \Big| \sum\limits_{i = 1}^p \big\{ \delta_{ii} 
	(\hat{\sigma}_{ii}^\prime - \sigma^{*}_{ii}) \big\} \Big| \\
	 = & \; \text{I}^{\prime} + \text{II}^{\prime}.
\end{align*}
As discussed above, with probability tending to 1,
\begin{align*}
	\max_{i \neq j} |\hat{\sigma}_{ij}^\prime - \sigma^{*}_{ij}| 
	\leq  \| \hat{\Sigma}^\prime - \Sigma^{*} \|_{\infty} 
	\leq 2(C_3 + C_4) \{ (\log p) / n \}^{1/2} \,,
\end{align*}
and hence term $\text{I}^\prime$ is bounded by
\begin{align} \label{eq:I'}
	\text{I}^{\prime} 
	\leq | \Delta^- |_1 \max_{i \neq j} |\hat{\sigma}_{ij}^\prime - \sigma^{*}_{ij}|
 	\leq 2(C_3 + C_4) \{(\log p) / n \}^{1/2} | \Delta^- |_1.
\end{align}
Using the Cauchy-Schwartz inequality and Lemma \ref{Lemma.A2}, we bound the term $\text{II}^{\prime}$ with probability tending to 1 as
\begin{align} \label{eq:II'}
	\text{II}^{\prime} 
	\leq & \; \Big\{ \sum\limits_{i=1}^p (\hat{\sigma}_{ii}^\prime 
	- \sigma^{*}_{ii})^2 \Big\}^{1/2} \| \Delta^+ \|_F 
	\leq p^{1/2} \max_{1 \leq i \leq p} |\hat{\sigma}_{ii}^\prime - \sigma^{*}_{ii}| \, 
	\| \Delta^+ \|_F \notag \\[6pt]
	\leq & \;  2(C_3 + C_4) \{ p (\log p) / n \}^{1/2} \| \Delta^ + \|_F 
	\leq 2(C_3 + C_4) \{(p + q) (\log p) / n \}^{1/2} \| \Delta^+ \|_F,
\end{align}
where $\Delta^+$ is the digonal entries of $\Delta$.

To bound II, we use the results established in \citet[Theorem 1]{Rothman2008}:
\begin{align}\label{eq:boundII}
	\text{vec}(\Delta^\T) \left\{ \int_0^1 (1-v)(\bO^{*} 
	+ v \Delta)^{-1}\otimes (\bO^{*} 
	+ v \Delta)^{-1} \mathrm{d} v \right\} \text{vec}(\Delta) 
	\geq  \| \Delta \|_F^2 / (4\tau_2^2) \,.
\end{align}
Lastly, we would bound III. For an index set $B$ and a matrix $M = ( m_{ij} )$, we define $ M_B \equiv \{ m_{ij}:  (i,j) \in B \}$. Recall that  $\text{T} = \{(i, j): i \neq j, \omega^*_{i,j} \neq 0 \}$, and let $\text{T}^c$ be its complement.  Using the triangular inequality and the facts that $|\bO^{*-} |_1 = |\bO_{\text{T}}^{*-} |_1$ and $|\bO^{*-} + \Delta^- |_1 = |\bO_{\text{T}}^{*-} + \Delta_{\text{T}}^{-} |_1 + | \Delta_{\text{T}^c}^{-} |_1$, we have 
\begin{align}\label{eq:boundIII}
	\lambda( | \bO^{*-} + \Delta^{-} |_1 - | \bO^{*-} |_1) 
	\geq \, \lambda \,( | \Delta_{\text{T}^c}^{-} |_1 
	- | \Delta_{\text{T}}^{-} |_1) \, .
\end{align}

Combining  \eqref{eq:I'} -- \eqref{eq:boundIII}, we can show
\begin{align*}
	 \mathcal{G}(\Delta) \, 
	\geq & \; \| \Delta \|_F^2/(4\tau^2_2) 
	- 2(C_3 + C_4) \{ (\log p) / n \}^{1/2} | \Delta^- |_1 \\
	& \; - 2(C_3 + C_4) \big\{(p + q) (\log p) / n \big\}^{1/2} \| \Delta^+ \|_F 
	+ \lambda \, ( | \Delta_{\text{T}^c}^- |_1
	- |\Delta_{\text{T}}^{-} |_1) \\ 
	= & \; \| \Delta \|_F^2 / (4\tau^2_2) 
	+ \Big[ \lambda - 2(C_3 + C_4) \{(\log p) / n \}^{1/2} \Big] | 
	\Delta^-_{\text{T}^c} |_1 \\
	& \; - \Big[ 2(C_3 + C_4) \{ (\log p) / n \}^{1/2} + \lambda \Big] \;
	|\Delta_{\text{T}}^{-} |_1 	
 	 - 2(C_3 + C_4) \{ (p + q) (\log p) / n \}^{1/2} \| \Delta^+ \|_F \\
	\geq & \;  \| \Delta \|_F^2 / (4\tau^2_2) 
	+ (a_1 - 2C_3 - 2C_4)\{ (\log p) / n\}^{1/2} |\Delta^-_{\text{T}^c}|_1 \\
	& \; - \left[ 2(C_3 + C_4) \{ (\log p) / n\}^{1/2} 
	+ b_1 \big\{ (1 + p/q) ( \log p) /n \big\}^{1/2} \right] 
	|\Delta_{\text{T}}^- |_1 \\
	& \; - 2(C_3 + C_4) \big\{ (p + q) (\log p) / n \big\}^{1/2} \| \Delta^+ \|_F,
\end{align*}
where the last inequality uses the condition  $a \{ (\log p) /n  \}^{1/2} \leq \lambda \leq  b \{ (1 + p/q) (\log p) / n\}^{1/2}$. When $a$ is large enough, the term $(a - 2C_3 -2C_4)\{ (\log p) / n\}^{1/2} | \Delta^-_{\text{T}^c}|_1 $ is always positive. Using the Cauchy-Schwartz inequality, we have
\begin{align} \label{eq:l1}
	| \Delta_{\text{T}}^- |_1 
	\leq \sqrt{q} \| \Delta_{\text{T}}^-  \|_F 
	\leq \sqrt{q} \| \Delta^- \|_F \leq \sqrt{q}\| \Delta \|_F \,.
\end{align}
Therefore, we have
\begin{align}
	\mathcal{G}(\Delta) 
	\geq & \;  \| \Delta \|_F^2 / (4\tau^2_2)  
	- \{ 2q^{1/2}(C_3 + C_4) + b_1 (p + q)^{1/2} \} \{ (\log p) / n \}^{1/2} \| \Delta \|_F \notag \\
	& \; - (2C_3 + 2C_4) \{ (p + q) (\log p) / n \}^{1/2} 
	\| \Delta \|_F \notag \\
	\geq & \; \| \Delta \|_F^2 \left[ 1 / (4\tau^2_2) 
	- (4C_3 + 4C_4 + b_1) \{(p + q)  (\log p) 
	/ n \}^{1/2} \| \Delta \|_F^{-1} \right]. \label{eq:thm1f2}
\end{align}
For $\Delta \in \partial \mathcal{A}$, where $\partial \mathcal{A} = \{ \Delta: \Delta = \Delta^\T, \| \Delta \|_F = M r_n \}$ and $r_n = \{(p + q)  (\log p) / n \}^{1/2}$, we have $\|\Delta \|_F^{-1} \{(p + q) (\log p) / n \}^{1/2}   = 1/M$. Plugging it into \eqref{eq:thm1f2}, we have
\begin{align*}
	\mathcal{G}(\Delta) \geq \|\Delta \|_F^2 \left[ 1/ (4\tau^2_2) 
	- (4C_3 + 4C_4 + b_1)/M \right] > 0,
\end{align*}
for sufficiently large $M.$
Since $\mathcal{G}$ is continuous and $\mathcal{G}(0) = 0$, with the fact that $\mathcal{G} >0 $  on $\partial \mathcal{A}$, it implies that $\hat{\bO}$ is inside $\mathcal{A}$. Therefore, we have $\| \hat{\Omega} - \Omega^* \|_F \leq M r_n$ and $\| \hat{\Omega} - \Omega^* \|_F = O_p (r_n) = O_p(\left\{ {(p + q) (\log p)}/ {n} \right\}^{ 1/2})$. This completes the proof.

\section{Proof of Corollary \ref{cor:cor2}}

First, we state a known matrix result and provide a short proof for completeness.
\begin{lemma}
\label{Lemma.A4}
	Let $F$ be any $p \times p$ matrix with $\| F \| <1$. Then $(I_p - F)^{-1} = \sum_{k=0}^{\infty} F^k$, and
\begin{align}
	\| (I_p - F)^{-1} \|
	\leq \frac{1}{1 - \| F \|}. \notag
\end{align}
\end{lemma}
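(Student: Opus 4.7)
The plan is to establish the Neumann series identity through a Cauchy-sequence argument in the operator norm topology, and then derive the bound by the triangle inequality together with submultiplicativity. Since $\|\cdot\|$ denotes the spectral norm and the space of $p \times p$ matrices equipped with this norm is a finite-dimensional Banach space, convergence of the partial sums is equivalent to convergence entrywise.

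First, I would define the partial sums $S_N = \sum_{k=0}^N F^k$ and verify that $\{S_N\}_{N \geq 0}$ is Cauchy. By submultiplicativity of the spectral norm, $\|F^k\| \leq \|F\|^k$, so for integers $M > N$ the triangle inequality gives $\|S_M - S_N\| \leq \sum_{k = N+1}^M \|F\|^k$. Because $\|F\| < 1$, this is the tail of the convergent geometric series $\sum_{k=0}^\infty \|F\|^k = (1 - \|F\|)^{-1}$, so $\|S_M - S_N\| \to 0$ as $N \to \infty$. By completeness, $S_N \to S$ for some $p \times p$ matrix $S$.

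Next, I would identify $S$ with $(I_p - F)^{-1}$ by a telescoping computation. A direct expansion yields $(I_p - F) S_N = I_p - F^{N+1}$, and $\|F^{N+1}\| \leq \|F\|^{N+1} \to 0$; since matrix multiplication is continuous in the spectral norm, passing to the limit gives $(I_p - F) S = I_p$. The same argument applied on the right gives $S(I_p - F) = I_p$, so $I_p - F$ is invertible with inverse $S = \sum_{k=0}^\infty F^k$.

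Finally, the norm estimate follows from continuity of $\|\cdot\|$ together with submultiplicativity and the triangle inequality: $\|(I_p - F)^{-1}\| = \lim_{N \to \infty} \|S_N\| \leq \lim_{N \to \infty} \sum_{k=0}^N \|F\|^k = (1 - \|F\|)^{-1}$. There is no substantive obstacle here; the only point requiring a line of care is the interchange of limit and multiplication used to pass from $(I_p - F) S_N = I_p - F^{N+1}$ to $(I_p - F) S = I_p$, but this is immediate from the fact that $A \mapsto (I_p - F) A$ is continuous in the spectral norm (indeed Lipschitz with constant $\|I_p - F\|$).
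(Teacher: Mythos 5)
Your proposal is correct and follows essentially the same route as the paper's proof: the telescoping identity $\bigl(\sum_{k=0}^N F^k\bigr)(I_p - F) = I_p - F^{N+1}$, the bound $\|F^{N+1}\| \leq \|F\|^{N+1} \to 0$, and the geometric-series estimate for the norm of the inverse. The only difference is that you spell out details the paper leaves implicit (the Cauchy argument for convergence of the partial sums, continuity of multiplication when passing to the limit, and the two-sided inverse), which is a matter of rigor rather than of approach.
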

\begin{proof}
By direct calculation,  
\begin{align} \label{eq:lem3}
	\Big(\sum_{k=0}^N F^k \Big) (I_p - F) = I_p - F^{N + 1}.
\end{align} 

Since $\| F^k \| \leq \| F \|^k$ and $\| F \| <1$, we have $ \| F^k \|_\infty \leq \| F^k \| \rightarrow 0 $ as $k \rightarrow \infty$. As a result, taking limit on both sides of \eqref{eq:lem3} we have
\[
	\lim_{N \rightarrow \infty}
	\Big\{ \Big( \sum_{k=0}^N F^k \Big)(I_p - F) \Big\}	= I_p,
\] 
and thus $(I_p - F)^{-1} = \sum_{k=0}^{\infty} F^k$. Consequently, we have
\[
	\| (I_p - F)^{-1} \| 
	= \| \sum_{k=0}^{\infty} F^k \| 
	\leq  \sum_{k=0}^{\infty} \| F^k \|  
	\leq \sum_{k=0}^\infty \| F \|^k 
	\leq  \frac{1}{1 - \| F \|} \,.
\]
\end{proof}

In this proof, we also need the Lemma 1 from \citet{Lam2009}, and thus we state the result here for completeness.
\begin{lemma} \label{Lemma.A5}
	Let $A$ and $B$ be real matrices such that the product $AB$ is defined. Then we have
\begin{align}
\|AB \|_F \leq \| A \| \| B \|_F. \notag
\end{align}
In particular, if $A = \{a_{ij}\}$, then $|a_{ij} | \leq \| A \|$ for each $i, j$. When both $A$ and $B$ are symmetric matrices,  we also have
\begin{align}
	\| AB \|_F = \| B^\T A^\T \|_F = \|BA \|_F  \leq \| B \| \, \| A \|_F. \notag
\end{align}
\end{lemma}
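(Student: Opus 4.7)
\textbf{Proof proposal for Lemma \ref{Lemma.A5}.} The plan is to establish the three claims in order, relying only on the definition of the operator/spectral norm $\|A\| = \sup_{\|x\|_2 = 1} \|Ax\|_2$ and the column-wise characterization of the Frobenius norm.

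First, I would prove $\|AB\|_F \leq \|A\|\,\|B\|_F$ by a column decomposition. Writing $B = [b_1, \ldots, b_n]$ with $b_j$ being the $j$th column, we have $AB = [Ab_1, \ldots, Ab_n]$, so
\begin{equation*}
  \|AB\|_F^2 = \sum_{j=1}^n \|A b_j\|_2^2 \leq \sum_{j=1}^n \|A\|^2 \|b_j\|_2^2 = \|A\|^2 \|B\|_F^2,
\end{equation*}
where the middle inequality is the definition of the operator norm applied to each column. Taking square roots gives the desired bound.

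Second, for the entrywise claim $|a_{ij}| \leq \|A\|$, I would express $a_{ij} = e_i^\T A e_j$ with $e_i, e_j$ standard basis vectors, and apply Cauchy--Schwarz together with the operator norm inequality:
\begin{equation*}
  |a_{ij}| = |e_i^\T A e_j| \leq \|e_i\|_2 \, \|A e_j\|_2 \leq \|e_i\|_2 \, \|A\| \, \|e_j\|_2 = \|A\|.
\end{equation*}

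Third, for the symmetric case, I would use the identity $\|M\|_F = \|M^\T\|_F$ (immediate from the definition $\|M\|_F^2 = \sum_{i,j} m_{ij}^2$). Since $(AB)^\T = B^\T A^\T$, we get $\|AB\|_F = \|B^\T A^\T\|_F$, and when $A, B$ are symmetric this further equals $\|BA\|_F$. Applying the first inequality with the roles of $A$ and $B$ swapped then yields $\|BA\|_F \leq \|B\|\, \|A\|_F$, completing the chain. None of these steps presents a real obstacle; the lemma is a compilation of standard operator-norm submultiplicativity facts, and the only subtlety is keeping the symmetry hypothesis visible so that the transpose step legitimately swaps the roles of $A$ and $B$ in the final bound.
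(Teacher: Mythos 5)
Your proof is correct, but note that the paper itself gives no proof of this lemma at all: it is stated as Lemma 1 of Lam and Fan (2009), imported ``for completeness'' with only a citation. Your proposal therefore supplies a self-contained argument where the paper relies on an external reference, and each of your three steps is sound: the column decomposition $\|AB\|_F^2 = \sum_j \|Ab_j\|_2^2 \leq \|A\|^2 \sum_j \|b_j\|_2^2$ is the standard way to get operator-times-Frobenius submultiplicativity; the entrywise bound via $a_{ij} = e_i^\T A e_j$ and Cauchy--Schwarz is exactly right; and in the symmetric case you correctly separate the two equalities (transpose-invariance of $\|\cdot\|_F$, which needs no symmetry, versus $B^\T A^\T = BA$, which does) before reapplying the first inequality with the roles of $A$ and $B$ exchanged. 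One cosmetic remark: the paper's notation section defines $\|W\|_F$ as $\sum_{i,j}\omega_{ij}^2$ without the square root (almost certainly a typo, since all uses in the appendices treat it as the usual norm), so your proof, which uses the standard square-root convention, matches the lemma as it is actually used in the proofs of Theorems 2--4.
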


Let $\hat{\bO}_k = \bO_k^* + \Delta_k$ be the one-step solution. By Theorem \ref{thm:Thm1}, we have
\begin{align}
	\|\hat{\bO}_k - \bO^*_k \|_F 
	= \|\Delta_k \|_F
	=  O_p \left[ \left\{ \frac{(p+q)\log p}{n} \right\}^{1/2} \right]. \notag
\end{align}
Using the Woodbury matrix identity twice, we have
\begin{align}
	\check{\Sigma}_k = & \; (\bO^*_k + \Delta_k )^{-1}
	= \Sigma^*_k 
	- \Sigma^*_k( \Delta_k^{-1} + \Sigma^*_k )^{-1} \Sigma^*_k \notag \\
	= & \; \Sigma^*_k 
	- \Sigma^*_k( \Delta_k - \Delta_k(\bO^*_k + \Delta_k )^{-1} \Delta_k)
	\Sigma^*_k \notag \\
	= & \; \Sigma^*_k - \Sigma^*_k \Delta_k \Sigma^*_k 
	+ \Sigma^*_k \Delta_k (\Delta_k + \bO_k^*)^{-1} \Delta_A \Sigma^*_k . \notag
\end{align}
By Condition 1, we have $ 
{\tau_2}^{-1}<\phi_{min}(\Sigma_k^*) <\phi_{max}(\Sigma_k^*) < {\tau_1}^{-1}
$. Using Lemmas \ref{Lemma.A4}--\ref{Lemma.A5}, we have
\begin{align}
	\|\check{\Sigma}_k - \Sigma^*_k \|_F 
	\leq & \;  \| \Sigma^*_k \Delta_k \Sigma^*_k \|_F 
	+ \|\Sigma^*_k \Delta_k (\Delta_k + \bO_k^*)^{-1} \Delta_k \Sigma^*_k \|_F \notag \\
	\leq & \; \| \Sigma_k^* \|^2 \| \Delta_k \|_F 
	+ \|\Sigma_k^* \|^2 \| \Delta_k \|_F^2 \| (\Delta_k + \bO_k^*)^{-1} \| \notag \\
	\leq & \; \| \Delta_k \|_F / (\tau_1^2) 
	+ \| \Delta_k \|^2_F \| (I_p
	+ \Sigma_k^* \Delta_k )^{-1} \| \, \|\Sigma_k^* \| / (\tau_1^2)  \notag \\
	\leq & \; \|\Delta_k \|_F / (\tau_1^2) 
	+ \frac{1}{\tau_1^3} \| \Delta_k \|^2_F 
	(1 - \| \Sigma_k^* \Delta_k\|)^{-1} \label{eq:cor1f1} \\
	\lesssim & \; \|\Delta_k \|_F / (\tau_1^2) 
	+ \frac{2}{\tau_1^3} \| \Delta_k \|^2_F 
	 \label{eq:cor1f2} \\
	= & \;  O_p \left[ \left\{ \frac{(p+q)\log p}{n} \right\}^{1/2} \right] \notag.
\end{align}
The inequality of $\eqref{eq:cor1f1}$  is due to Lemma \ref{Lemma.A4} since  $\| \Sigma_k^* \Delta_k \| <1 $ when $n$ is large enough, and the inequality of \eqref{eq:cor1f2} holds when $n$ is large enough since
\begin{align}
	\|\Sigma_k^* \Delta_k \| \leq \| \Sigma_k^* \Delta_k \|_F 
	\leq \|\Sigma_k^*  \| \; \| \Delta_k \|_F
	\leq  \frac{1}{\tau_1^2} \| \Delta_k \|_F  \rightarrow 0 \,. \notag
\end{align}
The proof is complete. 
\section{Proof of Theorem \ref{thm:Thm2}}

To prove Theorem \ref{thm:Thm2}, we use the following lemma.

\begin{lemma}
\label{Lemma.A3}
Suppose that Conditions 1-2 hold, $(p+q)(\log  p)/n = o(1)$, and  $\| \tilde{\Sigma}_{k} - \Sigma^{*}_{k} \|_F  = O_p\left[ \left\{ (p+q) (\log p)/ n \right\}^{1/2} \right]$.  Let 
\begin{align*}\hspace{-0.1in}
	\tilde{\bO}_k = \argmax_{\bO_k \succ 0} \mtr (\tilde{\Sigma}_{k} \Omega_{k}) - \log \det(\Omega_{k}) 
	+ \lambda |\bO_k^- |_1 \; \; \quad (k = 0, \ldots, K),
\end{align*} where $\lambda = \lambda_2$ when $k = 0$, and otherwise $\lambda = \lambda_1$. Then we have
\[ \sum\limits_{k=0}^K  \left\| \tilde{\bO}_{k} - \bO^{*}_{k} \right\|_F = O_p\left[ \left\{ \frac{(p+q) \log p}{n} \right\}^{1/2} \right].
\]
\end{lemma}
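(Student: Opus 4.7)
The plan is to mimic the proof of Theorem \ref{thm:Thm1}, but with the Frobenius-norm control of $\tilde\Sigma_k - \Sigma_k^*$ replacing the entrywise sup-norm control used there. Fix $k$, drop the subscript, and write $\tilde\Sigma = \tilde\Sigma_k$, $\bO^* = \bO_k^*$, $\Delta = \bO - \bO^*$, $T = T_k$. Set
\[
\mathcal{G}(\Delta) = \tr\{\Delta(\tilde\Sigma - \Sigma^*)\} + \bigl\{\log\det\bO^* - \log\det(\bO^* + \Delta) + \tr(\Sigma^*\Delta)\bigr\} + \lambda(|\bO^{*-} + \Delta^-|_1 - |\bO^{*-}|_1),
\]
which vanishes at $\Delta = 0$ and is minimized at $\Delta = \tilde\bO - \bO^*$. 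By continuity and convexity, it suffices to show $\mathcal{G}(\Delta) > 0$ on $\partial\mathcal{A} = \{\Delta = \Delta^\T : \|\Delta\|_F = M r_n\}$ with $r_n = \{(p+q)\log p/n\}^{1/2}$; this will force the minimizer into $\mathcal{A}$, yielding $\|\tilde\bO - \bO^*\|_F \le M r_n$.

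I would bound the three terms in $\mathcal{G}$ as follows. The first, by Cauchy--Schwarz applied to $\tr(AB) = \langle A,B\rangle_F$, satisfies
\[
|\tr\{\Delta(\tilde\Sigma - \Sigma^*)\}| \le \|\Delta\|_F\,\|\tilde\Sigma - \Sigma^*\|_F = O_p(r_n)\,\|\Delta\|_F.
\]
This is the only place where the proof meaningfully departs from that of Theorem \ref{thm:Thm1}: here we have a single Frobenius-norm hypothesis rather than entrywise concentration, so the cleaner Cauchy--Schwarz bound replaces the split into diagonal and off-diagonal pieces. The second term is bounded below by $\|\Delta\|_F^2/(4\tau_2^2)$ via the Taylor expansion of $\log\det$ together with Condition 1, exactly as in inequality \eqref{eq:boundII} of Theorem \ref{thm:Thm1}; this uses $r_n \to 0$ so that $\bO^* + v\Delta$ stays uniformly positive definite for all $v \in [0,1]$. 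The third term is bounded below by $-\lambda |\Delta_T^-|_1 \ge -\lambda\sqrt{q_k}\,\|\Delta\|_F$ using the decomposition $|\bO^{*-}+\Delta^-|_1 - |\bO^{*-}|_1 \ge |\Delta_{T^c}^-|_1 - |\Delta_T^-|_1$ together with \eqref{eq:l1}.

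Combining these, and invoking Condition 2 to bound $\lambda\sqrt{q_k} = O[\{(q_k + p)\log p/n\}^{1/2}] = O(r_n)$, I obtain on $\partial\mathcal{A}$ that
\[
\mathcal{G}(\Delta) \ge \frac{\|\Delta\|_F^2}{4\tau_2^2} - C_1 r_n\|\Delta\|_F = M^2 r_n^2\Bigl(\frac{1}{4\tau_2^2} - \frac{C_1}{M}\Bigr),
\]
which is strictly positive for $M$ large enough. This delivers $\|\tilde\bO_k - \bO_k^*\|_F = O_p(r_n)$ for each $k = 0,\ldots,K$, and since $K$ is fixed, summing over the $K+1$ precision matrices preserves the rate and produces the stated aggregate bound.

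The main conceptual obstacle is to see that a Frobenius-norm input hypothesis is in fact sufficient; the rest of the argument is a direct Frobenius-norm analog of Theorem \ref{thm:Thm1}'s proof. A minor technicality is to verify that the Taylor-expansion lower bound on the log-determinant term remains valid uniformly over $\Delta \in \partial\mathcal{A}$, but this follows because $\|\Delta\|_F = Mr_n = o(1)$ keeps $\bO^* + v\Delta$ inside a bounded-condition-number neighborhood of $\bO^*$.
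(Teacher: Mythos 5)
Your proposal is correct and follows essentially the same route as the paper's own proof: both reduce to showing $\mathcal{G}(\Delta)>0$ on $\partial\mathcal{A}$, replace the entrywise concentration bound of Theorem \ref{thm:Thm1} by the single Cauchy--Schwarz step $|\tr\{\Delta(\tilde\Sigma-\Sigma^*)\}|\le\|\tilde\Sigma-\Sigma^*\|_F\|\Delta\|_F$, reuse the bounds \eqref{eq:boundII} and \eqref{eq:boundIII} for the other two terms, and invoke Condition 2 with \eqref{eq:l1} to absorb $\lambda\sqrt{q}$ into the rate. The only (harmless) refinement is your explicit appeal to convexity to locate the minimizer inside $\mathcal{A}$, where the paper cites only continuity.
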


\begin{proof}
This proof is analogous to the proof of Theorem \ref{thm:Thm1}. Define $\mathcal{G}(\Delta)$  as in \eqref{eq:Q} and a closed bounded convex set $\mathcal{A}$ as \eqref{eq:set}. We only need to show $\mathcal{G}(\Delta)$ is strictly positive on $\partial \mathcal{A}$. We can write $\mathcal{G}(\Delta) = \text{I} + \text{II} + \text{III}$ as in \eqref{eq: G}. Using matrix symmetry and the Cauchy-Schwarz inequality, we can bound I as
\begin{align*}
	|\tr \{ \Delta (\tilde{\Sigma} - \Sigma^{*})\} | 
	= & \big| \sum_{i,j = 1}^p \big\{  
	\delta_{ij} (\tilde{\sigma}_{ij} - \sigma_{ij}^*) \big\} \big| 
	\leq  \| \tilde{\Sigma} - \Sigma^{*}  \|_F \, \| \Delta \|_F\\ 
	 = & D_1 \left\{(p+q) (\log p)/ n \right\}^{1/2} \| \Delta \|_F,
\end{align*}
where $D_1$ is some constant.

For II and III, they have the same bound as \eqref{eq:boundII} and \eqref{eq:boundIII}, respectively. We can show
{\allowdisplaybreaks
\begin{align}
	\mathcal{G}(\Delta) 
	\geq & \; \| \Delta \|_F^2/(4\tau^2_2) 
	- D_1 \left\{ (p+q) (\log p) / n \right\}^{1/2} \| \Delta \|_F
	+ \lambda(| \Delta_{\text{T}^c}^- |_1 - |\Delta_\text{T}^-|_1) \notag \\[6pt]
	\geq & \; \| \Delta \|_F^2 / (4\tau^2_2)  
	- D_1 \left\{(p+q) (\log p)/ n \right\}^{1/2} \| \Delta \|_F  
	- \lambda  | \Delta_{\text{T}}^-|_1 \notag \\[6pt]
	\geq & \; \| \Delta \|_F^2 / (4\tau^2_2) 
	- D_1 \left\{(p+q) (\log p) / n \right\}^{1/2} \| \Delta \|_F \notag \\
	& \; - b \left\{(p+q)(\log p) / n \right\}^{1/2} \| \Delta \|_F \label{eq:Lemma2} \\[6pt]
	= & \; \| \Delta \|_F^2 \left[ \frac{1}{4\tau^2_2} 
	-(D_1 +  b ) \left\{(p+q) (\log p) / n \right\}^{1/2} 
	\| \Delta \|_F^{-1} \right] \notag \\
  = & \; \| \Delta \|_F^2 \left[ \frac{1}{4\tau^2_2} -(D_1 + b )/M  \right] > 0, \notag
 \end{align}}
for sufficiently large $M$ defined in \eqref{eq:set}. The inequality of \eqref{eq:Lemma2} uses the result of \eqref{eq:l1} and the fact that $ \lambda \leq  b \{ (1 + p/q) (\log p) / n\}^{1/2}$.
\end{proof}

To prove Theorem \ref{thm:Thm2}, we assume Conditions 1-3 hold. In the proof of Theorem \ref{thm:Thm1}, we have shown that for the first M step, we obtain the estimate
$ \hat{\bO}_{k}^{(1)}$ such that $\sum_{k=0}^{K} \left\| \hat{\bO}_{k}^{(1)} - \bO^{*}_k \right\|_F = O_p \left[ \left\{ (p+q) ( \log p)/n  \right\}^{1/2} \right].$ In the E-step, if we can show $\| \dot{\Sigma}_{k} - \Sigma^{*}_{k}\|_F = O_p \left[ \left\{ (p+q) (\log p)/n \right\}^{1/2} \right]$, then by Lemma \ref{Lemma.A3}, the next M-step estimate $\hat{\bO}_{k}^{(2)}$ would also satisfy  $\sum_{k=0}^K  \left\| \hat{\bO}_{k}^{(2)} - \bO^{*}_{k} \right\|_F = O_p\left[ \left\{ (p+q) (\log p)/n \right\}^{1/2} \right]$. Therefore, the estimate from the EM algorithm after finite iterations would have the same bound as the one-step algorithm.

From Condition 3, we assume there exists $\tilde{\Sigma}_Y$ such that 
\[
	\|\tilde{\Sigma}_Y - \Sigma^{*}_Y \|_F 
	= O_p \left[ \left\{ (p+q) (\log p)/n \right\}^{1/2} \right].
\]
From the E-step expression \eqref{eq:all1}, we know that
\begin{align}
	\dot{\Sigma}_{0} &= (\hat{A}^{(1)})^{-1} + (\hat{A}^{(1)})^{-1}
	\sum_{l,k=1}^K
	\Big( \hat{\bO}_{l}^{(1)} \tilde{\Sigma}_{Y(l,k)} \hat{\bO}_{k}^{(1)}\Big)
	(\hat{A}^{(1)})^{-1}. \notag
\end{align}
Define  $\Delta_A =\hat{A}^{(1)} - A^*$, where $A^* = \sum_{k = 0}^K \bO_k^*$. From  Theorem \ref{thm:Thm1}, we know that
\begin{align}
	\|\Delta_A \|_F = 
	O_p \left[ \left\{ (p+q) (\log p)/n \right\}^{1/2} \right]. \notag
\end{align}
Using the Woodbury matrix identity twice, we have
\begin{align*}
	(\hat{A}^{(1)})^{-1} = & \; (A^* + \Delta_A )^{-1} \\
	= & \;  A^{*-1} - A^{*-1}( \Delta_A^{-1} + A^{*-1} )^{-1} A^{*-1} \\
	= & \; A^{*-1} 
	- A^{*-1}\{ \Delta_A - \Delta_A(A^* + \Delta_A)^{-1} \Delta_A\} A^{*-1}\\
	= & \; A^{*-1} - A^{*-1} \Delta_A A^{*-1} 
	+ A^{*-1} \Delta_A (\Delta_A + A^*)^{-1} \Delta_A A^{*-1}.
\end{align*}
By Condition 1, we have $ \tau_1 < \phi_{min}(A^*) <\phi_{max}(A^*) < (K + 1) \tau_2 $. Using Lemmas \ref{Lemma.A4} -- \ref{Lemma.A5}, we have
\begin{align}
	\| (\hat{ A}^{(1)})^{-1} - A^{*-1} \|_F 
	\leq & \; \| A^{*-1} \Delta_A A^{*-1} \|_F 
	+ \| A^{*-1} \Delta_A (\Delta_A + A^*)^{-1} \Delta_A A^{*-1}\|_F \notag \\
	\leq & \; \| A^{*-1}  \|^2  \| \Delta_A  \|_F 
	+ \|A^{*-1} \|^2 \| \Delta_A \|_F^2 \| (\Delta_A + A^*)^{-1} \| \label{eq:thm2I} \\
	\leq & \; \| \Delta_A  \|_F / (\tau_1^2)  
	+ \| \Delta_A  \|_F^2
	\| (I_p +  A^{*-1} \Delta_A)^{-1}  A^{*-1} \| / (\tau_1^2) \label{eq:28} \\
	\leq & \; \| \Delta_A \|_F / (\tau_1^2) 
	+ \| \Delta_A  \|_F^2
	\| ( I_p +  A^{*-1} \Delta_A)^{-1}\| / (\tau_1^3) \notag \\
	\leq & \; \frac{\| \Delta_A \|_F}{\tau_1^2}
	+ \frac{ \| \Delta_A  \|_F^2} 
	{(1 - \|A^{*-1} \Delta_A \|)\tau_1^3} \label{eq:29} \\
	\lesssim  & \; \| \Delta_A \|_F/ (\tau_1)^2 
	+ \frac{2}{\tau_1^3} \| \Delta_A \|_F^2  \label{eq:30} \\
	= & O_p\left[ \left\{ \frac{(p+q) \log p}{n} \right\}^{1/2 } \right]. \notag
\end{align}
The inequality of \eqref{eq:thm2I} is due to Lemma \ref{Lemma.A5}; the inequality of \eqref{eq:28} is due to the fact that $\| A^{*-1} \|^2 = 1/(\phi_{min}(A))^2 \leq  1/(\tau_1^2) $; the inequality of \eqref{eq:29} is due to Lemma \ref{Lemma.A4}; and the inequality of \eqref{eq:30} can be achieved when $n$ is large enough since
\begin{align}
	\|A^{*-1} \Delta_A  \| 
	\leq \|A^{*-1} \Delta_A \|_F 
	\leq \|A^{*-1} \| \; \| \Delta_A \|_F
	\leq  \frac{1}{\tau_1^2} \| \Delta_A \|_F  \rightarrow 0 \,. \notag
\end{align}
Next, we define 
\begin{align*}
	\Delta_{k,2} = & \hat{\bO}_{k}^{(1)} - \bO^{*}_{k}, \quad
	\Delta_{(l,k),3}  
	= \tilde{\Sigma}_{Y(l,k)} - \Sigma^{*}_{Y(l,k)} \quad (1 \leq l,k \leq K). 
\end{align*}
It is easy to show that $\| \Delta_A \|_F, \| \Delta_{k,2} \|_F$ and $\| \Delta_{(l,k),3} \|_F$ all have the same rate $ O_p \left[ \left\{ (p+q) (\log p) / n \right\}^{1/2} \right]$, and then we have 
\begin{align}
	 \dot{\Sigma}^{(1)}_{0} - \Sigma_0^* = & \; 
	 (\hat{A}^{(1)})^{-1} +  (\hat{A}^{(1)})^{-1} 
	 \sum_{l,k=1}^K
	 \Big( \hat{\bO}_{l}^{(1)} \tilde{\Sigma}_{Y(l,k)} \hat{\bO}^{(1)}_k\Big)
	 (\hat{A}^{(1)})^{-1}  \notag \\
	 & \; - A^{*-1} - A^{* -1} \sum_{l,k=1}^K 
	 \Big( \bO^{*}_{l} \Sigma^{*}_{Y(l,k)} \bO^{*}_{k}\Big) 
	 A^{*-1} \notag \\
	= & \; \Delta_A + \Delta_A \sum_{l,k=1}^K \Big( \bO^{*}_{l}
	\Sigma^{*}_{Y(l,k)} \bO^{*}_{k} \Big) A^{*-1}
	+ A^{*-1} \sum_{l,k=1}^K \Big( \Delta_{l,2} 
	\Sigma^{*}_{Y(l,k)} \bO^{*}_{k} \Big) A^{*-1} \notag \\
	& \; + A^{*-1} \sum_{l,k=1}^K \Big( \bO^*_l 
 	\Delta_{(l,k),3} \bO^{*}_{k} \Big)A^{*-1}
 	+  A^{*-1} \sum_{l,k=1}^K \Big( \bO^*_l 
 	\Sigma^{*}_{Y(l,k)} \Delta_{k,2}\Big)A^{*-1} \notag \\
 	& \; + A^{*-1}  \sum_{l,k=1}^K \Big( \bO^{*}_{l} 
 	\Sigma^{*}_{Y(l,k)} \bO^{*}_{k}\Big) \Delta_A + B, \notag
\end{align}
where $B$ is the remainder terms with the following values
\begin{align*}
B = & \; \Delta_A \sum_{l,k=1}^K \Big( \Delta_{l,2}
	\Sigma^{*}_{Y(l,k)} \bO^{*}_{k} \Big) A^{*-1}
	+ \Delta_A \sum_{l,k=1}^K \Big(\bO_{l}^*
	\Delta_{(l,k),3} \bO^{*}_{k} \Big) A^{*-1} \\
	& \; + \Delta_A \sum_{l,k=1}^K\Big(  \bO_{l}^*
	\Sigma^{*}_{Y(l,k)} \Delta_{k,2} \Big) A^{*-1} 
	 + \Delta_A \sum_{l,k=1}^K \Big( \bO_l^*
	\Sigma^{*}_{Y(l,k)} \bO^{*}_{k} \Big) \Delta_A \\
	& \; + A^{*-1} \sum_{l,k=1}^K \Big( \Delta_{l,2} 
	\Delta_{(l,k),3} \bO^{*}_{k} \Big) A^{*-1} 
	+ A^{*-1} \sum_{l,k=1}^K \Big( \Delta_{l,2} 
	\Sigma^{*}_{Y(l,k)} \Delta_{k,2} \Big) A^{*-1} \\ 
	& \; + A^{*-1} \sum_{l,k=1}^K \Big( \Delta_{l,2} 
	\Sigma^{*}_{Y(l,k)} \bO^{*}_{k} \Big) \Delta_A
	+ A^{*-1} \sum_{l,k=1}^K \Big( \bO^*_{l} 
 	\Delta_{(l,k),3} \Delta_{k,2} \Big)A^{*-1}\\
 	& \; + A^{*-1} \sum_{l,k=1}^K \Big( \bO^*_{l} 
 	\Delta_{(l,k),3} \bO^{*}_{k} \Big) \Delta_A 
 	 + A^{*-1} \sum_{l,k=1}^K \Big( \bO^{*}_{l} 
 	\Sigma^{*}_{Y(l,k)} \Delta_{k,2}\Big)\Delta_A \\
 	& \; + \Delta_A \sum_{l,k=1}^K \Big( \Delta_{l,2} 
 	\Delta_{(l,k),3} \bO^*_{k}\Big)A^{*-1} 
 	+ \Delta_A \sum_{l,k=1}^K \Big( \Delta_{l,2} 
 	\Sigma^{*}_{Y(l,k)} \Delta_{k,2} \Big)A^{*-1} \\
 	& \; + \Delta_A \sum_{l,k=1}^K \Big( \Delta_{l,2} 
 	\Sigma^{*}_{Y(l,k)} \bO^*_k \Big) \Delta_A
 	+ \Delta_A \sum_{l,k=1}^K \Big( \bO^*_l 
 	\Delta_{(l,k),3} \Delta_{k,2}\Big)A^{*-1} \\
 	& \; + \Delta_A \sum_{l,k=1}^K \Big( \bO^*_l 
 	\Delta_{(l,k),3} \bO^*_k\Big)\Delta_A 
 	 + \Delta_A \sum_{l,k=1}^K \Big( \bO^*_l 
 	\Sigma^{*}_{Y(l,k)} \Delta_{k,2}\Big)\Delta_A \\
 	& \; + \cdots + \Delta_A  \sum_{l,k=1}^K \Big( \Delta_{l,2} 
 	\Delta_{(l,k),3} \Delta_{k,2}\Big) \Delta_A.  
\end{align*}
Each term of $B$ is a product of at least two $\Delta$ terms, where $\Delta$ are $\Delta_A$, $\Delta_{k,2}$ or $\Delta_{(l,k),3}$. Also, we know that $\| \Delta_A \|_F, \| \Delta_{k,2} \|_F$ and $\| \Delta_{(l,k),3} \|_F$ all have the same rate $ O_p \left[ \left\{ (p+q) (\log p) / n \right\}^{1/2} \right]$, and $\| \bO^*_l \|$, $ \| \Sigma^{*}_{Y(l,k)}  \|$ and $\| A \|$ are bounded. Thus, we have $\| B \|_F = O_p(\| \Delta_A \|_F^2 ) = o_p(\| \Delta_A \|_F )$. We then can bound $ \| \dot{\Sigma}^{(1)}_{0} - \Sigma_0^*  \|_F $ as 
\begin{align}
 	\| \dot{\Sigma}^{(1)}_{0} - \Sigma_0^* \|_F \leq  
 	& \;  \| \Delta_A \|_F + \Big\| \Delta_A  
	\sum_{l,k=1}^K \Big(\bO^{*}_{l} \Sigma^{*}_{Y(l,k)} \bO^{*}_{k} \Big)
	 A^{*-1} \Big\|_F
	+ \, \Big\| A^{*-1} 
	\sum_{l,k=1}^K \Big( \Delta_{l,2} \Sigma^{*}_{Y(l,k)} \bO^{*}_{k} 
	\Big) A^{*-1} \Big\|_F \notag \\
 	& \; + \Big\| A^{*-1} 
 	 \sum_{l,k=1}^K \Big( \bO^{*}_{l} \Delta_{(l,k),3} \bO^*_{k} \Big)
 	 A^{*-1} \Big\|_F   
 	+ \Big\| A^{*-1} 
 	\sum_{l,k=1}^K \Big( \bO^{*}_{l} \Sigma^{*}_{Y(l,k)} \Delta_{k,2} 
 	\Big)  A^{*-1} \Big\|_F  \notag \\
	& \; + \Big\| A^{*-1} \sum_{l,k=1}^K \Big( \bO^{*}_{l} \Sigma^{*}_{Y(l,k)} 
	\bO^{*}_{k}	\Big)\Delta_A \Big\|_F 
	 + o_p( \| \Delta_A \|_F ) \notag \\
	\leq & \;  \| \Delta_A \|_F + 
	 K^2  \frac{\tau_2^2}{\tau_1^2} \| \Delta_A \|_F  
	+ K \frac{\tau_2}{\tau_1^3 } \sum_{l=1}^K \| \Delta_{l,2} \|_F 
	+ \frac{\tau_2^2}{\tau_1^2} \sum_{l,k=1}^K \|\Delta_{(l,k),3} \|_F  \notag \\
	&  \; + K \frac{\tau_2}{\tau_1^3}  \sum_{k=1}^K \| \Delta_{k,2} \|_F \notag 
	+K^2 \frac{\tau_2^2}{\tau_1^2} \| \Delta_A\|_F  + o_p( \| \Delta_A \|_F ) \\
	 = &; O_p \left[ \left\{  \frac{(p+q)\log p}{n} \right\}^{1/2}
	 \right] \notag.
 \end{align}
 Similarly for $\dot{\Sigma}^{(1)}_{k}$, we can prove that $\| \dot{\Sigma}^{(1)}_{k} - \Sigma^{*}_{k}  \|_F = O_p \left[ \left\{ (p+q)  (\log p)/n \right\}^{1/2} \right]$. Then by Lemma \ref{Lemma.A3}, the corresponding M-step estimate $\hat{\bO}_{k}^{(2)}$ would also have the rate  $\sum_{k=0}^K  \left\| \hat{\bO}_{k}^{(2)} - \bO^{*}_{k} \right\|_F = O_p\left[ \left\{ (p+q) ( \log p)/n \right\}^{1/2} \right]$. Following the previous step, we can show $$\| \dot{\Sigma}^{(2)}_{k} - \Sigma^{*}_{k}  \|_F = O_p \left[ \left\{ (p+q) ( \log p)/n \right\}^{1/2} \right] (k = 0, \ldots, K)$$ and so on. Therefore, the solution of our EM algorithm after finite iterations would have the same bound as the one-step method. This completes the proof. 
\section{Proof of Theorem \ref{thm:Thm3}}
 This proof follows a similar argument as shown in \citet[Theorem 2]{Lam2009}. Let $\text{sign}(a)$ denote the sign of $a$. The derivative for $\mathcal{W}_k( \bO_k) $ with respect to $\omega_{k(i,j)}$ is
\begin{align}
	\frac{\partial \mathcal{W}_k(\bO_k)}{\partial \omega_{k(i,j)}}
	= 2\{ \hat{\sigma}_{k(i,j)}^\prime - \sigma_{k(i,j)}
    + \lambda \, \text{sign}(\omega_{k(i,j)}) \} \,, \notag
\end{align}
where $\lambda = \lambda_2$ for $k = 0$, and otherwise $\lambda = \lambda_1$. 
For $(i,j) \in T_k^c$, it is sufficient to show that the sign of $\partial \mathcal{W}_{k}(\bO_k)/\partial \omega_{k(i,j)}$ at the minimum solution $\hat{\omega}_{k(i,j)}^{\mone}$ only depends on the sign of $\hat{\omega}_{k(i,j)}^{\mone}$ with probability tending to $1$. Namely, the rate for $\lambda$ dominates the rate of $\hat{\sigma}_{k(i,j)}^\prime - \sigma_{k(i,j)}$. To see that, without loss of generality, we suppose $ \hat{\omega}_{k(i,j)}^{\mone} < 0 $ and $(i, j) \in T_k^c$. Then there is a small $\v > 0$ such that $\hat{\omega}_{k(i,j)}^{\mone}  + \v < 0$. Since $\hat{\omega}_{k(i,j)}^{\mone}$ is the minimum solution, $\partial \mathcal{W}_{k}(\bO_k)/\partial \omega_{k(i,j)}$ is positive at $\hat{\omega}_{k(i,j)}^{\mone} + \v$ for the small $\v > 0$. Because $\partial \mathcal{W}_{k}(\bO_k)/\partial \omega_{k(i,j)}$ at $\hat{\omega}_{k(i,j)}^{\mone}$ has the same sign as $\hat{\omega}_{k(i,j)}^{\mone}$ and is a continuous function, $\partial \mathcal{W}_{k}(\bO_k)/\partial \omega_{k(i,j)}$ should be negative at $\hat{\omega}_{k(i,j)}^{\mone} + \v$ for a small $\v$, which contradicts the previous conclusion. Therefore, the optimum $\hat{\omega}_{k(i,j)}^{\mone}$ is $0$ in this case. 
 
Let $\hat{\bO}^{\mone}_k = \bO_k^* + \Delta_k$ and $\check{\Sigma}^{\mone}_k = (\hat{\bO}_k^{\mone})^{-1}$. Since  $\sum_{k = 0}^{K} \left\| \hat{\bO}_{k}^{\mone} -  \bO^{*}_{k} \right\| = O_p(\eta_n)$, we have $\| \Delta_k \| = O_p(\eta_n)$. Using the Woodbury formula, we have that
\begin{align}
	\check{\Sigma}^{\mone}_k = & \; (\bO^*_k + \Delta_k )^{-1}
	= \Sigma^*_k 
	- \Sigma^*_k( \Delta_k^{-1} + \Sigma^*_k )^{-1} \Sigma^*_k \notag \\
	= & \; \Sigma^*_k 
	- \Sigma^*_k( \Delta_k - \Delta_k(\bO^*_k + \Delta_k )^{-1} \Delta_k)
	\Sigma^*_k \notag \\
	= & \; \Sigma^*_k - \Sigma^*_k \Delta_k \Sigma^*_k 
	+ \Sigma^*_k \Delta_k (\Delta_k + \bO_k^*)^{-1} \Delta_A \Sigma^*_k . \notag
\end{align}

By Condition 1, we have
\begin{align}
	\|\check{\Sigma}_k^{\mone} - \Sigma^*_k \| 
	\leq & \; \| \Sigma^*_k \Delta_k \Sigma^*_k \| 
	+ \|\Sigma^*_k \Delta_k (\Delta_k 
	+ \bO_k^*)^{-1} \Delta_k \Sigma^*_k \| \notag \\
	\leq & \; \| \Sigma_k^* \|^2 \| \Delta_k \| 
	+ \|\Sigma_k^* \|^2 \| \Delta_k \|^2 \| (\Delta_k + \bO_k^*)^{-1} \| 
	\notag \\
	\leq & \; \| \Delta_k \| / (\tau_1^2) 
	+ \| \Delta_k \|^2 \| (I_p + \Sigma_k^* \Delta_k )^{-1} \| 
	\, \|\Sigma_k^* \| / (\tau_1^2)  \notag \\
	\leq & \; \|\Delta_k \| / (\tau_1^2) 
	+ \frac{1}{\tau_1^3} \| \Delta_k \|^2 
	(1 - \| \Sigma_k^* \Delta_k\|)^{-1} \label{eq:thm3f1} \\
	\lesssim & \; \|\Delta_k \|  / (\tau_1^2) 
	+ \frac{2}{\tau_1^3} \| \Delta_k \|^2 
	 \label{eq:thm3f2} \\
	= & \; O_p(\eta_n)\notag,
\end{align}
where the inequality of \eqref{eq:thm3f1} is due to Lemma \ref{Lemma.A4} and the inequality of \eqref{eq:thm3f2} holds when $n$ large enough because
\begin{align}
	\|\Sigma_k^* \Delta_k \| \leq \| \Sigma_k^* \Delta_k \|_F 
	\leq \|\Sigma_k^*  \| \; \| \Delta_k \|
	\leq  \frac{1}{\tau_1} \| \Delta_k \|  \rightarrow 0 \,. \notag
\end{align}

From the proof of Theorem \ref{thm:Thm1}, we know $\| \hat{\Sigma}_k^\prime - \Sigma^*_k \|_\infty = O_p[ \{ (\log p) / n\}^{1/2}]$, and the derivative  $\partial \mathcal{W}_{k}(\bO_k)/\partial \omega_{k(i,j)}$ at the minimum $\hat{\omega}_{k(i,j)}$ is $2\{ \hat{\sigma}_{k(i,j)}^\prime - \check{\sigma}_{k(i,j)}
    + \lambda \, \text{sign}(\hat{\omega}_{k(i,j)}) \}$. Combining the results, we have
\begin{align*}
	\max_{i,j}|  \hat{\sigma}_{k(i,j)}^\prime - \check{\sigma}^{\mone}_{k(i,j)} | 
	= & \; 	\max_{i,j}|  \hat{\sigma}_{k(i,j)}^\prime -\sigma^{*}_{k(i,j)}  
	+ \sigma^{*}_{k(i,j)} - \check{\sigma}_{k(i,j)}^{\mone} |
	\\
	\leq & \; \max_{i,j}|\hat{\sigma}_{k(i,j)}^\prime - \sigma^{*}_{k(i,j)} | 
	+ \max_{i,j}|\sigma^{*}_{k(i,j)} - \check{\sigma}_{k(i,j)}^{\mone} | \\
	\leq & \; \|\hat{\Sigma}_k^\prime - \Sigma^{*}_k \|_\infty  
	+ \| \check{\Sigma}_k^{\mone} - \Sigma^{*}_k  \| \\
	= & \; O_p[ \{ (\log p) / n\}^{1/2}  + \eta_n].
\end{align*}
Therefore, if $\lambda \succeq \{ (\log p) / n  \}^{1/2}  + \eta_n $, then the term $\lambda \,\text{sign}(\omega_{k(i,j)})$ dominates over $\hat{\sigma}_{k(i,j)}^\prime - \sigma_{k(i,j)}$ with probability tending to 1. This completes the proof.


\section{Proof of Theorem \ref{thm:Thm4}}

Assume the last iteration of the EM algorithm minimizes
\begin{align*}
	 \mathcal{W}_{k}^{\prime}(\bO_k) =  \tr(\dot{\Sigma}_{k} \Omega_k) 
	 - \log \det(\bO_k) + \lambda \sum_{i \neq j} \left|\omega_{k(i, j)} \right|,
\end{align*}
where $\lambda = \lambda_2$ when $k = 0$, and otherwise $\lambda = \lambda_1$. The derivative for $\mathcal{W}_k^\prime$ with respect to $\omega_{k(i,j)}$ is
\begin{align*}
	\frac{\partial \mathcal{W}_{k}^{\prime}(\bO_k)}{\partial \omega_{k(i,j)}} 
	= 2\{ \dot{\sigma}_{k(i,j)} - \sigma_{k(i,j)} 
	+ \lambda \text{sign}(\omega_{k(i,j)}) \}.
\end{align*} 
Similar to the proof of Theorem \ref{thm:Thm3}, it is enough to show that for $(i, j) \in T_k^c$, the sign of $\partial \mathcal{W}_{k}(\bO_k)/\partial \omega_{k(i,j)}$ at the minimum $\hat{\omega}_{k(i,j)}^{\mEM}$ only depends on the sign of $\hat{\omega}_{k(i,j)}^{\mEM}$ with probability tending to $1$. Let $\hat{\bO}_k^{\mEM}$ be the minimum in Theorem  \ref{thm:Thm2}, and define $\check{\Sigma}^{\mEM}_k = (\hat{\bO}_k^{\mEM})^{-1}$. 

 From the proof of Theorems \ref{thm:Thm2} and \ref{thm:Thm3}, we have shown $\| \dot{\Sigma}_k - \Sigma^*_k \|_F = O_p[ \{(p + q) (\log p) / n\}^{1/2}]$ and $\| \check{\Sigma}^{\mEM}_k - \Sigma^*_k \| = O_p( \zeta_n)$. Combining the results yields
\begin{align*}
	\max_{i,j}| \dot{\sigma}_{k(i,j)} - \check{\sigma}^{\mEM}_{k(i,j)} |
	\leq & \; \max_{i,j}|\dot{\sigma}_{k(i,j)} - \sigma^{*}_{k(i,j)}| 
	+ \max_{i,j}|\sigma^{*}_{k(i,j)} - \check{\sigma}_{k(i,j)}^{\mEM}| \\
	\leq & \; \| \dot{\Sigma}_k - \Sigma^*_k \|_\infty 
	+ \| \check{\Sigma}_k^{\mEM} - \Sigma^*_k \|\\
	\leq & \;\| \dot{\Sigma}_k - \Sigma^{*}_k \|_F 
	+ \|\check{\Sigma}_k^{\mEM} - \Sigma^{*}_k \|\\
	= & \;  O_p[ \{(p + q) (\log p) / n\}^{1/2} + \zeta_n] \notag.
\end{align*}
Therefore, if $\lambda \succeq  \{(p + q)  (\log p) / n\}^{1/2} + \eta_n$, the sign of $\partial \mathcal{W}_{k}^\prime (\bO_k)/ \partial \omega_{k(i,j)}$  at the optimum point only depends on $\text{sign}(\omega_{k(i, j)})$. This completes the proof.


\section{Extension to include the Systemic Intensity parameter $\alpha_k$}
\label{appendix:extend}
We here consider an elaboration of our model to allow the influence of the systemic layer to vary among tissues as suggested by one of the reviewers. This variation could be motivated in several ways. For example, muscle and adipose both develop from mesoderm. Thus, we might expect them to be more closely related to each other (and be similarly affected by systemic factors) compared with the pancreas, which develops from endoderm. The extended model is described as follows:
\begin{align*} 
	Y_{k, i} = X_{k, i} + \alpha_k Z_i  \quad (k = 1, \ldots, K; \ i = 1, \ldots, n),
\end{align*}
where $\alpha_k$ quantifies the level of systemic influence in each tissue $k$. For the  identifiability issue, we assume $\text{max}(\text{diag}(\Sigma_0)) = 1$.

Similar to Section \ref{appendix:likelihood}, we derive the probability density function of $Y$ as follows:
\begin{align*}
	f_Y(s) = & \; \int_{-\infty}^{\infty}{ f_Y(s \mid Z = t)f_Z(t)
	 \mathrm{d} t} \\
	\propto & \; \int_{-\infty}^{\infty} \exp
	\left[ \sum_{k=1}^K \Big\{ (s_s - \alpha_k t)^\T \bO_k (s_k - \alpha_k t) \Big\} 
	+ t^\T \bO_0 t \right] \mathrm{d} t\\
	= & \; \exp \Big\{ \sum_{k=1}^K \big( s_{k}^\T \bO_k s_{k} \big) \Big\}
	\int_{-\infty}^{\infty}{\exp \Big\{ t^\T \big(\sum_{k = 1}^{K}
	 \alpha_k^2 \bO_k + \bO_0 \big)t 
	 -2 \big(\sum_{k=1}^K \alpha_k s_{k}^\T \bO_{k} \big)
	 t \Big\} \mathrm{d} t} \\
	= & \; \exp \Big\{\sum_{k=1}^K \big( s_k^\T \bO_{k} s_k \big) \Big\}
		\int_{-\infty}^{\infty}{\exp\left(t^\T A_{\text{ext}} t 
		- 2 c_{\text{ext}}^\T t \right) \mathrm{d} t} \\
	= & \; \exp \Big\{ \sum_{k=1}^K \big( s_k^\T \bO_{k} s_k \big) \Big\}           
		\exp \big(-c^\T_{\text{ext}} 
		A_{\text{ext}}^{-1} c_{\text{ext}} \big) 
		\int{ \exp \left\{( A_{\text{ext}} t- c_{\text{ext}})^\T 
		A_{\text{ext}}^{-1}(A_{\text{ext}}t 
		- c_{\text{ext}}) \right\} \mathrm{d}t} \\
		\propto & \; \exp \bigg[\sum_{k=1}^K \big( s_k^\T \bO_k s_k \big)
		- \Big\{  \sum_{k = 1}^K 
		\big( \alpha_k s_k^\T \bO_k\big)\Big\} A_{\text{ext}}^{-1}
		\Big\{ \sum_{k=1}^K \big( \alpha_k \bO_k s_k\big)\Big\} \bigg] \\[5pt]
		= & \; \exp \Big\{ s^\T \left(\{_d \bO_{k} \}_{1 \leq k \leq K} 
		- \{ \alpha_l \alpha_k \bO_k A^{-1}_{\text{ext}} \bO_k 
		\}_{1 \leq l, k \leq K} 
		\right) s\Big\} \\[8pt]
		= & \; \exp ( s^{\T} \bO_Y s) \,,
\end{align*}
where $ A_{\text{ext}} = \sum_{k = 1}^{K}\alpha_k^2 \bO_{k} + \bO_0$ and $c_{\text{ext}} = \sum_{k=1}^K \alpha_k \bO_{k} Y_{k} $.  Therefore, we have
\[
	\bO_Y = \{_d \bO_{k} \}_{1 \leq  k \leq K} - 
	\{ \alpha_l \alpha_k \bO_k A_{\text{ext}}^{-1} \bO_k \}_{1 \leq l, k \leq K}.
\]

Next, we want to derive $\det(\Omega_Y)$. We know that
\begin{align*}
	\Sigma_Y = & \; \{_d \, \Sigma_{k} \}_{1 \leq k \leq K} + 
	\begin{pmatrix}
		\alpha_1 I_p  \\
  		\vdots  \\
  		\alpha_K  I_p
 	\end{pmatrix}
 	\begin{pmatrix}
  		\alpha_1 \Sigma_0, & \ldots &, \alpha_K \Sigma_0  
	\end{pmatrix}  \\ 
	= & \; \{_d \  \Sigma_{k} \}_{1 \leq k \leq K} 
	\left\{ I_{Kp} + \{_d  \  \bO_{k} \}_{1 \leq k \leq K}    
	\begin{pmatrix}
   		\alpha_1 I_p  \\
  		\vdots  \\
  		\alpha_K I_p
 	\end{pmatrix}
 	\begin{pmatrix}
  		\alpha_1 \Sigma_{0}, & \ldots & ,\alpha_K \Sigma_0
	\end{pmatrix} \right\}.
\end{align*}
Therefore, the $\det(\Sigma_Y)$ can be expressed as
\begin{align*}
	\det(\Sigma_Y) 
	& = 	\det \left\{ I_p +
	\begin{pmatrix}
  		\alpha_1 \Sigma_{0} & \ldots & \alpha_K \Sigma_{0}  
  	\end{pmatrix}
  	\{_d \bO_{k} \}_{1 \leq k \leq K} 
  	\begin{pmatrix}
 		\alpha_1 I_p  \\
 		\vdots  \\
		\alpha_K I_p
	\end{pmatrix}
	\right\}  \prod_{k = 1}^K \det( \Sigma_{k}) \\
	& = 	\det \Big(I_p + \Sigma_{0}\sum_{k = 1}^K \alpha_k^2  \bO_{k} \Big)
	 \prod_{k = 1}^K \det(\Sigma_{k})  \\
	& =  \det \Big( \Sigma_0 \bO_0 + \Sigma_0 \sum_{k = 1}^K 
	\alpha_k^2 \bO_k \Big) \prod_{k = 1}^K \det(\Sigma_{k})  \\
   & = \det(A_{\text{ext}}) \prod_{k = 0}^{K }\det(\Sigma_{k}).
\end{align*}
Therefore, we have 
\[ 
	\log  \det(\bO_Y)  = \sum_{k = 0}^{K} \{
	\log \det(\bO_{k})\} - \log \det (A_{\text{ext}}).
\]
 Combining the previous results, we could write the log-likelihood given $Y=y$ as
\begin{align*}
	\mathcal{L}(\bO_Y; y )
	= & -\frac{npK}{2} \log (2\pi) 
	+ \frac{n}{2} \big\{ \log  \det(\bO_Y) 
	- \tr \big(\hat{\Sigma}_Y \bO_Y \big) \big\} \\
	= & - \frac{npK}{2} \log (2\pi) 
	+ \frac{n}{2}  \Big[ \sum_{k = 0}^{K} \{ \log \det(\bO_{k}) \}
	- \log \det (A_{\text{ext}}) \Big] \\
	& - \frac{n}{2} \tr \Big( \hat{\Sigma}_Y
	\Big[ \{_d \bO_{k} \}_{1 \leq k \leq K} 
	- \{ \alpha_l \alpha_k  \bO_{l} A_{\text{ext}}^{-1} \bO_{k} \}_{1 \leq l, k \leq K} \Big] 
	\Big). \\ 
\end{align*}
Under this setting, we have 
\begin{align*}
	\begin{pmatrix}
 		Z  \\
 		Y_1 \\
 		\vdots  \\
		Y_K 
	\end{pmatrix}  & \sim \mathcal{N}  \left( 
		\begin{pmatrix}
 		0  \\
 		0 \\
 		\vdots  \\
		0 
	\end{pmatrix}, 	
  	\begin{pmatrix}
 		\Sigma_0            & \alpha_1 \Sigma_0 		   		&
 		\ldots 				& \alpha_K \Sigma_0 \\
		\alpha_1 \Sigma_0   & \; \Sigma_1 + \alpha_1^2 \Sigma_0 & 
		\ldots				&  \alpha_1\alpha_k \Sigma_0 \\
 		\vdots 				& \vdots 		    				& 
 		\vdots				& \vdots \\
		\alpha_K \Sigma_0   & \; \alpha_1\alpha_k \Sigma_0 		&
		\ldots				& \Sigma_K + \alpha_k^2 \Sigma_0 
	\end{pmatrix}
	\right).
\end{align*}
For simplicity, we denote $\{ \bO \}_{k = 0}^K$ as $\bO$ and  $\{ \alpha_k \}_{k = 1}^K$ as $\alpha$. We can derive $E (Z \mid Y, \bO , \alpha)$, $ \text{var}(Z \mid Y, \bO, \alpha)$  and $E(Z Z^\T \mid Y, \bO, \alpha)$ as follows:
\begin{align*}
	 E (Z \mid  Y, \bO, \alpha ) = & \; (\alpha_1 \Sigma_0, \ldots, \alpha_K \Sigma_0) \bO_Y
	\begin{pmatrix}
		Y_1 \\
		\vdots\\
		Y_K 
	\end{pmatrix} \\[8pt]
	 = & \; (\alpha_1 \Sigma_0, \ldots, \alpha_K \Sigma_0) 
	\Big\{ \{_d \bO_k  \}_{1 \leq k \leq K} \\
	& \; - \begin{pmatrix}
		\alpha_1 \bO_1 \\
		\vdots\\
		\alpha_K \bO_k 
	\end{pmatrix}  
	(\alpha_1 A_{\text{ext}}^{-1}  \bO_1,
	\ldots,  \alpha_k A_{\text{ext}}^{-1}  \bO_k  )  \Big\} 
	\begin{pmatrix}
		Y_1 \\
		\vdots\\
		Y_K  
	\end{pmatrix} \\
	= & \; \Big( (\alpha_1 \Sigma_0 \bO_1, \ldots, \alpha_K \Sigma_0 \bO_K) \\
	 & \; - \Sigma_0 \Big\{ \sum_{k = 1}^K \big( \alpha_k^2 \bO_k \big) \Big\}
	(\alpha_1 A_{\text{ext}}^{-1}  \bO_1,
	\ldots, \alpha_k A_{\text{ext}}^{-1}  \bO_k  )  \Big)  
	\begin{pmatrix}
		Y_1 \\
		\vdots\\
		Y_K  
	\end{pmatrix} \\[8pt]
	= & \; \Big\{ (\alpha_1 \Sigma_0 \bO_1, \ldots, \alpha_K \Sigma_0 \bO_K) 
	- \Sigma_0 \big( A_{\text{ext}} - \bO_0 \big)
	(\alpha_1 A_{\text{ext}}^{-1}  \bO_1,
	\ldots, \alpha_k A_{\text{ext}}^{-1}  \bO_k  )  \Big\}
	\begin{pmatrix}
		Y_1 \\
		\vdots\\
		Y_K  
	\end{pmatrix} \\
	= & \; (\alpha_1 A_{\text{ext}}^{-1}  \bO_1,
	\ldots,  \alpha_k A_{\text{ext}}^{-1}  \bO_k  )
	\begin{pmatrix}
		Y_1 \\
		\vdots\\
		Y_K  
	\end{pmatrix} \\	
	= & \; A_{\text{ext}}^{-1}  c_{\text{ext}}, \\
	\text{var} (Z \mid Y, \bO, \alpha ) = & \; \Sigma_0 
	-  (\alpha_1 \Sigma_0, \ldots, \alpha_K \Sigma_0) \bO_Y
		\begin{pmatrix}
		\alpha_1 \Sigma_0 \\
		\vdots\\
		\alpha_K \Sigma_0 
	\end{pmatrix} \\
	= & \; \Sigma_0 - 
	(\alpha_1 A_{\text{ext}}^{-1}  \bO_1,
	\ldots, \alpha_k A_{\text{ext}}^{-1}  \bO_k  )
	\begin{pmatrix}
		\alpha_1 \Sigma_0\\
		\vdots\\
		\alpha_K \Sigma_0  
	\end{pmatrix} \\
	= & \; \Sigma_0 - A_{\text{ext}}^{-1} 
	\Big\{ \sum_{k = 1}^K 
	\big( \alpha_k^2 \bO_k \big)\Big\} \Sigma_0 \\
	= & \; \Sigma_0 - A_{\text{ext}}^{-1} 
	\Big( A_{\text{ext}} -  \bO_0 \Big) \Sigma_0 \\
	= & \; A_{\text{ext}}^{-1}, \\
	E (Z Z^\T \mid Y, \bO, \alpha)  = &\; \text{var} (Z \mid Y, \bO, \alpha) 
	+ E (Z \mid Y, \bO, \alpha) E (Z \mid Y, \bO, \alpha)^\T \\ 
	= & \; A_{\text{ext}}^{-1} 
	+ A_{\text{ext}}^{-1} c_{\text{ext}}c_{\text{ext}}^\T A_{\text{ext}}^{-1},
\end{align*}
where $c_{\text{ext}} = \sum_{k = 1}^K \alpha_k \bO_k Y_k$.

As in the main paper, let $y_{k, i}$ be the realization of $Y_{k,i}$ and $y$ be the $n$ by $Kp$ dimensional data matrix. We can modify our EM algorithm to calculate $\alpha_k$ and $\Omega_k$ jointly. The modified EM algorithm is described as follows:
\begin{itemize}

	\item[E step] Update the expectation of the log-likelihood conditional on $\bO$.
		\begin{align}
		 	\mathcal{Q}( \bO; \bO^{(t)}, \alpha^{(t)}) 
			\propto & \; \log  \det( \bO_0) - \tr\Big[ \bO_{0} 
            E_{Z | \bO^{(t)}, \alpha^{(t)}} \Big\{ \sum_{i=1}^n \big( \frac{z_i z_i^\T}{n} \big) \Big\} \Big] 
            + \sum_{k=1}^K \log  \det(\bO_{k})  \notag\\
			& \; - \sum_{k=1}^K \tr \Big[ \bO_{k} E_{Z | \bO^{(t)}, \alpha^{(t)}} \Big\{
			\sum_{i=1}^n (y_{k,i} - \alpha_k z_i)(y_{k,i} - \alpha_k z_i)^\T /n
			\Big\} \Big] \notag \\
			= & \; \sum\limits_{k = 0}^{K}
			\Big\{ \log  \det(\bO_{k}) -\tr \big( \bO_{k} \dot{\Sigma}_{k}^{(t)} \big)
			\Big\}, \label{eq:ext2}
		 \end{align}
where $\dot{\Sigma}_{k}^{(t)}$ and $\dot{\Sigma}_0^{(t)}$ are
\begin{subequations}\label{eq:extall1}
  \vspace{5pt}
	\begin{align}
		\dot{\Sigma}_{k}^{(t)} 
		= & \; \hat{\Sigma}_{Y(k,k)} 
		- \alpha_k \sum_{l=1}^K \left( \alpha_l^{(t)} \hat{\Sigma}_{Y(k,l)}\bO_{l}^{(t)} \right)
		(A^{(t)}_{\text{ext}})^{-1} 
		-  \alpha_k (A^{(t)}_{\text{ext}})^{-1}
		\sum_{l=1}^K \left(\alpha_l^{(t)} \bO_{l}^{(t)}\hat{\Sigma}_{Y(l,k)} \right) \notag \\
		& \; + \alpha_k^2 (A_{\text{ext}}^{(t)})^{-1}
		\sum_{l,k=1}^K\left(\alpha_l^{(t)} \alpha_k^{(t)} 
		\bO_{l}^{(t)} \hat{\Sigma}_{Y(l,k)} 
		\bO_{k}^{(t)} \right)
		(A_{\text{ext}}^{(t)})^{-1} \notag \\ 
		& \; + \alpha_k^2(A_{\text{ext}}^{(t)})^{-1} \quad (k =1, \ldots, K), \notag\\
		\dot{\Sigma}_{0}^{(t)} =
		& \; (A^{(t)}_{\text{ext}})^{-1} +(A^{(t)}_{\text{ext}})^{-1} 
		\sum_{l,k=1}^K\left( 
		\alpha_l^{(t)} \alpha_k^{(t)}\bO_{l}^{(t)} \hat{\Sigma}_{Y(l,k)}
		 \bO_{k}^{(t)} \right)
		(A_{\text{ext}}^{(t)})^{-1} \notag.
  \end{align}
\end{subequations} 
 \item[M step]
 Since the function \eqref{eq:ext2} is a biconcave function, we can first fix $\alpha^{(t)}$ and update $\Omega$  by solving 
\begin{align}
	\bO^{(t + 1)} = \argmin_{ \bO} 
	- \mathcal{Q}(\bO; y, \bO^{(t)},\alpha^{(t)}) 
	+ \lambda_1 \sum_{k=1}^K | \bO_k^- |_1 
	+ \lambda_2  |\bO_0^-|_1,   \quad \notag
\end{align}
where $\alpha^{(t)}$ and $\bO^{(t)}$ denote the estimates from the $t$th iteration.
Then fixing $\Omega^{(t + 1)}$, we update $\alpha_k$ as
\begin{align}
    	\hat{\alpha}_k^{(t + 1)} = & \; \frac{\tr \Big[ \hat{\bO}_k^{(t + 1)} 
    	 \sum_{i=1}^n \Big\{  y_{k,i} 
    	 E_{Z | \bO^{(t + 1)}, \alpha^{(t)}}(z_i)^\T 
    	 + E_{Z | \bO^{(t + 1)}, \alpha^{(t)}}(z_i) y_{k,i}^\T \Big\} \Big] }
    	{2 \tr \Big(\hat{\bO}_k^{(t + 1)} \sum_{i=1}^n  E_{Z|\bO^{(t + 1)}, \alpha^{(t)}}(z_i z_i^\T) \Big)} \notag \\
    	= & \; \frac{\tr \Big\{ \hat{\bO}_k^{(t + 1)} 
    	 \sum_{l = 1}^K \Big( \alpha_l^{(t)} \hat{\Sigma}_{Y(k, l)} 
    	 \bO_l^{(t)}A^{(t) -1}_{\text{ext}} 
    	 + \alpha_l^{(t)} A^{(t) -1}_{\text{ext}}
    	 \bO_l^{(t)} \hat{\Sigma}_{Y(l, k)} \Big) \Big\} }
    	{2 \tr \Big\{ \hat{\bO}_k^{(t + 1)}  \big( A^{(t) -1}_{\text{ext}} 
    	+ A^{(t) -1}_{\text{ext}} c^{(t)}_{\text{ext}} c^{(t)\T}_{\text{ext}} 
    	A^{(t) -1}_{\text{ext}} \big) \Big\}}.	\notag
\end{align}
\end{itemize}
Iterate the M and E steps until converge.

Let $\{ \hat{\bO}_k^{\mEM} \}_{k = 0}^K$ and $\hat{\alpha}$ be the solution of extended EM method. We then normalize $\hat{\bO}_0{\mEM}$ and $\hat{\alpha}$ to avoid the identifiability issue as follows: 
\begin{align*}
	\tilde{\bO}_0=  \text{max}\{ (\hat{\bO}_0^{{\mEM}-1})^+ \} \hat{\bO}_0, \quad
	\tilde{\alpha} = \text{max}\{ (\hat{\bO}_0^{{\mEM}-1})^+ \}  \hat{\alpha}.
\end{align*}
Thus, our final solution is $( \tilde{\bO}_0, \{ \hat{\bO}^{\mEM}_k \}_{k = 1}^K, \tilde{\alpha})$. 

\section{Additional Simulation Results}

\subsection{Estimation of category-specific $\bO_k$ and systemic networks $\bO_0$}
\label{appendix:Sim}

In this subsection we report additional simulation results generated from  Chain/Chain denoted as (III) and Nearest Neighbor/Chain denoted as (IV) graph structure with dimension $p = 100$ and $1000$. Similar to the results in Table \ref{tab:simresults1}, the one-step method results in a higher entropy loss, Frobenius loss, false positive rates and hamming distance as shown in Table \ref{tab:simresults2}. For both methods, extended Bayesian information criterion  tends to select a sparser graph with higher false negative rates than cross-validation, especially when $p = 1000$. 

The corresponding  receiver operating characteristic curves are plotted in Fig. \ref{fig:roc1II}--\ref{fig:roc3} based on 100 replications. In Fig. \ref{fig:roc1II}, the corresponding receiver operating characteristic  curves represent the average false positive rates and average true positive rates for both category-specific and systemic networks. Similarly to Fig. \ref{fig:roc}, the EM method dominates the one-step method for both $p = 100$ and $1000$. In Fig. \ref{fig:roc2} and \ref{fig:roc3}, we show the  receiver operating characteristic curves for category-specific and systemic networks separately. The results show that the EM method is superior to the one-step method for estimating the category-specific networks, but is similar to the one-step method for estimating the systemic network. One possible explanation for this result is that $\hat{\Sigma}_k$ using \eqref{eq:Sigmahatk} has greater variation from the underlying truth than $\hat{\Sigma}_0$ since the estimation error in \eqref{eq:Sigmahatk} involves the error from estimating both $\Sigma_{Y(k, k)}$ and $\Sigma_0$. Therefore, the EM method gains more advantages by updating $\hat{\Sigma}_k$ from the E-step.  

\begin{table}[H]\footnotesize
\def~{\hphantom{0}}
\caption{\label{tab:simresults2}Summary statistics reporting the performance of the EM and one-step methods inferring graph structure for (III) and (IV) networks. The numbers before and after the slash correspond to the results using the extended Bayesian information criterion and cross-validation, respectively.}
\begin{center}
  \begin{tabular}{c c c c c c c c c}
  \hline
 $p$ & \specialcell{True networks\\ Category / Systemic}  & $\rho$ & Method & EL & FL & FP$(\%)$ & FN$(\%)$  & HD $(\%)$  \\
  \hline\multirow{12}{*}{$100$}
    &\multirow{6}{*}{(III)}
    &  0   & One-step & 11.8/9.7           & 0.22/0.15           & 5.7/20.4             &  
    0.0/0.0           & 5.6/20.0  \\
    && 0   & EM       & 6.3/4.4  & 0.14/0.07 & 4.4/14.1 & 
   0.0/ 0.0& 4.3/13.8 \\
    && 0.2 & One-step & 10.0/8.1            & 0.21/0.15           & 4.8/18.5             &
    0.8/ 0.2          & 4.7/17.9 \\
    && 0.2 &  EM      & 5.6/4.3   & 0.13/ 0.09& 4.6/13.0   &
    0.0/0.0 & 4.5/12.7  \\
    && 1   & One-step & 12.1/9.5            & 0.24/0.16           & 6.5/22.9             &
    7.3/1.8           & 6.5/22.1  \\
    && 1   &  EM      & 7.5/5.5 & 0.16/0.11 & 6.1/15.0 &
    1.6/0.2 & 5.9/14.4 \\
    \cline{2-9}
    &\multirow{6}{*}{(IV)}
    & 0    & One-step & 11.7/9.4            & 0.26/0.18           & 3.8/18.9       &
    20.3/7.5          & 4.4/18.5 \\
    &&0    & EM       & 7.6/5.6   & 0.18/0.13 & 3.9/13.4       &
    10.5/3.3 & 4.2/ 13.0 \\
    &&0.2  & One-step & 11.8/9.1            & 0.25/0.18           & 4.2/18.9       &
    23.8/8.8          & 5.0/18.5 \\
    &&0.2  & EM       & 7.6/5.7 & 0.18/0.12 & 4.6/14.1   & 
    12.3/3.6& 4.9/ 13.7\\
    && 1   & One-step & 15.6/12.1           & 0.26/0.17           & 8.2/28.1            &
    24.2/9.6          & 9.4/26.7 \\
    && 1   & EM       & 10.8/7.3 & 0.20/0.11 & 7.3/21.8  &
    13.4/2.7 & 7.7/ 20.4 \\
    \hline\multirow{12}{*}{$1000$}
    &\multirow{6}{*}{(III)}
    &  0  &   One-step & 263.2/229.3          & 0.42/0.34            & 0.6/8.0             &
     0.8/0.1           & 0.6/ 8.0    \\
    && 0  &   EM       & 104.7/76.9 & 0.21/0.14 & 0.4/ 2.0   &
   0.0/0.0  & 0.4/2.0  \\
    && 0.2&   One-step & 166.8/142.1          & 0.32/0.25           & 0.2/6.3              & 17.9 /3.6          & 0.3 /6.3 \\
    && 0.2&   EM       & 89.1/65.3  & 0.16/0.11 & 0.2/1.7  &
    4.2/0.6  &0.2/1.7   \\
    && 1  &  One-step  & 205.1/158.5          & 0.35/0.26           &  0.1/7.9        &
    41.6/7.0     & 0.3/ 7.8  \\
    && 1  &  EM        & 121.7/86.2 & 0.19/0.13 & 0.3 /2.8   &
    18.0/3.3 & 0.3/2.8  \\
    \cline{2-9}
    &\multirow{6}{*}{(IV)}
    & 0   & One-step   & 237.2/191.0          & 0.34/0.27           &0.1/5.1               &
    91.1/58.7     & 1.7/6.1  \\
    && 0  & EM         & 180.9/136.3 & 0.24/0.19 & 0.1/2.2  &
    84.7/59.4     & 1.6/3.3 \\
    && 0.2& One-step   & 302.1/231.3          & 0.41/0.30           & 0.1/7.5        &
    88.4/54.7   & 2.0/8.5 \\
    && 0.2& EM         & 228.9/162.6& 0.28/0.21 & 0.2/3.2        &
    82.1/55.1     &2.0\ 4.3 \\
    && 1  & One-step   & 290.4/241.2          & 0.38/0.27           & 0.0/8.8             &
    98.0/65.3     & 3.6/10.8\\
    && 1  & EM         & 265.3/203.6& 0.29/0.22 & 0.0/3.9  &
    98.0/68.2     & 3.6/6.2\\
      \hline
  \end{tabular}
  \end{center} 
\end{table}

\begin{figure}[H]\footnotesize
	\begin{center}
	\hspace{-0.5in}
		\includegraphics[width=5.5in]{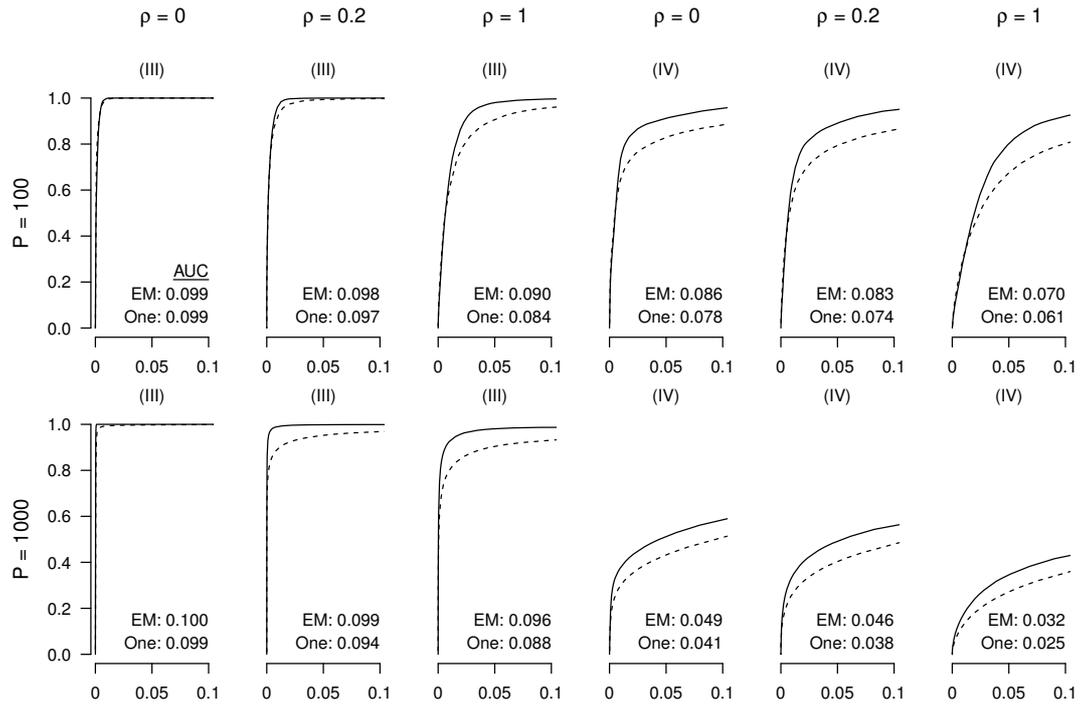}
		\caption{Receiver operating characteristic curves assessing power and discrimination for all networks. Each panel reports performance of the EM method (solid line) and the one-step method (dashed line), plotting true positive rate (y-axis) against false positive rate (x-axis) for a given noise ratio $\rho$, sample size $n = 300$, number of neighbour $m = 5$ and $25$ for $p = 100$ and $1000$, respectively. The numbers in each panel represent the area under the curve for both methods.}
	\label{fig:roc1II}
	\end{center}
\end{figure}

\begin{figure}[H]\footnotesize
	\begin{center}
		\hspace{-0.2in}
		\includegraphics[width=\textwidth]{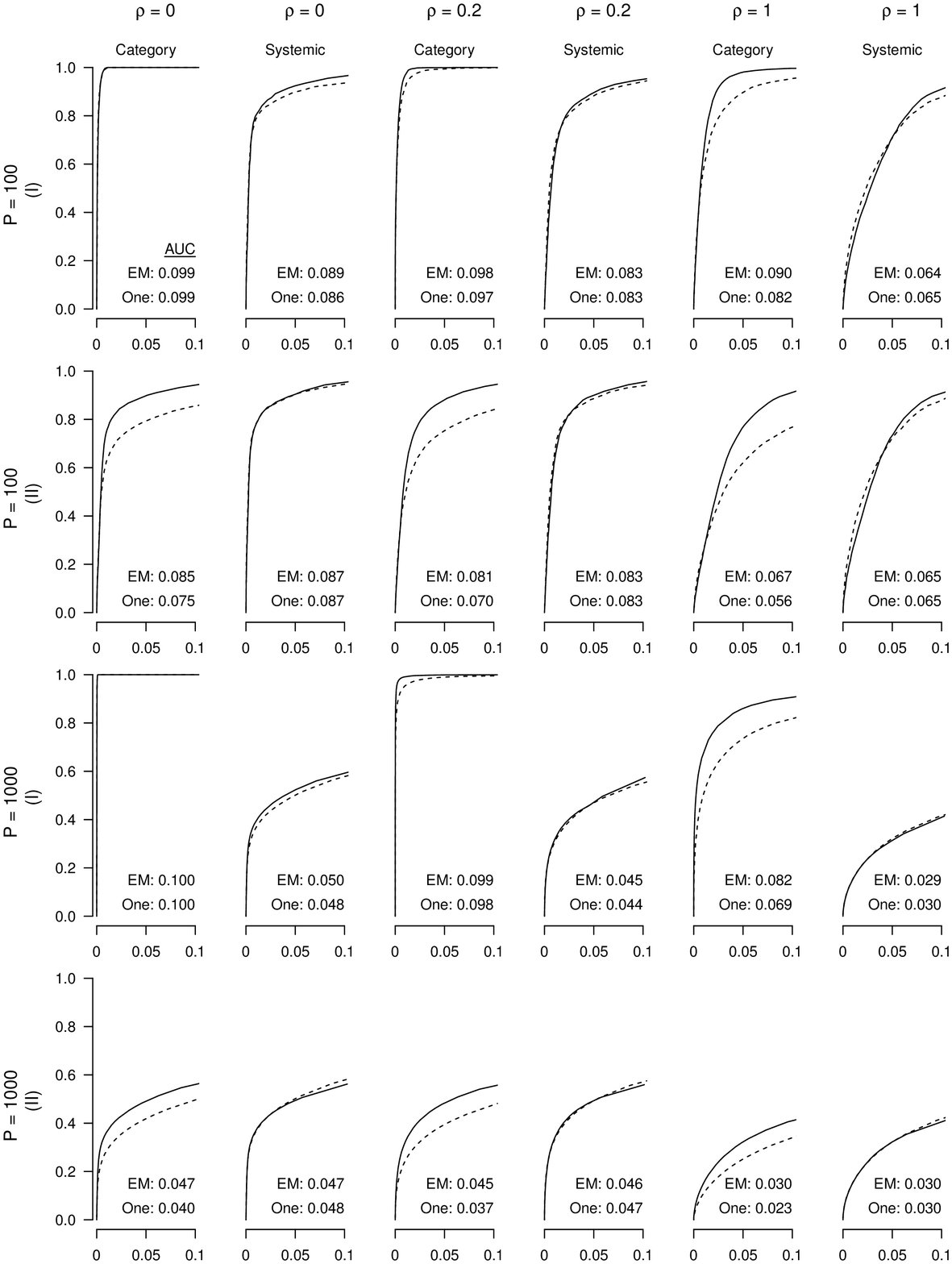}
		\caption{Receiver operating characteristic  curves assessing power and discrimination for category-specific and systemic networks. Each panel reports performance of the EM method (solid line) and the one-step method (dashed line), plotting true positive rate (y-axis) against false positive rate (x-axis) for a given noise ratio $\rho$, sample size $n = 300$, number of neighbour $m = 5$ and $25$ for $p = 100$ and $1000$, respectively. The numbers in each panel represent the area under the curve for both methods.}
	\label{fig:roc2}
	\end{center}
\end{figure}

\begin{figure}[H]\footnotesize
	\begin{center}
		\hspace{-0.13in}
		\includegraphics[width=\textwidth]{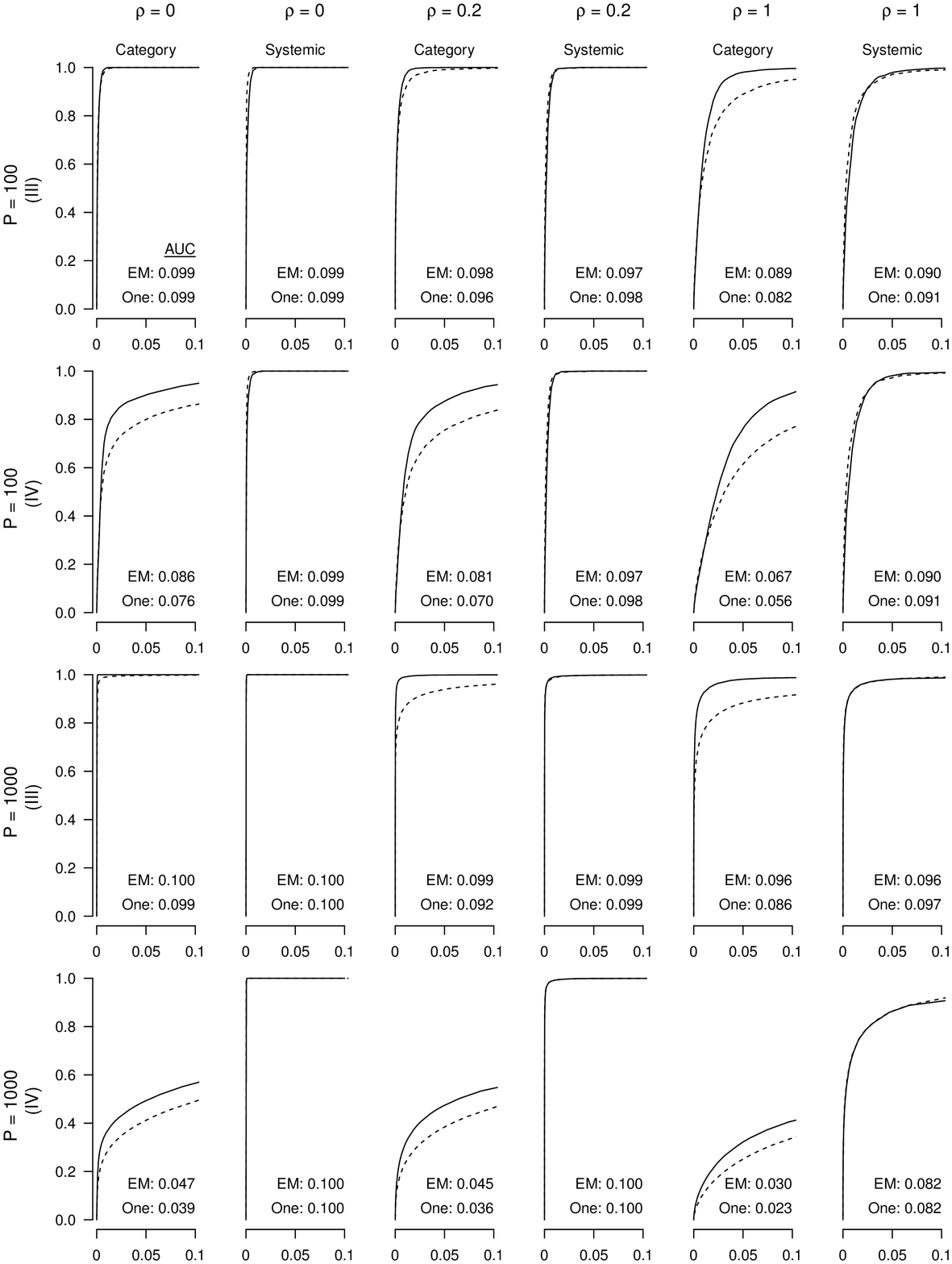}
		\caption{Receiver operating characteristic  curves assessing power and discrimination for category-specific and systemic networks. Each panel reports performance of the EM method (solid line) and the one-step method (dashed line), plotting true positive rate (y-axis) against false positive rate (x-axis) for a given noise ratio $\rho$, sample size $n = 300$, number of neighbour $m = 5$ and $25$ for $p = 100$ and $1000$, respectively. The numbers in each panel represent the area under the curve for both methods.}
	\label{fig:roc3}
	\end{center}
\end{figure}


\subsection{Estimation of aggregate networks $\bO_{Y_k}$}
\label{appendix:aggre}

Here we report the results for estimating aggregate networks using the EM method, the one-step method, the hierarchical penalized likelihood method  proposed by \citet{Guo2011}, and the joint graphical lasso proposed by  \citet{Danaher2013}. 
As shown in Tables \ref{tab:simaggrep100} and \ref{tab:simaggrep1000}, under most simulation settings except (III) with $\rho = 0$ and $0.2$, the EM method performs the best in terms of entropy and Frobenius losses for both $p = 100$ and $1000$. For the (III) setting with $\rho = 0$ and $0.2$, the corresponding $\bO_{Y_k}$ would also have a strong banding structure with large absolute value within the band and small absolute value outside the band. The hierarchical penalized likelihood method performs well because it is designed to work on such structures. 

\begin{table}[H]\footnotesize
\caption{\label{tab:simaggrep100} Summary statistics for estimating aggregate network, $\bO_Y$, under different simulation settings with the dimension $p = 100$.}
\vspace{-8pt}
{
\begin{center}
  \begin{tabular}{c c c| c c c |c c c c }
  \hline
  True networks & $\rho$
  & \multirow{2}{*}{Method} & \multirow{2}{*}{EL} & \multirow{2}{*}{FL} & \multirow{2}{*}{HD} & \multicolumn{4}{c}{HD to the estimate of}\\
  & & & &  &  & HP & JGL & One-step & EM\\
  \hline
  \multirow{12}{*}{(I)}
    &0    & HP          & 9.10    & 0.131 & 0.845  & 0     &       &       & \\
    &0    & JGL         & 9.65    & 0.161 & 0.603  & 0.244 & 0     &       & \\
    &0    &  One-step   & 7.38    & 0.129 & 0.000  & 0.845 & 0.603 & 0     & \\
    &0    &  EM         & 4.63    & 0.069 & 0.000  & 0.845 & 0.603 & 0.000 & 0\\
    \cline{2-10}
    &0.2   & HP         & 9.11    & 0.120 & 0.819  & 0     &       &       &\\
    &0.2   & JGL        & 8.92    & 0.136 & 0.575  & 0.247 & 0     &       &\\
    &0.2   & One-step   & 6.51    & 0.114 & 0.000  & 0.819 & 0.575 &       &\\
    &0.2   & EM         & 4.36    & 0.070 & 0.000  & 0.819 & 0.575 & 0.000 & 0\\
    \cline{2-10}
    &1     & HP         & 11.00   & 0.140 & 0.778  & 0     &       &       & \\
    &1     & JGL        & 9.95    & 0.132 & 0.472  & 0.308 & 0     &       & \\
    &1     & One-step   & 7.69    & 0.121 & 0.000  & 0.778 & 0.472 & 0     & \\
    &1     & EM         & 5.52    & 0.086 & 0.000  & 0.778 & 0.472 & 0.000 & 0\\
    \cline{1-10}
    \multirow{12}{*}{(II)}
    &0     & HP        & 6.72     & 0.101 & 0.829  & 0     &       &       & \\
    &0     & JGL       & 6.41     & 0.106 & 0.633  & 0.199 & 0     &       & \\
    &0     &  One-step & 4.89     & 0.085 & 0.000  & 0.829 & 0.633 & 0     & \\
    &0     &  EM       & 3.37     & 0.058 & 0.000  & 0.829 & 0.633 & 0.000 & 0 \\
    \cline{2-10}
    &0.2   & HP        & 8.91     & 0.132 & 0.808  & 0     &       &       &\\
    &0.2   & JGL       & 8.26     & 0.133 & 0.562  & 0.249 & 0     &       & \\
    &0.2   & One-step  & 6.26     & 0.114 & 0.000  & 0.808 & 0.562 & 0     & \\
    &0.2   & EM        & 4.50     & 0.078 & 0.000  & 0.808 & 0.562 & 0.000 & 0\\
    \cline{2-10}
    &1     & HP        & 10.73    & 0.129 & 0.669  & 0     &       &       & \\
    &1     & JGL       & 9.24     & 0.120 & 0.426  & 0.253 & 0     &       & \\
    &1     & One-step  & 7.38     & 0.104 & 0.000  & 0.669 & 0.426 & 0     & \\
    &1     & EM        & 5.28     & 0.072 & 0.000  & 0.669 & 0.426 & 0.000 & 0\\
  \hline
    \multirow{12}{*}{(III)}
    &0     & HP         & 2.74    & 0.037 & 0.959  & 0     &       &       & \\
    &0     & JGL        & 4.93    & 0.093 & 0.828  & 0.138 & 0     &       & \\
    &0     & One-step   & 8.70    & 0.152 & 0.000  & 0.958 & 0.828 & 0     & \\
    &0     & EM         & 5.10    & 0.089 & 0.000  & 0.958 & 0.828 & 0.001 & 0\\
    \cline{2-10}
    &0.2   & HP         & 6.14    & 0.072 & 0.911  & 0     &       &       & \\
    &0.2   & JGL        & 6.91    & 0.099 & 0.743  & 0.172 & 0     &       & \\
    &0.2   & One-step   & 7.55    & 0.131 & 0.000  & 0.911 & 0.743 & 0     & \\
    &0.2   & EM         & 4.82    & 0.078 & 0.000  & 0.911 & 0.743 & 0.000 & \\
    \cline{2-10}
    &1     & HP         & 10.80   & 0.129 & 0.825  & 0     &       &       & \\
    &1     & JGL        & 9.93    & 0.134 & 0.579  & 0.251 & 0     &       & \\
    &1     & One-step   & 8.47    & 0.135 & 0.000  & 0.825 & 0.579 & 0     & \\
    &1     & EM         & 5.72    & 0.090 & 0.000  & 0.825 & 0.579 & 0.000 & 0 \\
    \cline{1-10}
    \multirow{12}{*}{(IV)}
    &0     & HP         & 7.73    & 0.111 & 0.827  & 0     &       &       & \\
    &0     & JGL        & 7.74    & 0.116 & 0.588  & 0.242 & 0     &       & \\
    &0     & One-step   & 5.29    & 0.093 & 0.000  & 0.827 & 0.588 & 0     & \\
    &0     & EM         & 3.39    & 0.058 & 0.000  & 0.827 & 0.588 & 0.000 & 0 \\
    \cline{2-10}
    &0.2   & HP         & 9.63    & 0.139 & 0.824  & 0     &       &       & \\
    &0.2   & JGL        & 8.90    & 0.138 & 0.618  & 0.210 & 0     &       & \\
    &0.2   & One-step   & 6.61    & 0.114 & 0.000  & 0.824 & 0.618 & 0     & \\
    &0.2   & EM         &  4.67   & 0.074 & 0.000  & 0.824 & 0.618 & 0.000 & 0 \\
    \cline{2-10}
    &1     & HP         & 13.22   & 0.169 & 0.777  & 0     &       &       & \\
    &1     & JGL        & 11.45   & 0.150 & 0.466  & 0.313 & 0     &       & \\
    &1     & One-step   & 8.94    & 0.130 & 0.000  & 0.777 & 0.466 & 0     & \\
    &1     & EM         & 6.34    & 0.085 & 0.000  & 0.777 & 0.466 & 0.000 & 0 \\
  \hline 
  \end{tabular}
  \end{center}}
\end{table}

\begin{table}[H]\footnotesize
\caption{\label{tab:simaggrep1000} Summary statistics for estimating aggregate network, $\bO_Y$, under different simulation settings with the dimension $1000$.}
\vspace{-8pt}
{
\begin{center}
  \begin{tabular}{c c c| c c c |c c c c }
  \hline
  True networks & $\rho$
  & \multirow{2}{*}{Method} & \multirow{2}{*}{EL} & \multirow{2}{*}{FL} & \multirow{2}{*}{HD} & \multicolumn{4}{c}{HD to the estimate of}\\
  & & & &  &  & HP & JGL & One-step & EM\\
  \hline
  \multirow{12}{*}{(I)}
    &0    & HP          & 180.56  & 0.236 & 0.996 & 0     &       &       &\\
    &0    & JGL         & 184.13  & 0.275 & 0.935 & 0.061 & 0     &       &\\
    &0    &  One-step   & 226.18  & 0.378 & 0.000 & 0.996 & 0.935 & 0     &\\
    &0    &  EM         & 116.93  & 0.185 & 0.000 & 0.996 & 0.935 & 0.000 & 0 \\
    \cline{2-10}
    &0.2   & HP         & 182.27  & 0.229 & 0.997 & 0     &       &       &\\
    &0.2   & JGL        & 172.75  & 0.229 & 0.930 & 0.068 & 0     &       &\\
    &0.2   & One-step   & 194.12  & 0.321 & 0.000 & 0.997 & 0.930 & 0     &\\
    &0.2   & EM         & 124.84  & 0.177 & 0.000 & 0.997 & 0.930 & 0.000 & 0 \\
    \cline{2-10}
    &1     & HP         & 199.90  & 0.249 & 0.999 & 0     &       & &\\
    &1     & JGL        & 180.97  & 0.226 & 0.940 & 0.059 & 0     & &\\
    &1     & One-step   & 191.54  & 0.298 & 0.000 & 0.999 & 0.940 & 0 &\\
    &1     & EM         & 150.99  & 0.198 & 0.000 & 0.999 & 0.940 & 0.000 & 0\\
    \cline{1-10}
    \multirow{12}{*}{(II)}
    &0     & HP        & 289.89  & 0.363  & 0.998 & 0     & & & \\
    &0     & JGL       & 201.59  & 0.269  & 0.927 & 0.071 & 0     & & \\
    &0     &  One-step & 209.72  & 0.325  & 0.000 & 0.998 & 0.927 & 0 & \\
    &0     &  EM       & 138.62  & 0.210  & 0.000 & 0.998 & 0.927 & 0.000 & 0\\
    \cline{2-10}
    &0.2   & HP        & 274.06  & 0.301  & 0.998 & 0 & & & \\
    &0.2   & JGL       & 231.09  & 0.268  & 0.896 & 0.102 & 0 & &\\
    &0.2   & One-step  & 226.57  & 0.308  & 0.000 & 0.998 & 0.896 & 0 & \\
    &0.2   & EM        & 169.39  & 0.213  & 0.000 & 0.998 & 0.896 & 0.000 & 0 \\
    \cline{2-10}
    &1     & HP        & 271.00  & 0.300 & 0.999 & 0 & & &\\
    &1     & JGL       & 240.23  & 0.269 & 0.894 & 0.106 & 0 & &\\
    &1     & One-step  & 237.92  & 0.300 & 0.000 & 0.999 & 0.894 & 0 &\\
    &1     & EM        & 206.75  & 0.235 & 0.000 & 0.999 & 0.894 & 0.000 & 0\\
  \hline
    \multirow{12}{*}{(III)}
    &0     & HP        & 41.40   & 0.071 & 0.640 & 0 & & &\\
    &0     & JGL       & 57.52   & 0.114 & 0.631 & 0.031 & 0 & &\\
    &0     & One-step  & 168.78  & 0.320 & 0.000 & 0.640 & 0.631 & 0 &\\
    &0     & EM        & 69.34   & 0.143 & 0.000 & 0.640 & 0.631 & 0.000 & 0 \\
    \cline{2-10}
    &0.2   & HP        & 58.64   & 0.089 & 0.925 & 0 & & &\\
    &0.2   & JGL       & 69.24   & 0.110 & 0.904 & 0.024 & 0     & &\\
    &0.2   & One-step  & 134.28  & 0.256 & 0.000 & 0.925 & 0.904 & 0 &\\
    &0.2   & EM        & 63.99   & 0.109 & 0.000 & 0.925 & 0.904 & 0.000 & 0 \\
    \cline{2-10}
    &1     & HP        & 125.25  & 0.150 & 0.998 & 0 & & &\\
    &1     & JGL       & 133.06  & 0.178 & 0.955 & 0.043 & 0 & &\\
    &1     & One-step  & 158.45  & 0.258 & 0.000 & 0.998 & 0.955 & 0 &\\
    &1     & EM        & 96.54   & 0.137 & 0.000 & 0.998 & 0.955 & 0.000 & 0\\
    \cline{1-10}
    \multirow{12}{*}{(IV)}
    &0     & HP         & 143.18  & 0.194 & 0.997 & 0 & & &\\
    &0     & JGL        & 129.12  & 0.174 & 0.936 & 0.061 & 0 & &\\
    &0     & One-step   & 154.60  & 0.247 & 0.000 & 0.997 & 0.936 & 0 &\\
    &0     & EM         & 91.92   & 0.130 & 0.000 & 0.997 & 0.936 & 0.000& 0\\
    \cline{2-10}
    &0.2   & HP         & 193.65  & 0.229 & 0.998 & 0 & & &\\
    &0.2   & JGL        & 183.22  & 0.236 & 0.931 & 0.067 & 0 & & \\
    &0.2   & One-step   & 198.50  & 0.295 & 0.000 & 0.998 & 0.931 & 0 &\\
    &0.2   & EM         & 121.55  & 0.164 & 0.000 & 0.998 & 0.931 & 0.000 & 0 \\
    \cline{2-10}
    &1     & HP         & 196.32  & 0.250 & 0.999 & 0 & & &\\
    &1     & JGL        & 184.40  & 0.225 & 0.949 & 0.051 & 0 & &\\
    &1     & One-step   & 202.75  & 0.274 & 0.000 & 0.999 & 0.949 & 0 &\\
    &1     & EM         & 151.56  & 0.182 & 0.000 & 0.999 & 0.949 & 0.000 &0 \\
  \hline 
  \end{tabular}
  \end{center}}
\end{table}

\subsection{Summary of Computational Time}
\label{appendix:Runtime}

In this subsection, we report the computational time, the number of iterations,	 and the corresponding total edge numbers among $\bO_k$ $(k = 0, \ldots, K)$ for both the one-step and the EM methods. These results are generated from a personal laptop with 	8GB RAM running the Linux system. Table \ref{tab:runtime} and Fig. \ref{fig:SupFig5} show that the computation time and the number of iteration depend on the value of $\lambda$ ($\lambda_1$ and $\lambda_2$). When $\lambda$ is reasonably large and hence the corresponding $\Omega$'s are sparse, the computation is quite efficient. The computation can take longer for very small $\lambda$'s, and the resulting $\hat{\Omega}$'s are typically dense. 

\begin{table}[H]\footnotesize
\caption{\label{tab:runtime} Summary of computational time on simulation data. In each entry, the numbers before and after the slash correspond to $\lambda = 0.04$ and $\lambda = 0.3$, respectively. The results show that the run time decreases as $\lambda$ increases, for both the  one-step and EM methods. }
\vspace{-8pt}
{
\begin{center}
  \begin{tabular}{c c c| c c c c c}
  \hline
  \specialcell{True networks\\ Category/ Systemic} & $p$ &
   $\rho$ &Method & $\lambda$ & Number of Iterations &\specialcell{Time \\ (In seconds)}
     & Number of Edges      \\
      \cline{1-8}
     \multirow{8}{*}{(I)}
    &100   & 0      &One-step & 0.04/0.3 & 1.0/1.0 & 0.4/0.1 &  10092.0/808.8\\
    &100   & 0      &EM       & 0.04/0.3 & 6.2/4.0 & 6.0/1.5 &  5042.1/557.1 \\
    &100   & 1      &One-step & 0.04/0.3 & 1.0/1.0 & 0.8/0.1 &  10680.2/953.7 \\
    &100   & 1      &EM       & 0.04/0.3 & 6.0/4.0 & 6.1/1.4 &  6322.4/322.4 \\
    \cline{2-8}
    &1000  & 0      &One-step & 0.04/0.3 & 1.0/1.0  & 1913.8/26.4 & 599406.1/8355.1 \\
    &1000  & 0      &EM       & 0.04/0.3 & 6.0/3.0 & 7141.0/129.3 & 330421.7/5481.3\\
    &1000  & 1      &One-step & 0.04/0.3 & 1.0/1.0 & 1645.4/21.9  & 607064.0/14411.2 \\
    &1000  & 1      &EM       & 0.04/0.3 & 5.0/4.1 & 7140.1/103.3 &339917.6/10144.8 \\
        \cline{1-8}
\multirow{8}{*}{(II)}
    &100   & 0      &One-step & 0.04/0.3 & 1.0/1.0 & 1.2/0.1 &  11764.0/783.2 \\
    &100   & 0      &EM       & 0.04/0.3 & 7.0/4.0 & 7.1/1.5 &  6188.0/740.5  \\
    &100   & 1      &One-step & 0.04/0.3 & 1.0/1.0 & 0.8/0.1 &  11347.6/1422.6\\
    &100   & 1      &EM       & 0.04/0.3 & 6.0/5.0 & 6.0/1.4 & 6995.5/464.9      \\
    \cline{2-8}
    &1000  & 0      &One-step & 0.04/0.3 & 1.0/1.0   & 2095.4/32.4 &  593094.0/8166.2  \\
    &1000  & 0      &EM       & 0.04/0.3 & 5.0/3.0  & 7257.1/152.0 &  385892.5/3261.0     \\
    &1000  & 1      &One-step & 0.04/0.3 & 1.0/1.0  & 1211.6/22.2  &  614542.5/1533.3 \\
    &1000  & 1      &EM       & 0.04/0.3 & 8.0/4.0  & 5256.3/95.3  &  414699.0/1250.0 \\
    \cline{1-8}
  \hline\multirow{8}{*}{(III)}
    &100   & 0      &One-step & 0.04/0.3 & 1.0/1.0 &0.9/0.1        & 7962.8/771.3\\
    &100   & 0      &EM       & 0.04/0.3 & 5.9/5.1  &5.7/2.2 &  3541.3/560.5     \\
    &100   & 1      &One-step & 0.04/0.3 & 1.0/1.0 & 0.8/0.1 &     9413.1/813.7 \\
    &100   & 1      &EM       & 0.04/0.3 & 6.0/4.0 & 6.4/1.8 &     4857.7/237.6  \\
    \cline{2-8}
    &1000  & 0      &One-step & 0.04/0.3 & 1.0/1.0 & 1480.3/32.5 &  539388.2/7002.7 \\
    &1000  & 0      &EM       & 0.04/0.3 & 6.1/3.0 & 7409.4/141.4 &  318793.4/5439.7 \\
    &1000  & 1      &One-step & 0.04/0.3 & 1.0/1.0 & 1176.9/24.2 &  604940.3/4086.0 \\
    &1000  & 1      &EM       & 0.04/0.3 & 5.0/3.0 & 4657.8/115.2 & 394795.6/721.6\\
    \cline{1-8}
  \hline\multirow{8}{*}{(IV)}
    &100   & 0      &One-step & 0.04/0.3 &  1.0/1.0 & 0.9/0.1  & 10784.4/752.6\\
    &100   & 0      &EM       & 0.04/0.3 &  6.0/4.0 & 6.4/1.4  & 5913.5/437.3 \\
    &100   & 1      &One-step & 0.04/0.3 &  1.0/1.0 & 0.8/0.1  & 11155.9/1281.3\\
    &100   & 1      &EM       & 0.04/0.3 &  6.0/4.0 & 6.4/1.6  & 6785.3/406.6 \\
    \cline{2-8}
    &1000  & 0      &One-step & 0.04/0.3 &  1.0/1.0 & 2189.4/38.0 & 623020.4/8097.7    \\
    &1000  & 0      &EM       & 0.04/0.3 &  5.0/4.0 & 9740.5/151.8 &  379012.5/3549.2  \\
    &1000  & 1      &One-step & 0.04/0.3 &  1.0/1.0 & 1646.2/21.2 & 620378.8/1339.9 \\
    &1000  & 1      &EM       & 0.04/0.3 &  5.0/4.0 & 4190.4/92.1 &  415572.8/102.0     \\
    \cline{1-8}
  \hline
  \end{tabular}
  \end{center}}
\end{table}

\begin{figure}[H]\footnotesize
	\begin{center}
		\includegraphics[width=5.4in, height=1.8in]{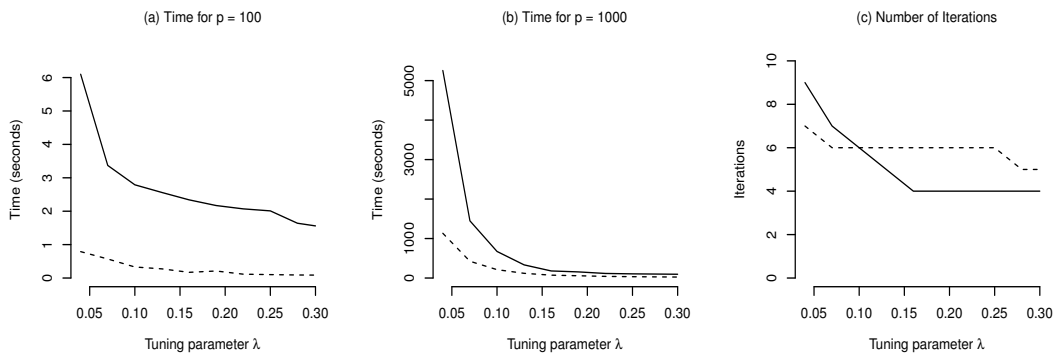}
		\caption{ { Comparisons of computational time between EM and one-step methods under NN/NN structure with $\rho = 1$. Panels (a) and (b) display the computing time for the EM method (solid line) and the one-step method (dashed line) with $p = 100$ and $1000$, respectively. Panel (c) shows the number of iterations required for the EM method to converge under different tuning parameters for $p = 100$ (dashed line) and $p = 1000$ (solid line). The results show that the run time and number of iterations for the EM method decreases as $\lambda$ increases. }}
	\label{fig:SupFig5}
	\end{center}
\end{figure}


\section{Further details on the normalization and analysis of the real data}

In our analysis of expression data from both \citet{Dobrin2009} and \citet{Crowley2014}, we perform normalizations to the data before network estimation. In this way, it allows us to focus on covariances (and thereby dependencies) between genes rather than on gross differences in means. 

For the \citet{Dobrin2009} data, this involves removing the mean effect of tissue, specifically, normalizing each gene within each tissue to have mean 0 and standard deviation 1. 
In the case of \citet{Crowley2014}, the normalization also requires removing effects of gross genetic background. \citet{Crowley2014} performed three independent reciprocal crosses using all pairs from three genetically dissimilar inbred strains, CAST, PWK and WSB, to give the following six types of hybrid mice, listed as mother $\times$ father: CAST$\times$PWK and PWK$\times$CAST; CAST$\times$WSB and WSB$\times$CAST; and PWK$\times$WSB and WSB$\times$PWK. The study sample, therefore, had a nested design, with a parent of origin nested within strain pair, for example, CAST$\times$PWK vs. PWK$\times$CAST, nested within the pairing of strains PWK and CAST. Of these, the outer level factor, strain pair, would trivially be expected to have extremely strong mean effects on gene expression owing to the fact that gene expression is heritable and the three strains are highly genetically dissimilar. To remove this outer level factor and focus primarily on dependencies induced by varying parent-of-origin, we, therefore, centered the expression of each gene within its strain pair.


\section{Test for the existence of the systemic layer in \FTwo Mice}

To test the existence of a systemic layer in the \FTwo mice, which is a key assumption of our model, we define a set of genes to be examined and perform two significance tests: one testing for the existence of a systemic graph shared across tissues, and another examining support for additional structure beyond this, specifically, testing for the existence of graphs shared between tissue pairs. The set of genes used was the same set of 1000 genes used in the main manuscript, these having the largest within-group variance among the four tissues in the \FTwo population. The data matrix $y_{k,\cdot} = \{y_{k, 1}, \ldots, y_{k, n}  \}^\T$ for the $k$th is of dimension $n \times p$. The significance tests are described below.

\begin{figure}[H]\footnotesize
	\begin{center}
		\includegraphics[height=0.9\textheight]{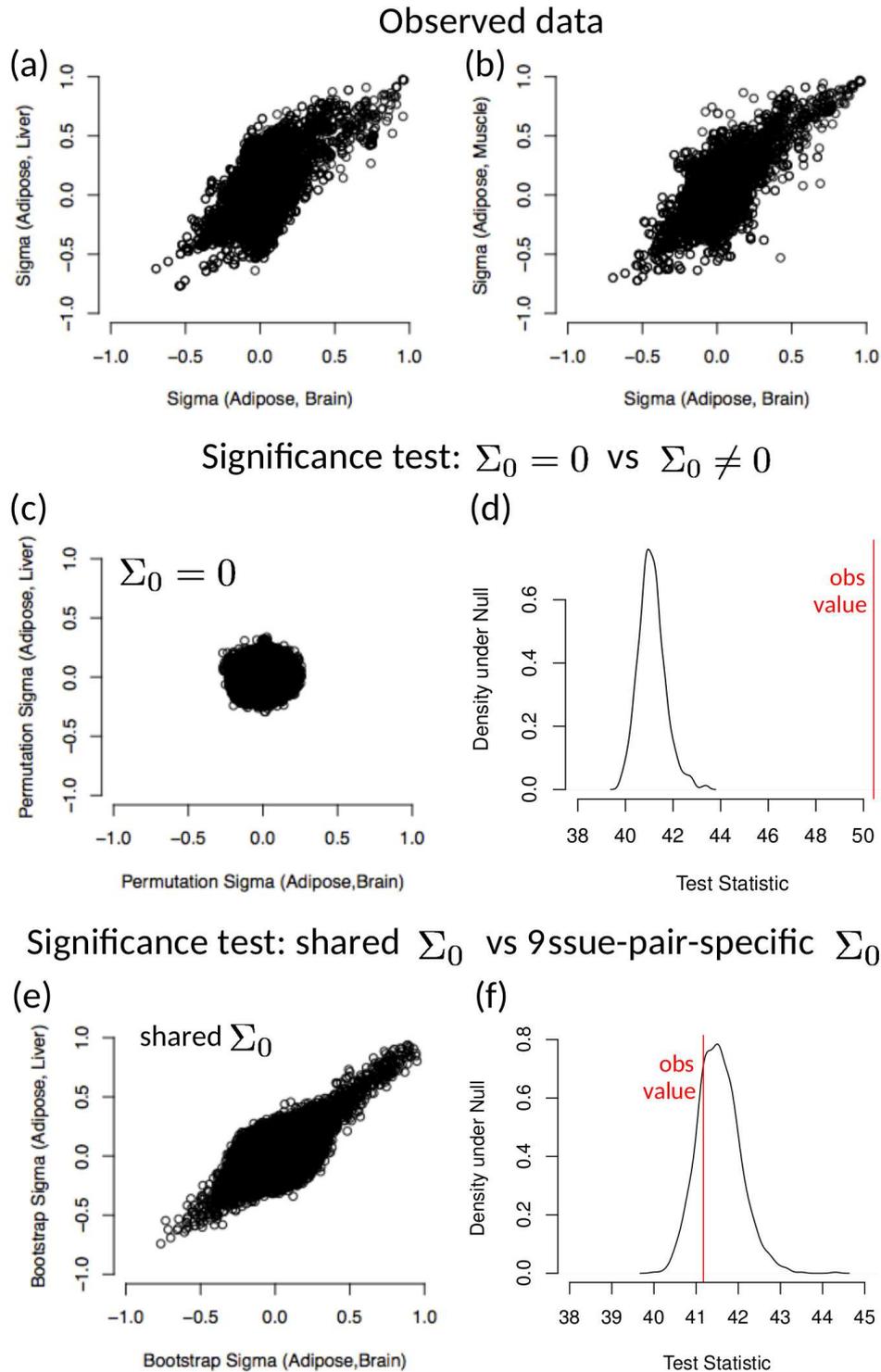}
		\caption{Cross-tissue covariance matrices comparison. Each dot represents an entry in the covariance matrix between different tissues. Panels (a) and (b) correspond to the comparison between $\Sigma_{\text{Adipose, Brain}}$ and $\Sigma_{\text{Adipose, Liver}}$, and between $\Sigma_{\text{Adipose, Brain}}$ and $\Sigma_{\text{Adipose, Muscle}}$, respectively. Panel (c) is comparison between the permuted cross-tissue covariance which represents $H_{01}$: $\Sigma_0 = 0$, and the density under $H_{01}$ is shown in panel (d) with the red vertical line representing the observed test statistics with $p$-value $= 0$. Panel (e) is comparison between cross-tissue covariance from parametric bootstrap data which represents $H_{02}$: $\Sigma_{y(l,k)} = \Sigma_0$ for any $l \neq k$. The density under $H_{02}$ is shown in panel (f) with the red vertical line representing the observed test statistics with $p$-value $= 0.714$. }
	\label{fig:SupFig4}
	\end{center}
\end{figure}

\subsection{Test for the existence of $\Sigma_0$}

Here we test $H_{01}$: $\Sigma_0 = 0$ vs $H_{11}$: $\Sigma_0 \neq 0$. To generate data under the model of $H_{01}$, we permute the mouse order within each tissue so that we can remove the between-tissue correlation within each mouse. Specifically, for any permutation $\pi$ of $ \{ 1, \ldots, n \}$, let $y_{k, \pi} = (y_{k, \pi(1)}, \ldots , y_{k,\pi(n)})^T$ be the corresponding permuted version of the matrix $y_{k,\cdot}$. Let $\pi_1^1, \ldots, \pi_K^1$ represent $K$ different sets of permutations from  $ \{ 1, \ldots, n \}$. We then obtain 1000 permuted data as $\{y_{1, \pi_1^{m}}, \ldots, y_{K, \pi_K^{m}}\}$ $\; (m = 1, \ldots, 1000)$. With the permuted data, we calculate the between-tissue covariance for the permuted mice (between mice) as 
\[
	\hat{\Sigma}_{Y(l, k)}^{\pi^m} = \frac{1}{2n} \Big( 
 	y^\T_{l, \pi_l^{m}} y_{k, \pi_k^{m}} + y^\T_{k, \pi_k^{m}}y_{l, \pi_l^{m}} \Big).
\]
The scatter plot for entries of $\hat{\Sigma}_{Y(\text{Adipose, Brain})}^{\pi^1}$ and $\hat{\Sigma}_{Y(\text{Adipose, Liver})}^{\pi^1}$ from a typical set of permutation is shown in Fig. \ref{fig:SupFig4} (c) where the round shape around the origin indicates that $H_{01}$  holds for the permuted data. However, as shown in Fig. \ref{fig:SupFig4}(a) and (b), the entries of $\hat{\Sigma}_{Y(\text{Adipose, Brain})}$, $\hat{\Sigma}_{Y(\text{Adipose, Liver})}$ and   $\hat{\Sigma}_{Y(\text{Adipose, Muscle})}$ from the observed data are more spread out from the origin, indicating $\Sigma_0 \neq 0$. 

To test formally for the existence of $\Sigma_0$ in the real data, that is, the existence of cross-tissue dependence, we define a test statistic $\text{F}_0$ to be the Frobenius norm between 0 and between-tissue covariance matrices and calculate $\; \text{F}_0$ as follows:
\[
	\text{F}_{0}(\Sigma_Y) = \sum_{l \neq k} \| 
	\Sigma_{Y(l, k)} - 0 \|_F.
\]
With the 1000 permuted datasets, we derived the corresponding null distribution for $\text{F}_0$ as shown in Fig. \ref{fig:SupFig4}(d). The red vertical line represents the $\text{F}_0$ calculated from the real data, and the corresponding empirical $p$-value was 0, supporting the existence of non-zero $\Sigma_0$ in our \FTwo Mice. 

\subsection{Test for additional shared structure beyond $\Sigma_0$}

There are a number of different ways in which additional structure can be defined. Here we specifically address shared structure across tissue pairs, by testing $H_{02}: \Sigma_{Y(l,k)}$ are all equal for any $l \neq k$ vs $H_{12}: \Sigma_{Y(l,k)}$ are not all equal. To generate data under the model of $H_{02}$, we use a parametric bootstrap approach as follows. Recalling Equation \eqref{eq:yk} in the paper,  $\Sigma_Y = \{_d \Sigma_k  \} + J \otimes \Sigma_0 $, we first use the original data to estimate $\Sigma_Y$ as $\tilde{\Sigma}_Y$ by forcing the off-diagonal block $\Sigma_{Y(l, k)}$ for $l \neq k$ to be identical. From the distribution $ \mathcal{N}( 0, \tilde{\Sigma}_Y)$, we generate 1000 sets of data with  the sample size $n = 301$. From each simulated dataset, we calculate the between-tissue covariance $\hat{\Sigma}_{Y(l,k)}$.
The relationship between  $\hat{\Sigma}_{Y(\text{1, 2})}$ and $\hat{\Sigma}_{Y(\text{1, 3})}$ from a typical simulation is shown in Fig. \ref{fig:SupFig4}(e), where the dots in the diagonal line suggest that $H_{02}$  holds. Similarly, as shown in Fig. \ref{fig:SupFig4}(a) and (b), we also observe a similar strong diagonal line pattern in our real data, suggesting that our model is reasonable.  

To test formally the hypothesis $H_{02}$ vs. $H_{12}$, we define the test statistic, $\text{F}_{\text{mean}}$, as the mean distance for each off-diagonal block matrix from their corresponding mean matrix,
\[
	\text{F}_{\text{mean}}(\Sigma_Y) = \sum_{l \neq k} \| \Sigma_{Y(l, k)} - \bar{\Sigma}_{Y,\text{off}} \|_F\,,
\]
where $\bar{\Sigma}_{Y,\text{off}}  = \sum_{l \neq k} \Sigma_{Y(l,k)}/\{K (K -1)  \} $.
As in Fig \ref{fig:SupFig4}(f), the density curve is generated using 1000 simulated datasets reflecting the null distribution of $\text{F}_{\text{mean}}$ under  $H_{02}$, and the vertical line represents the statistic from the real data with corresponding  $p$-value $= 0.714$. This suggests that the covariance matrices between different tissues are not significantly different, which supports our model assumption. We need to point out that here we use Frobenius norm to measure the difference between matrices. One can also use different norms, for example $\| \cdot \|_\infty $ or $ \| \cdot \|_1$ , and the corresponding $p$-value may vary.

\end{appendices}
\bibliographystyle{biometrika}
\bibliography{JDE_BKA-1}
\end{document}